\algnewcommand\algorithmicinput{\textbf{Input:}}
\algnewcommand\INPUT{\item[\algorithmicinput]}
	\tikzstyle{block} = [rectangle, rounded corners, minimum width=3cm, minimum height=1cm,text centered, draw=black, fill=red!30]
	\tikzstyle{new} = [rectangle, rounded corners, minimum width=1cm, minimum
	\tikzstyle{arrow} = [thick,->,>=stealth]
\DeclareFontFamily{OT1}{pzc}{}
\DeclareFontShape{OT1}{pzc}{m}{it}{<-> s * [1.200] pzcmi7t}{}
\DeclareMathAlphabet{\mathpzc}{OT1}{pzc}{m}{it}
  \DeclareMathAlphabet\PazoBB{U}{fplmbb}{m}{n}%
\newcommand{\TotalUni}{\textbf{TU}}
\newcommand{\TotalNon}{\textbf{TN}}
\newcommand{\IndivUni}{\textbf{IU}}
\newcommand{\Gcal}{\Gall}
\newcommand{\GG}{{G}}
\newcommand{\VV}{{V}}
\newcommand{\EE}{{E}}
\newcommand{\zero}{\mathbf{0}}
\newcommand{\HH}{\mathbf{H}}
\newcommand{\bell}{\boldsymbol{\ell} }
\newcommand{\ppp}{\boldsymbol\pi}
\newcommand{\AAA}{\mathbf{A}}
\newcommand{\Acal}{\mathcal{A}}
\newcommand{\Bcal}{\mathcal{B}}
\newcommand{\Wcal}{\mathcal{W}}
\newcommand{\Ccal}{\mathcal{C}}
\newcommand{\JJ}{\mathbf{{J}}}
\DeclareMathAlphabet\mathbfcal{OMS}{cmsy}{b}{n}
\newcommand{\Rset}{\mathbb{R}}
\DeclareMathOperator*{\argmax}{arg\,max}
\newcommand{\Ecal}{{E}}
\newcommand{\Vcal}{{V}}
\newcommand{\Ucal}{\mathcal{U}}
\newcommand{\kcomp}{k}
\newcommand{\kcomm}{b}
\let\oldsim\sim
\renewcommand{\sim}{\overset{e}{\oldsim}}
\newcommand{\fe}{f}
\newcommand{\he}{h}
\newcommand{\fv}{g}
\newcommand{\fm}{g}
\newcommand{\edg}{\mathsf{edges}}
\newcommand{\OPTe}{\text{\text{OPT}$_1$}}
\newcommand{\Vgrd}{\VV_\text{grd}}
\newcommand{\Egrd}{\Ecal_\text{grd}}
\newcommand{\ndim}{D_\text{NetVLAD}}
\newcommand{\pthresh}{p_\text{x}}
\definecolor{kkGreen}{RGB}{201,232,206}
\definecolor{kkRed}{RGB}{255,196,215}
\definecolor{kkBlue}{RGB}{214,226,255}
\newcommand{\Gall}{\GG_\mathsf{x}}
\newcommand{\Vall}{\VV_\mathsf{x}}
\newcommand{\Eall}{\EE_\mathsf{x}}
\newtheorem{definition}{Definition}
\newtheorem{problem}{Problem}
\newtheorem{remark}{Remark}
\newtheorem{theorem}{Theorem}
\newtheorem{lemma}{Lemma}
\newcommand\BibTeX{{\rmfamily B\kern-.05em \textsc{i\kern-.025em b}\kern-.08em
T\kern-.1667em\lower.7ex\hbox{E}\kern-.125emX}}
\begin{document}

\runninghead{Tian et al.}

\title{A Resource-Aware Approach to Collaborative Loop Closure Detection with Provable Performance Guarantees}

\author{Yulun Tian\affilnum{1}, Kasra Khosoussi\affilnum{1}, and Jonathan P.\ How\affilnum{1}}

\affiliation{\affilnum{1}Massachusetts Institute of Technology, Cambridge, MA, USA}

\corrauth{Yulun Tian}

\email{yulun@mit.edu}

\begin{abstract}

This paper presents resource-aware
algorithms for distributed inter-robot loop closure detection for applications
such as collaborative
simultaneous localization and mapping (CSLAM) and distributed image retrieval. 
In real-world scenarios,
this process is resource-intensive as it involves exchanging many observations and
geometrically verifying a large number of potential matches. This poses severe challenges
for small-size and low-cost robots with various operational and
resource constraints that limit, e.g., energy consumption, communication
bandwidth, and computation capacity. 
This paper proposes a
framework in which robots first exchange
compact queries to identify a set of potential loop closures. We then
seek to select a subset of potential inter-robot loop closures for geometric
verification that maximizes a
monotone submodular performance metric without exceeding budgets on computation (number of
geometric verifications) and
communication (amount of exchanged data for geometric verification). We demonstrate that this problem is in general NP-hard,
and present efficient approximation algorithms with provable performance
guarantees. The proposed framework is extensively evaluated on real and
synthetic datasets.
A natural convex relaxation scheme is also presented to certify the near-optimal performance of the
proposed framework in practice.
\end{abstract}

\maketitle

\section{Introduction}

Inter-robot loop closures tie individual trajectories and maps together, and
allow spatial information to flow from one robot to the entire team.
Finding these inter-robot measurements is a crucial problem in
{collaborative} simultaneous
localization and mapping (CSLAM) and multi-robot navigation 
in GPS-denied environments.
This process requires 
exchanging observations, 
identifying potential matches, and verifying these potential matches
for spatial consistency.
Real-world scenarios often involve long-term missions in environments with high
rate of perceptual aliasing. These challenges make inter-robot loop closure detection
a resource-intensive process with a rapidly growing search space.
This task is especially challenging for the prevalent small-size robotic platforms
that are subject to various operational constraints which ultimately
limit, e.g., energy consumption, communication bandwidth, and computation capacity.
Designing 
{resource-efficient} frameworks
for inter-robot loop closure detection
thus is crucial and
has sparked a growing interest as evident by the recent literature; 
see, e.g.,
\citep{CieslewskiChoudhary17,Giamou18_ICRA,van2019collaborative,khosoussi2019reliable}.

Existing solutions 
can be categorized 
into centralized and decentralized approaches.
Centralized approaches circumvent the onboard resource constraints
by outsourcing the task of inter-robot loop closure detection
to an offboard unit with sufficient resources; see, e.g.,
\citep{schmuck2018ccm}.
These solutions, however, are limited to applications where a sufficiently capable
central node exists and is accessible via a reliable communication channel. 
Furthermore, in these solutions robots need to transmit all observations to the central node
which is inefficient considering the fact 
that only a small fraction of these observations constitute inter-robot loop closures.
To address such limitations,
recent works have
investigated efficient decentralized schemes
in which robots exchange lightweight queries 
and 
\emph{collaboratively} discover and verify inter-robot loop closures;
see, e.g.,
\citep{cieslewski2017efficient,Giamou18_ICRA,CieslewskiChoudhary17} and references therein.

Despite these crucial efforts, an important problem remains to be addressed:
even with the most resource-efficient scheme, 
completing the loop closure detection task
may still be infeasible given the allocated budgets on, e.g., communication bandwidth
and computation capacity. 
In other words, while resource-efficiency is necessary, it is not
sufficient. To address this issue, robots must be aware of such budget constraints, and
further must be able to seamlessly adapt their behaviour to these constraints by intelligently utilizing resources available onboard to complete
their tasks to the best of their ability.

In this work, we present such a \emph{resource-aware} approach to the distributed
inter-robot loop closure detection problem.  
At the core of our approach, the team seeks an exchange-and-verification plan that maximizes a
performance metric (e.g., expected number of discovered loop closures) subject
to budgets on communication (size of transmission) and
computation (number of geometric verifications).
In words, such a plan determines
(i) ``who must share what with whom'' and (ii) ``which subset of potential
inter-robot loop closures should be tested for spatial consistency''.
Special cases of this problem have been studied in recent works
in the context of
measurement selection for SLAM,
where robots are
\emph{only} subject to communication \citep{tian18} or computation
\citep{khosoussi2019reliable,carlone2017attention} budgets.
In general, this problem is NP-hard as it generalizes the maximum coverage problem.
This work builds upon classical results in submodular maximization \citep{krauseSurvey}
and presents efficient
approximation algorithms with provable performance guarantees for
maximizing monotone submodular performance metrics under budgeted
resources.
The performance of the proposed approach is extensively validated using both
real-world and simulated datasets, and the near-optimality of the proposed solution is
demonstrated empirically using a post-hoc certification scheme based on
convex relaxation.

\subsection*{Contributions}
The main contributions of this work are two-fold: 
\begin{enumerate}
  \item A generic sensor-agnostic framework for collaborative loop
	closure detection-and-verification under budgeted communication and computation.
  \item Greedy approximation algorithms with performance guarantees
	for near-optimal planning of data exchange and candidate verification under various
	communication and computation resource constraints models.
\end{enumerate}

An early version of this work was presented at WAFR 2018 (Workshop on the Algorithmic
Foundations of Robotics) \citep{Tian18_WAFR}.
This paper improves \citep{Tian18_WAFR} and our earlier work \citep{tian18}
along the following axes:
\begin{enumerate}
  \item New approximation algorithms with provable performance guarantees in new
	resource constraint regimes with pairwise and individual computational
	budgets 
	(Section~\ref{sec:individualBudgets}).
  \item A data-driven (as opposed to hand-engineered) scheme based on logistic
	regression for learning edge
	weights (i.e., loop closure probabilities) in the exchange graph from
	NetVLAD vectors (Section~\ref{sec:exchange}).
  \item Exploring the trade-off between the size and fidelity of metadata for a new
	metadata representation, namely NetVLAD which offers additional flexibility
	compared to the	previously used bag-of-visual-words models 
	(Section~\ref{sec:metadata_experiments}).
  \item Extended experimental and numerical analysis (Section~\ref{sec:experiments}).
\end{enumerate}

\subsection*{Outline}
We review related works in Section~\ref{sec:relatedworks}.
Section~\ref{sec:overview} presents an overview of the proposed framework and
formulates the main optimization problem. Our main approximation algorithms
are then presented in Sections~\ref{sec:modular} and \ref{sec:submodular} for modular and
submodular objectives, respectively. The proposed framework is experimentally
evaluated in Section~\ref{sec:experiments}. We then conclude the paper in
Section~\ref{sec:conclusion}. Finally, Appendix~\ref{sec:app} provides the proofs.

\subsection*{General Notation}
Bold lower-case and upper-case letters are generally reserved for vectors and matrices, respectively. 
Union of disjoint sets $\Acal_i$'s
is denoted by $\Acal_1 \uplus \Acal_2 \uplus \cdots \uplus \Acal_n$. For any 
subset $\VV$ of vertices in a given graph, $\edg(\VV)$ represents the set of all
edges incident to at least a vertex in $\VV$. Finally, $[n] \triangleq
\{1,2,\ldots,n\}$ for any $n \in \mathbb{N}$.

\section{Related Works}
\label{sec:relatedworks}

Resource-efficient CSLAM in general
\citep{choudhary2017,paull2016unified}, and data-efficient
distributed inter-robot loop closure detection (CSLAM \emph{front-end})
in particular \citep{Giamou18_ICRA,CieslewskiChoudhary17,cieslewski2017efficient,tian18,choudhary2017}
have been active areas of research in recent years.  
This paper focuses on CSLAM front-ends; see, e.g.,
\citep{choudhary2017,paull2015communication,paull2016unified} and the
survey paper by
\cite{saeedi2016multiple} for a discussion of modern resource-efficient CSLAM back-ends.

Modern (visual) appearance-based loop-closure detection techniques
commonly used in pose-graph SLAM
and image retrieval consist of two steps:
\begin{enumerate}
  \item \label{step1}
  	\emph{Place recognition}, 
    during which a compact representation of images such as bag of visual words (BoW)
	\citep{sivic2003video}, vector of locally aggregated descriptors (VLAD) \citep{jegou2010aggregating}, or NetVLAD
	\citep{arandjelovic2016netvlad} is used to efficiently search for
	potential matches for a given query image.
  \item \label{step2}
    \emph{Geometric verification} \citep{philbin2007object},
	which involves matching local image keypoints (e.g., with RANSAC iterations) to prune spatially
	inconsistent potential matches.
\end{enumerate}

In multirobot scenarios such as CSLAM front-end, images are collected by and
initially stored on different robots. Communication is thus needed to establish inter-robot loop
closures. In a centralized solution, this task is outsourced to a ``central
server'' with
sufficient resources (e.g.,
ground station); see, e.g., \citep{schmuck2018ccm}. Each robot thus needs to send every keypoint extracted from
its keyframes (a ``representative'' subset of all frames) to the central server for loop closure detection. 
Such a solution has two major resource efficiency and operational drawbacks: (i) to discover new
inter-robot loop
closures, robots need to maintain
a connection to a central
node that is capable of processing all data; (ii) robots must 
blindly communicate \emph{all} of their data (i.e., keypoints) despite the fact
that, in practice, only a small fraction of them correspond to loop closures.
These shortcomings can be addressed by resource-efficient distributed
paradigms in which robots \emph{collaboratively} discover inter-robot loop
closures.

\citet{CieslewskiS17,cieslewski2017efficient} and \citet{CieslewskiChoudhary17} 
propose effective heuristics to reduce data transmission 
during distributed visual place recognition (step~\ref{step1}).
Specifically, in \citep{cieslewski2017efficient} visual words are preassigned to
robots. Each query is then split and each component is sent to
the robot that is responsible for the corresponding word.
In \citep{CieslewskiS17,CieslewskiChoudhary17}, a similar idea is proposed 
based on clustering NetVLAD descriptors in a training set.
The learned cluster centers are then preassigned to robots
and each NetVLAD query is only sent to the
robot that has been assigned the closest cluster center.
In these works, for each query image only the best admissible match is
geometrically verified.
Although the abovementioned techniques are 
highly relevant to distributed inter-robot loop closure detection, they are
orthogonal to our work as 
we focus on the second stage in the pipeline (geometric verification).
Nonetheless, as it will become more clear
shortly in
Section~\ref{sec:overview}, these ideas can be used alongside ours to improve the overall efficiency of CSLAM front-end.

The work by \citet{Giamou18_ICRA} complements the abovementioned line of
work by focusing on 
resource-efficiency during distributed geometric verification (step~\ref{step2}).
Efficiency during this stage is particularly important as robots 
need to exchange \emph{full} observations (i.e., image keypoints)
whose size is typically between 
$100$-$200$
times the size of compact queries,
e.g., 
when NetVLAD is used.
Rather than immediately exchanging the keypoints
for geometric verification,
in \citep{Giamou18_ICRA} an \emph{exchange graph} is formed in which nodes
correspond to keyframes and edges represent potential matches discovered by
querying compact representations such as BoW or NetVLAD. Robots then seek an
exchange policy with minimum data exchange such that every potential
match can be verified by at least one of the two associated robots.  By
exploiting the structure of the exchange graph, this approach can reduce the
amount of data transmission for geometric verification.

Verifying all potential matches may not be always possible due to the limited
nature of mission-critical resources available onboard. In \citep{tian18}, we
investigate the budgeted data exchange problem where robots have to find
inter-robot loop closures under a communication budget. 
In such situations, one needs a performance metric 
to quantify the \emph{value} of data exchanged for verifying a particular subset
of potential matches.
Maximizing such a performance metric under a communication budget thus guides
the robots to prioritize their transmissions. This problem is in general NP-hard.
In \citep{tian18} we consider a class of monotone submodular performance
metrics and provide provably near-optimal approximation
algorithms for maximizing such functions subject to a
communication budget.

In an early version of the this work, we extend
the formulation in \citep{tian18} to scenarios where, in addition to a communication budget, robots are also subject to a
computational budget \citep{Tian18_WAFR}. Computational budget puts a limit on the number of geometric
verifications and, subsequently, the number of loop closures that will be added to the pose
graph which allows us to control the cost of CSLAM back-end. Similar computational budgets were considered before in
\citep{kasra16wafr,khosoussi2019reliable,carlone2017attention} for selecting
informative measurements in SLAM.
The present work extends \citep{Tian18_WAFR} and provides approximation algorithms with performance
guarantees for inter-robot loop closure detection under both communication and
computational budgets.

In addition to the line of work discussed above, alternative ideas have been
proposed to improve the resource-intensive nature of inter-robot loop closure
detection in specific settings. In particular, 
\citet{choudhary2017} propose an object-based SLAM framework that circumvents the resource constraints by 
compressing sensory observations with high-level semantic
labels. Such an approach would be effective only if the environment is filled
with known objects. Similarly, \citet{van2019collaborative} propose
low-level feature coding and compression schemes for ORB
to reduce data transmission during collaborative mapping.
The proposed approach, however, depends on the 
specific choice of feature representation.

\section{Proposed Framework}
\label{sec:overview}
In this section, we provide an overview of our distributed inter-robot loop closure
detection framework. 
In CSLAM, robots can initiate a search for inter-robot loop closures during their occasional (preplanned
or not) \emph{rendezvous}. We assume that during a ``rendezvous'', robots can 
communicate with some of their peers in close proximity owing to the
broadcast nature of wireless medium.
\begin{definition}[$r$-rendezvous]
\label{def:rendezvous}
\normalfont
An $r$-\emph{rendezvous} ($r \geq 2$) refers to the situation where 
$r$ robots are positioned such that each of them will receive the data broadcasted by
any other robot in that group.
\end{definition}

Each robot arrives at the rendezvous 
with a collection of observations (e.g., images or laser scans) acquired throughout its
mission at different times and locations. 
Our goal is to discover loop closures (associations) between observations owned by different robots.
A \emph{distributed} approach to this problem divides
the burden of the task among the robots,
and thus enjoys several advantages
over centralized schemes such as
reduced data transmission and extra flexibility; see, e.g.,
\citep{Giamou18_ICRA,cieslewski2017efficient}. In these works, robots
first perform distributed \emph{place recognition}
by exchanging a compact representation of their observations (hereafter, \emph{metadata})
in the form of full-image descriptors such as BoW, VLAD, or NetVLAD
vectors.
Alternatively,
spatial clues (i.e., estimated location with uncertainty) can be used 
if a common reference frame is already established.
These compact queries can help robots to efficiently identify a set of
\emph{potential} inter-robot loop closures.
It should be noted that although these queries are quite compact relative to the size
of a full observation (i.e., size of all keypoints in a keyframe), robots
still need to exchange many (i.e., one for each keyframe) of these vectors and search within them to find
potential matches.
In \citep{Giamou18_ICRA} one of the robots (or a central node) is assumed to collect all
metadata and conduct the search. Alternatively, when this is not feasible, one
can turn to lossy (yet effective) alternative procedures for
collaborative search 
such as \citep{CieslewskiChoudhary17,CieslewskiS17,cieslewski2017efficient} to reduce
the burden of the search for potential loop closures.

After the initial phase of place recognition, the identified potential matches are tested for
spatial consistency \citep{philbin2007object} 
during
\emph{geometric verification}.
This typically involves 
RANSAC iterations \citep{FischlerRANSAC}
over keypoint correspondences in the two associated frames; see, e.g., \citep{murORB2}. 
A potential match passes this test if
the number of spatially consistent keypoint correspondences
is above a threshold. These inliers then provide a relative transformation
between the corresponding poses which will be used by the CSLAM back-end.
Note that to collaboratively verify a potential loop closure,
at least one of the associated robots must share its observation (e.g., image keypoints)
with the other robot.

Although each geometric verification can be performed efficiently, with a
growing problem size in long-term missions and under high perceptual
ambiguity, geometric verification can still become the computational bottleneck of
the entire system; see \citep{heinly2015,Raguram2012BMVC}. 
In addition, 
the large amount of \emph{full} observation exchange required for geometric verification
may exceed the allocated/available communication budget (e.g., due to limited energy, bandwidth, and/or
time). To address both issues, in the
rest of this section we describe a general framework for \emph{focusing} the
available computation and communication resources on verifying the most ``informative'' subset of potential
loop closures. An outline of the proposed approach is as follows:
\begin{enumerate}
  \item Robots exchange compact queries (\emph{metadata}) for their
	observations and use them to identify a set of \emph{potential} inter-robot
	loop closures; see, e.g.,
	\citep{cieslewski2017efficient,CieslewskiChoudhary17,Giamou18_ICRA}.
  \item One robot forms the \emph{exchange
	graph} (Definition~\ref{def:exchange_graph} in Section~\ref{sec:exchange}),
	and approximately solves Problem~\ref{prob:codesign} (Sections~\ref{sec:exchange}-\ref{sec:problem})
	using algorithms presented in Sections~\ref{sec:modular} and \ref{sec:submodular}.
	This process determines which full observation (image keypoints)
	needs to be shared with the team and which potential loop
	closures are worthy of being tested for geometric verification based on the
	allocated communication and computation budgets. 
  \item According to the solution obtained in the previous step,
  robots exchange their observations
  and collaboratively verify the selected subset of potential inter-robot loop closures.
\end{enumerate}

\subsection{Exchange Graph}
\label{sec:exchange}
Rather than immediately exchanging observations for an identified potential match
\citep{cieslewski2017efficient,CieslewskiChoudhary17,CieslewskiS17}, in this
work we first form the \emph{exchange graph} induced by the set of potential
matches (see Figure \ref{fig:gex} for a simple exchange graph with $r = 3$).

\begin{definition}[Exchange Graph \citep{Giamou18_ICRA,tian18}]
  \label{def:exchange_graph}
  \normalfont
  Consider an $r$-rendezvous.
  An exchange graph between $r$ robots is a simple
  undirected $r$-partite graph $\Gall = (\Vall,\Eall)$ where each vertex $v
  \in \Vall$ corresponds to an observation collected by one robot at a
  particular time. The vertex set can be partitioned into $r$
  (self-independent) sets $\Vall = \VV_1 \uplus \cdots \uplus \VV_r$.  Each
  edge $\{u,v\} \in \Eall$ denotes a \emph{potential} inter-robot loop
  closure identified by comparing the corresponding metadata (here, $u \in \VV_i$ and $v \in \VV_j$). 
  $\Gall$ is endowed with vertex and edge weights $w : \Vall \to \Rset_{>0}$ and $p : \Eall
\to [0,1]$ that quantify the size of each observation (e.g.,
bytes, number of keypoints in a keyframe, etc), and the probability that an edge corresponds to a true loop closure, respectively. 
\end{definition}

Similar to \citep{CieslewskiChoudhary17,CieslewskiS17}, we choose NetVLAD
\citep{arandjelovic2016netvlad} 
as our metadata representation. 
Given an input image, NetVLAD uses a neural network to extract a normalized vector as the corresponding full-image descriptor.
During place recognition,
we compute the Euclidean distance $d(e)$
between NetVLAD vectors extracted from two keyframes stored on two robots.
This distance is then mapped to an
estimated probability $p(e)$
based on a logistic regression model.
Specifically, we assume a posterior probability of the form,
\begin{equation}
  p(e) = \bigg[1+\exp(-\beta_1 \cdot d(e) - \beta_0)\bigg]^{-1}
\label{eq:logistic}
\end{equation}
where $\beta_0$ and $\beta_1$ are learned offline on a training dataset.
Note that, alternatively, one can also use other features (based on, e.g.,
components of the NetVLAD vectors) in addition
to the Euclidean distance in the logistic regression model. 
Nevertheless,
this does not yield significant performance improvement in our experiments (Section~\ref{sec:experiments}).
If the resulting
probability $p(e)$ is higher than a threshold $\pthresh \in [0,1]$, 
the corresponding pair of keyframes is
considered to be a potential loop closure and is added to the exchange graph as
an edge weighted by its probability.
It is worth noting that based on \eqref{eq:logistic},
enforcing a
threshold on 
$p(e)$
is equivalent to enforcing a (different but unique) threshold on the original distance $d(e)$. 
Choosing a threshold
directly for distances, however, lacks interpretability. Furthermore, the
estimated probabilities are used in this work to incentivise
the robots to select more promising potential matches
(see Section~\ref{sec:objectives}).

Note that the ``quality'' of an exchange graph (e.g., in terms of
precision-recall for a fixed probability threshold) is mainly determined by the
fidelity of the
metadata. Intuitively, as the granularity of metadata
increases, one expects that a higher percentage of resulting potential matches pass
the geometric verification step. 
This, however, comes at the cost of higher data
transmission during place recognition. 
In our proposed framework, 
the dimension of NetVLAD vectors (hereafter, $\ndim$),
which in the pre-trained models can be tuned up to 4096,
serves as a ``knob'' 
which allows us to explore the
trade-off between granularity and communication cost of query and verification
steps; see Section~\ref{sec:metadata_experiments} for quantitative results.

\begin{figure*}[t]
	\centering
	\begin{subfigure}[t]{0.29\textwidth}
		\centering
		\begin{tikzpicture}[scale=1]
		\tikzstyle{vertex}=[circle,fill,scale=0.4,draw]
		\tikzstyle{special vertex}=[circle,fill=red,scale=0.4,draw]
		\tikzstyle{square vertex}=[rectangle,fill,scale=0.5,draw]
		\tikzstyle{diamond vertex}=[regular polygon,regular polygon
		sides=3,rotate=45,fill,scale=0.3,draw]
		\node[vertex] at (1,2.598076) (a1) {};
		\node[vertex] at (1.5,2.598076) (a2) {};
		\node[vertex] at (2,2.598076) (a3) {};
		\node[vertex] at (0.25,1.299037) (b1) {};
		\node[vertex] at (0.5,0.866025) (b2) {};
		\node[vertex] at (0.75, 0.433013) (b3) {};
		\node[vertex] at (2.25,0.433013) (c1) {};
		\node[vertex] at (2.5,0.866025) (c2) {};
		\node[vertex] at (2.75,1.299037) (c3) {};
		\node (r1) at (1.5,3.098076) [] {\faAndroid$_{_1}$};
		\node (r2) at (0.1464466,0.51247) [] {\faAndroid$_{_2}$};
		\node (r3) at (2.85355, 0.51247) [] {\faAndroid$_{_3}$};
		\draw[thick](a1) -- (b2);
		\draw[thick](a2) -- (b1);
		\draw[thick](a2) -- (c2);
		\draw[thick](a2) -- (c3);
		\draw[thick](b2) -- (c1);
		\draw[thick](c1) -- (a2);
		\draw[thick](b2) -- (a3);	  
		\draw[thick](b3) -- (c1);
		\begin{pgfonlayer}{background}
			\node[fit=(a1)(a2)(a3),rounded corners,fill=violet!15,inner xsep=3pt,
	inner ysep=4pt] {};
			\node[fit=(b1)(b2)(b3),rounded corners,fill=green!18,inner xsep=-4pt,
	inner ysep=5pt,
			rotate=28] {};
			\node[fit=(c1)(c2)(c3),rounded corners,fill=cyan!18,inner xsep=-4pt,
	inner ysep=4pt,
			rotate=152] {};
		\end{pgfonlayer}
		\end{tikzpicture}
		\captionsetup{justification=centering}
		\caption{Example $\Gcal$}
		\label{fig:gex}
	\end{subfigure}
	\begin{subfigure}[t]{0.29\textwidth}
		\centering
		\begin{tikzpicture}[scale=1]
		\tikzstyle{vertex}=[circle,fill,scale=0.4,draw]
		\tikzstyle{special vertex}=[circle,fill=red!50,scale=0.3]
		\tikzstyle{square vertex}=[rectangle,fill,scale=0.5,draw]
		\tikzstyle{diamond vertex}=[regular polygon,regular polygon
		sides=3,rotate=45,fill,scale=0.3,draw]
		\node[vertex] at (1,2.598076) (a1) {};
		\node[special vertex] at (1.5,2.598076) (a2) {\faWifi};
		\node[vertex] at (2,2.598076) (a3) {};
		\node[vertex] at (0.25,1.299037) (b1) {};
		\node[special vertex] at (0.5,0.866025) (b2) {\faWifi};
		\node[vertex] at (0.75, 0.433013) (b3) {};
		\node[special vertex] at (2.25,0.433013) (c1) {\faWifi};
		\node[vertex] at (2.5,0.866025) (c2) {};
		\node[vertex] at (2.75,1.299037) (c3) {};
		\node (r1) at (1.5,3.098076) [] {\faAndroid$_{_1}$};
		\node (r2) at (0.1464466,0.51247) [] {\faAndroid$_{_2}$};
		\node (r3) at (2.85355, 0.51247) [] {\faAndroid$_{_3}$};
		\draw[thick,red!30](a1) -- (b2);
		\draw[thick,red!30](a2) -- (b1);
		\draw[thick,red!30](a2) -- (c2);
		\draw[thick,red!30](a2) -- (c3);
		\draw[thick,red!30](b2) -- (c1);
		\draw[thick,red!30](c1) -- (a2);
		\draw[thick,red!30](b2) -- (a3);	  
		\draw[thick,red!30](b3) -- (c1);
		\begin{pgfonlayer}{background}
			\node[fit=(a1)(a2)(a3),rounded corners,fill=violet!15,inner xsep=3pt,
	inner ysep=4pt] {};
			\node[fit=(b1)(b2)(b3),rounded corners,fill=green!18,inner xsep=-4pt,
	inner ysep=5pt,
			rotate=28] {};
			\node[fit=(c1)(c2)(c3),rounded corners,fill=cyan!18,inner xsep=-4pt,
	inner ysep=4pt,
			rotate=152] {};
		\end{pgfonlayer}
		\end{tikzpicture}
		\captionsetup{justification=centering}
		\caption{Unconstrained}
		\label{fig:policy}
	\end{subfigure}
	\begin{subfigure}[t]{0.36\textwidth}
		\centering
		\begin{tikzpicture}[scale=1]
			\tikzstyle{vertex}=[circle,fill,scale=0.4,draw]
			\tikzstyle{special vertex}=[circle,fill=red!50,scale=0.3]
			\tikzstyle{square vertex}=[rectangle,fill,scale=0.5,draw]
			\tikzstyle{diamond vertex}=[regular polygon,regular polygon
			sides=3,rotate=45,fill,scale=0.3,draw]
			\node[vertex] at (1,2.598076) (a1) {};
			\node[special vertex] at (1.5,2.598076) (a2) {\faWifi};
			\node[vertex] at (2,2.598076) (a3) {};
			\node[vertex] at (0.25,1.299037) (b1) {};
			\node[special vertex] at (0.5,0.866025) (b2) {\faWifi};
			\node[vertex] at (0.75, 0.433013) (b3) {};
			\node[vertex] at (2.25,0.433013) (c1) {};
			\node[vertex] at (2.5,0.866025) (c2) {};
			\node[vertex] at (2.75,1.299037) (c3) {};
			\node (r1) at (1.5,3.098076) [] {\faAndroid$_{_1}$};
			\node (r2) at (0.1464466,0.51247) [] {\faAndroid$_{_2}$};
			\node (r3) at (2.85355, 0.51247) [] {\faAndroid$_{_3}$};
			\draw[thick,blue!80](a1) -- (b2);
			\draw[thick,red!30,dashed](a2) -- (b1);
			\draw[thick,blue!80](a2) -- (c2);
			\draw[thick,red!30,dashed](a2) -- (c3);
			\draw[thick,blue!80](b2) -- (c1);
			\draw[thick,red!30,dashed](c1) -- (a2);
			\draw[thick,red!30,dashed](b2) -- (a3);	  
			\draw[thick,black!40,dashed](b3) -- (c1);
			
			\begin{pgfonlayer}{background}
					\node[fit=(a1)(a2)(a3),rounded corners,fill=violet!15,inner xsep=3pt,
			inner ysep=4pt] {};
					\node[fit=(b1)(b2)(b3),rounded corners,fill=green!18,inner xsep=-4pt,
			inner ysep=5pt,
					rotate=28] {};
					\node[fit=(c1)(c2)(c3),rounded corners,fill=cyan!18,inner xsep=-4pt,
			inner ysep=4pt,
					rotate=152] {};
				\end{pgfonlayer}
		\end{tikzpicture}
		\captionsetup{justification=centering}
		\caption{$(\kcomm,\kcomp) = (2,3)$}
		\label{fig:codesign}
	\end{subfigure}
	\caption[Distributed loop closure detection over an exchange graph]
	{\textbf{(a)} An example exchange graph
	$\Gcal$ in a $3$-rendezvous where each robot owns three observations
	(vertices). 
	Each potential inter-robot loop closure can be verified, if at least one
	robot shares its observation with the other robot.
	\textbf{(b)} In the absence of any resource constraint, the robots can collectively
	verify all potential loop closures. The optimal lossless exchange policy
	\citep{Giamou18_ICRA} corresponds to sharing vertices of a
	minimum vertex cover, in this case
	the $3$ vertices marked in red.  \textbf{(c)} Now if the robots are only
	permitted to exchange at most $2$ vertices ($\kcomm = 2$) and verify at most
	$3$ edges ($\kcomp = 3$), they must decide which subset of observations to
	share (marked in red), and which subset of potential loop closures to verify
	(marked in blue). Note that these two subproblems are tightly coupled, as
	the selected edges must be covered by the selected vertices.
  	}
	\label{fig:diagram}
\end{figure*}
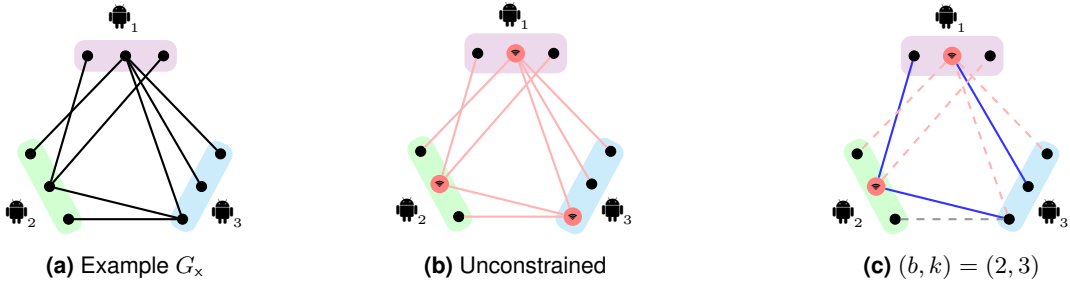

\subsection{Resource Constraints}
\label{sec:budgets}
Under the mild assumption that the computational cost of geometric verification
is uniform across edges, we model the total computational cost of verifying a
subset of edges $\Ecal \subseteq \Eall$ by its cardinality $|\Ecal|$. 
Therefore, imposing a computational budget in this model is equivalent to enforcing the
cardinality constraint $|\Ecal| \leq k$ on the subset of potential matches
selected for verification for some budget $k$. In addition, note that by limiting the
number of verifications one also bounds the number of new edges added to the CSLAM
pose graph, which helps to control the computational cost of CSLAM back-end.

As mentioned earlier, verifying potential matches also incurs a communication
cost: before two robots can verify a potential loop closure, at least one of them must \emph{share} its observation with the
other robot. In graph terms, verifying $\Ecal \subseteq \Eall$ requires robots
to broadcast a subset of their vertices (observations) $\Vcal
\subseteq \Vall$ that \emph{covers} $\Ecal$.\footnote{We say $\Vcal$
  ``covers'' $\Ecal$ iff $\Vcal$ contains at least one of the two vertices
  incident to any edge in $\Ecal$. Also, ``selecting'' a vertex $v \in
  \Vall$ hereafter is synonymous with broadcasting the corresponding (full)
  observation (see Figure~\ref{fig:policy}).} In other words, one can verify
  $\Ecal \subseteq \Eall$ only if there exists a $\Vcal \subseteq \Vall$ such that
  (i) $\Vcal$ covers $\Ecal$, and (ii) broadcasting $\Vcal$ does not
  exceed the allocated communication budget \citep{Giamou18_ICRA,tian18}.
Specifically, we consider three types of communication budgets
(Table~\ref{tab:gcomm}).
First, in Total-Uniform (\TotalUni{}) robots are allowed to exchange at most $b$
observations. This is justified under the assumption of uniform vertex weight
(i.e. observation size) $w$. This assumption is relaxed in
Total-Nonuniform (\TotalNon{}) where total data transmission must be at most
$b$. Finally, in Individual-Uniform (\IndivUni{}), we assume $\Vall$ is
partitioned into $n_b$ blocks and robots are allowed to broadcast at most $b_i$
observations from the $i$th block for all $i \in [n_b]$. A natural partitioning of
$\Vall$ is given by $\VV_1 \uplus \cdots \uplus \VV_r$. In this case, \IndivUni{}
permits robot $i$ to broadcast at most $b_i$ of its observations for all
$i \in [r]$. This model captures the heterogeneous nature of the team.

\begin{table*}[t]
	\setlength{\tabcolsep}{20pt}
	\renewcommand{\arraystretch}{1.3}
	\caption[Three models of communication budgets]
	  {Three models for communication constraints; see
		Section~\ref{sec:budgets} and Problem~\ref{prob:codesign}.}
	\centering
	\begin{tabular}{c||ccc}
		\hline
		\hline
		Type & \textbf{TU}$_b$ & 
		\textbf{TN}$_b$ & \textbf{IU}$_{b_{1:n_b}}$ \\
		\hline\hline
		\multirow{2}{*}{Communication Constraint} & $|\Vcal| \leq b$                & $\sum_{v \in
		\Vcal} w(v) \leq b$   & $|\Vcal \cap \VV_i| \leq b_i$ for
		$i \in [n_b]$\\
		 & Cardinality & Knapsack & Partition Matroid\\
		\hline
		\hline
	\end{tabular}
	\label{tab:gcomm}
\end{table*}

\subsection{Performance Metrics}
\label{sec:objectives}
Given an exchange graph and the abovementioned resource budgets, robots must
decide which budget-feasible subset of potential matches should be verified.
This decision is driven by a
so-called collective performance metric $\fe : 2^{\Eall} \to \Rset_{\geq 0}$.
Here, $\fe(\EE)$ quantifies the \emph{expected} utility gained by verifying
the potential loop closures in $\EE \subseteq \Eall$.
We focus on the class of monotone (non-decreasing) submodular
performance metrics.

\begin{definition}
  \label{def:NMS}
  \normalfont
  For a fixed finite ground set $\Wcal$, a set function $f: 2^{\Wcal} \to \Rset$ is normalized, monotone, and submodular (NMS) if it satisfies the following properties:
  \begin{itemize}
	\item[$\diamond$] Normalized: $f(\varnothing) = 0$.
  \item[$\diamond$] Monotone: for any $\Acal \subseteq \Bcal$, $f(\Acal) \leq f(\Bcal)$.
  \item[$\diamond$] Submodular: for any $\Acal \subseteq \Wcal$ and $\Bcal \subseteq \Wcal$,
	\vspace{-0.2cm}
	\begin{equation}
	  f(\Acal) + f(\Bcal) \geq 
	  f(\Acal \cup \Bcal ) + f(\Acal \cap \Bcal).
	\end{equation}
  \end{itemize}
\end{definition}

In what follows, we briefly review three examples of NMS functions, namely (i) the
expected number of loop closures, (ii) (approximate) expected D-optimality criterion, and (iii) expected weighted tree connectivity.
Note that our framework is compatible with \emph{any} NMS
objectives, and is not limited to the options considered below.

\subsubsection*{1) Expected Number of True Loop Closures.}
\label{subsec:NLC} 
Given an exchange graph $\Gall$, the expected number of loop closures (NLC)
obtained after verifying any $\Ecal \subseteq \Eall$ is simply given by
$\sum_{e \in \Ecal} p(e)$ where $p : \Eall \to [0,1]$ is the probability
function associated to $\Gall$. Therefore, if the team seeks to maximize the
number of (true) inter-robot loop closures it discovers, it must maximize the
following performance metric,
\begin{equation}
  f_{\text{NLC}}(\Ecal) \triangleq  \mathbb{E} \big[ \text{\# of loop closures in
  $\Ecal$}\big] = \sum_{e \in \Ecal} p(e).
  \label{eq:NLC} 
\end{equation}
Note that $f_{\text{NLC}}(\varnothing) = 0$ by definition.
$f_{\text{NLC}}$ is NMS; in fact, this function is modular (i.e.,
$-f_{\text{NLC}}$ is also submodular). Given the importance of
$f_\text{NLC}$, in Section~\ref{sec:modular} we present
near-optimal approximation algorithms specifically for (monotone) modular performance
metrics such as $f_{\text{NLC}}$.

\subsubsection*{2) D-optimality Criterion.}
\label{subsec:FIM}
The D-optimality design criterion (D-criterion), defined as the log-determinant of
the Fisher information matrix (FIM), is one of the most widely adopted design
criteria in the theory of optimal experimental design
with well-known geometrical and information-theoretic interpretations; see \citep{Joshi2009,Pukelsheim1993}.
In particular, this design criterion has been used for sensor selection \citep{Joshi2009,shamaiah2010greedy} and
measurement selection in SLAM \citep{khosoussi2019reliable,carlone2017attention,kasra16wafr}. 

Let $\HH_\text{init} \succ \zero$ denote the information matrix of the joint CSLAM problem
before incorporating the potential loop closures. Moreover, let $\HH_e = \JJ_e^\top
\boldsymbol\Sigma_e^{-1} \JJ_e
\succeq \zero$ be
the information matrix associated to the candidate loop closure $e \in
\Eall$ in which $\JJ_e$ and $\boldsymbol\Sigma_e$ denote the 
measurement Jacobian matrix (evaluated at the current estimate) and the
covariance of Gaussian noise, respectively. Following
\citep{carlone2017attention}, one can \emph{approximate} the
expected gain in the D-criterion as,
\begin{equation}
  f_{\text{FIM}}(\Ecal) \triangleq \log\det \Big(\HH_\text{init} + \sum_{e \in
  \Ecal}
  p(e) \cdot   \HH_e\Big) - \log\det \HH_\text{init}.
  \label{eq:FIM}
\end{equation}
It has been shown that $f_{\text{FIM}}$ is NMS \citep{shamaiah2010greedy,carlone2017attention}.

\subsubsection*{3) Tree Connectivity.}
\label{subsec:WST}
The D-criterion in SLAM can be closely approximated by the weighted number of
spanning trees (WST) (hereafter, tree connectivity) in the graphical
representation of SLAM \citep{khosoussi2019reliable,kasra16icra}. 
\citet{khosoussi2019reliable} use tree connectivity as a topological
surrogate for the D-criterion for selecting (potential) loop closures in planar
pose-graph SLAM. Evaluating tree connectivity is computationally cheaper than evaluating the
D-criterion and, furthermore, does not require any metric knowledge of robots' trajectories.

In the following, we briefly explain how this performance metric can be
evaluated for planar pose-graph CSLAM.
Let $t_{p}(\Ecal)$ and $t_{\theta}(\Ecal)$ denote the weighted number of
spanning trees in a pose graph specified by the edge set
$\Ecal$ whose edges are weighted by the precision of the translational and rotational
measurements, respectively \citep{khosoussi2019reliable}. Furthermore, let
$\Ecal_\text{init}$ denote the set of edges in the CSLAM
pose graph prior to the rendezvous. 
Define 
\begin{align}
  \Phi(\Ecal) \triangleq
  2\, \log \mathbb{E}\,\big[t_{p}(\Ecal_\text{init} \cup \EE)\big] + \log
  \mathbb{E}\,\big[t_{\theta}(\Ecal_\text{init} \cup \EE)\big],
\end{align}
  where
  expectation is taken with respect to the anisotropic random graph model introduced in
  Definition~\ref{def:exchange_graph}---i.e., potential loop closures are
  ``realized'' independently with
  probability assigned by $p : \Eall \to [0,1]$.
  \citet{khosoussi2019reliable} then seek to maximize the following objective.
\begin{align}
  {f}_{\text{WST}}(\Ecal) \triangleq
  \Phi(\Ecal) - \Phi(\varnothing).
  \label{eq:WST}
\end{align}
It is shown in \citep{khosoussi2019reliable,kasra16wafr} that $f_\text{WST}$
is NMS if the underlying pose graph is connected prior to the rendezvous.
\subsubsection*{Final Remarks}
Given an exchange graph, the expected number of loop closures
$f_\text{NLC}$ can be evaluated
efficiently with no additional overhead. Furthermore, this objective is
also suitable for similar distributed matching applications such as
distributed image retrieval or distributed document matching.
While $f_\text{NLC}$ measures performance by the expected number of matches, $f_\text{FIM}$ and $f_\text{WST}$ incentivize verifying
information-rich potential matches that have the highest impact on the CSLAM maximum likelihood
estimate in terms of the determinant of the expected covariance matrix.
Evaluating $f_\text{FIM}$ can be costly (cubic in the
total number of CSLAM poses). Furthermore, it requires the knowledge of the information matrix which
incurs additional communication cost. Alternatively, $f_\text{WST}$ provides a
topological surrogate for $f_\text{FIM}$ that is cheaper to compute and requires
only the knowledge of the topology of the CSLAM pose graph.
Finally, it is worth noting that $f_\text{FIM}$ and $f_\text{WST}$ may be
tempted to pick low-probability but high-impact candidates. To prevent this, one
needs to use a more conservative (higher) threshold on the probability of the
candidates included in the exchange graph (see Section~\ref{sec:exchange}).

\subsection{The Optimization Problem}
\label{sec:problem}
After introducing a number of options for the objective and constraints, we are now
ready to formally define the budgeted exchange-and-verification problem for distributed loop closure detection under
computation and communication constraints. As mentioned earlier, the goal is to
determine (i) which observations must be broadcasted, and (ii) which subset of
potential matches must be geometrically verified, in order to maximize an
NMS performance metric without exceeding the resource budgets.
In graph terms, our goal is to select (i) a
budget-feasible subset $\Vcal$ of all vertices $\Vall$, and (ii) a $k$-subset
$\EE \subseteq \Eall$ of edges covered by $\VV$, such that an NMS $f:
2^{\Eall} \to \Rset_{\geq 0}$ is
maximized.

\begin{problem}
\normalfont
Given an exchange graph $\Gall = (\Vall,\Eall;w,p)$, \textbf{CB} $\in \{\TotalUni{}_{b},\TotalNon{}_{b},\IndivUni{}_{b_{1:n_b}}\}$
and an NMS function $f : 2^{\Eall} \to \Rset_{\geq 0}$, solve the following
optimization problem.
\begin{equation}
	\begin{aligned}
	  & \underset{\Vcal \subseteq \Vall}{\text{maximize}}
	  & & \underset{\substack{\EE \subseteq \edg(\Vcal)\\|\EE| \leq k}}{\max} \hspace{0.2cm} \fe(\EE) \\
	  & \text{subject to}
	  & & \Vcal \text{ satisfies } \textbf{CB}.
	\end{aligned}
\end{equation}
\label{prob:codesign}
\end{problem}

The nested formulation above reflects the inherent structure of the exchange-and-verification problem:
the team must jointly decide which observations to share (outer problem),
and which potential loop closures to verify among the set of verifiable
potential loop closures given the shared observations (inner problem).
Note that Problem~\ref{prob:codesign} is NP-hard in general.
Specifically, when $b$ or $b_i$'s are sufficiently large
(i.e., unbounded communication), this problem coincides with 
general NMS maximization under a cardinality
constraint. No polynomial-time approximation algorithm can provide a constant
factor approximation for this problem better than $1-1/e$, unless P$=$NP; see \citep{krauseSurvey}
for a survey.
This immediately implies that $1-1/e$ is also the approximation barrier for the
general case of Problem~\ref{prob:codesign}.
In the next two sections,
we present approximation algorithms with provable performance guarantees for
variants of this problem. We also consider a similar setting where robots are
subject to individual computational budgets in
Section~\ref{sec:individualBudgets}.

\section{Algorithms for Modular Performance Metrics}
\label{sec:modular}

In this section, we consider a special case of Problem~\ref{prob:codesign} where
$\fe$ is normalized, monotone, and \emph{modular}; i.e., 
$\fe(\varnothing) = 0$ and $\fe(\EE) = \sum_{e \in \EE} \fe(e)$ for all
non-empty $\EE \subseteq \Eall$ where $\fe(e) \geq 0$ for all $e \in \Eall$. 
Problem~\ref{prob:codesign} with modular objectives generalizes the well-known maximum coverage problem on graphs, 
and thus remains NP-hard in general.
Without loss of generality, we use the modular performance metric $f_\text{NLC}$
defined in \eqref{eq:NLC} as a running example in this
section.
In what follows, we present
an efficient constant-factor approximation scheme for Problem~\ref{prob:codesign} with modular objectives under
the communication cost regimes listed in Table~\ref{tab:gcomm}.

Define $\fm : 2^{\Vall} \to \mathbb{R}_{\geq 0}$ such that $\fm(\VV)$ gives the optimal value of the
inner maximization (over edges) in Problem~\ref{prob:codesign} for a given subset of vertices $\VV \subseteq
\Vall$.
For example, for $f_\text{NLC}$ we have,
\begin{equation}
  \fm(\Vcal) \triangleq \max_{\substack{\Ecal \subseteq \edg(\VV) \\ |\Ecal|
  \leq k}} \,\, \sum_{e \in \Ecal} p(e),
\label{eq:fmdef}
\end{equation}
for any $\Vcal \subseteq \Vall$; i.e.,
$\fm(\VV)$ gives the maximum expected number of true inter-robot loop
closures discovered by broadcasting the observations associated to $\VV$ and
verifying at most $k$ potential inter-robot loop closures. 
Note that $\fm(\varnothing) = 0$ by definition.
It is easy to see that the inner maximization problem in
Problem~\ref{prob:codesign} (with monotone modular objectives)
admits a trivial solution and hence $\fm$ can be efficiently evaluated for any
$\VV \subseteq \Vall$:
if $\Vcal$ has more than $\kcomp$ incident edges,
return the sum of top $\kcomp$ edge probabilities in $\edg(\Vcal)$;
otherwise, return the sum of all probabilities in $\edg(\Vcal)$.

\begin{theorem}
	\normalfont
	For any normalized, monotone, and modular $\fe$, the corresponding
	$\fm$---as defined in \eqref{eq:fmdef}---is
	NMS.
	\label{th:fvNMS}
\end{theorem}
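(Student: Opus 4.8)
The plan is to exploit the modularity of $\fe$ to rewrite $\fm$ as the composition of a ``top-$k$ weight'' function on edge subsets with the incidence map $\edg(\cdot)$, and then check the three NMS axioms through each factor. Since $\fe$ is modular we have $\fe(\Scal)=\sum_{e\in\Scal}\fe(e)$ with $\fe(e)\ge0$, so the inner maximization in \eqref{eq:fmdef} is solved by selecting the highest-weight incident edges; that is, $\fm(\VV)=h\big(\edg(\VV)\big)$ for all $\VV\subseteq\Vall$, where $h:2^{\Eall}\to\Rset_{\ge0}$, $h(\Ecal)\triangleq\max_{\Scal\subseteq\Ecal,\,|\Scal|\le k}\sum_{e\in\Scal}\fe(e)$, is the sum of the $\min(k,|\Ecal|)$ largest weights in $\Ecal$. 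Normalization and monotonicity of $\fm$ are then immediate: $\edg(\varnothing)=\varnothing$ forces $\fm(\varnothing)=h(\varnothing)=0$, and since $\Acal\subseteq\Bcal$ implies $\edg(\Acal)\subseteq\edg(\Bcal)$, enlarging the vertex set only enlarges the feasible region of the inner maximum, so $\fm(\Acal)\le\fm(\Bcal)$.

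For submodularity I would first show that $h$ itself is monotone submodular on edge subsets, using the equivalent diminishing-returns characterization. Fix $\Ecal\subseteq\Fcal\subseteq\Eall$ and an edge $e\notin\Fcal$, and let $\theta(\Ecal)$ denote the $k$-th largest weight in $\Ecal$, with the convention $\theta(\Ecal)=0$ when $|\Ecal|<k$. A short case analysis shows that the marginal gain is exactly $h(\Ecal\cup\{e\})-h(\Ecal)=\max\big(0,\,\fe(e)-\theta(\Ecal)\big)$, since inserting $e$ displaces the current smallest selected weight precisely when $\fe(e)$ exceeds the threshold. Because $\theta$ is nondecreasing under set inclusion (a larger edge set has a $k$-th largest weight at least as big), we obtain $\max(0,\fe(e)-\theta(\Ecal))\ge\max(0,\fe(e)-\theta(\Fcal))$, which is the diminishing-returns inequality; monotonicity of $h$ follows from the same marginal formula.

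It then remains to lift submodularity from $h$ to $\fm=h\circ\edg$. The key structural identity is $\edg(\Acal\cup\{v\})=\edg(\Acal)\cup\edg(\{v\})$, so for $\Acal\subseteq\Bcal\subseteq\Vall$ and $v\notin\Bcal$ the marginal gain of $v$ under $\fm$ equals $h\big(\edg(\Acal)\cup\edg(\{v\})\big)-h\big(\edg(\Acal)\big)$. A standard consequence of monotone submodularity---provable by applying the submodular inequality to $\edg(\Acal)\cup\edg(\{v\})$ and $\edg(\Bcal)$ and then invoking monotonicity---is the set-valued bound $h(X\cup S)-h(X)\ge h(Y\cup S)-h(Y)$ whenever $X\subseteq Y$. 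Applying this with $X=\edg(\Acal)\subseteq\edg(\Bcal)=Y$ and $S=\edg(\{v\})$ yields $\fm(\Acal\cup\{v\})-\fm(\Acal)\ge\fm(\Bcal\cup\{v\})-\fm(\Bcal)$, i.e.\ $\fm$ is submodular. I expect the main obstacle to be the submodularity of the top-$k$ function $h$ in the second step---specifically the bookkeeping around ties and the $|\Ecal|<k$ boundary in the threshold argument---whereas the composition step and the two easy axioms are routine.
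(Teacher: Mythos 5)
Your proposal is correct and follows essentially the same route as the paper: both decompose $\fm$ as the top-$k$ edge-weight function $h$ composed with $\edg(\cdot)$, establish submodularity of $h$ via the $k$-th-largest-weight threshold (the paper's $f_k$ is your $\theta$, and its inequality chain amounts to your marginal-gain formula $\max(0,\fe(e)-\theta(\cdot))$), and then lift through the composition using the fact that precomposing an NMS function with $\edg$ preserves NMS (which the paper cites as its Theorem~\ref{thm:fe2fv} rather than re-deriving, and which it actually proves in the more general setting of a partitioned edge set with per-block budgets, recovering Theorem~\ref{th:fvNMS} as the single-block case).
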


Theorem~\ref{th:fvNMS} implies that in the case of (monotone) modular
objectives, the outer (vertex selection) part in Problem~\ref{prob:codesign}
is a special instance of monotone submodular maximization subject to a 
cardinality (\TotalUni{}), a knapsack
(\TotalNon{}), or a partition matroid (\IndivUni{}) constraint. 
These problems admit constant-factor approximation
algorithms \citep{krauseSurvey}. The best performance guarantee in all cases is $1-1/e \approx 0.63$
(Table~\ref{tab:modular_apx}); i.e., in the worst case, the expected number of
correct loop closures discovered by such algorithms is no less than 
$63\%$ of that of an optimal solution.
Among these algorithms, variants of the standard
greedy algorithm are particularly well-suited for our application due to their
computational efficiency.
These greedy algorithms enjoy constant-factor approximation guarantees, albeit with a
performance guarantee weaker than $1-1/e$ in the case of \TotalNon{} and
\IndivUni{}; see the first row of Table~\ref{tab:modular_apx} and
\citep{krauseSurvey}. 
Hereafter, we call this family of greedy algorithms adopted to solve
Problem~\ref{prob:codesign} with (monotone) modular objectives \textsc{\small Modular-Greedy}.

\begin{table}[t]
	\setlength{\tabcolsep}{5pt}
	\renewcommand{\arraystretch}{1.5}
	\caption{Approximation ratio for monotone submodular maximization subject to a cardinality, knapsack, 
	and partition matroid 
 	constraint. 
	Here \textsc{Greedy$^*$} includes simple extensions of the natural greedy
	algorithm. These results are due to \citet{nemhauser1978analysis,
	leskovec2007cost, fisher1978analysis,sviridenko2004note,Calinescu2011}; see
	\citep{krauseSurvey} for a survey.}
	\centering
	\begin{tabular}{c||ccc}
		\hline
		\hline
		Alg. & Cardinality & Knapsack & Partition Mat. \\
		\hline\hline
		\textsc{\small Greedy$^*$} & $1-1/e$  &  $1/2 \cdot
			(1-1/e)$  & $1/2$  \\ \hline
			Best  & $1-1/e$  & 
			$1-1/e$   & $1-1/e$ \\
		\hline
		\hline
	\end{tabular}
	\label{tab:modular_apx}
\end{table}

Algorithm~\ref{alg:mgreedy} provides the pseudocode for \textsc{\small Modular-Greedy} under the \TotalUni{} regime. 
At each iteration, 
we simply select (i.e., broadcast) the next remaining
vertex $v^\star$ with the maximum marginal gain over expected number of true
loop closures (line~\ref{alg:mgrd_v}).
The greedy loop is terminated when the algorithm reaches the communication budget, or when there is no remaining vertex. 
After broadcasting $\Vgrd$, we
verify the top $k$ edges in $\edg(\Vgrd)$ that maximize the sum of
probabilities (line~\ref{alg:mgrd_e}).
A na\"{i}ve implementation of \textsc{\small Modular-Greedy} requires $O(\kcomm\cdot |\Vall|)$ evaluations of $\fm$,
where each evaluation takes $O(|\edg(\VV)| \times \log k)$ time.
The number of evaluations can be significantly reduced by using the
so-called lazy greedy method; see \citep{minoux1978accelerated, krauseSurvey}.

Under \TotalUni{}, \textsc{\small Modular-Greedy} (Algorithm~\ref{alg:mgreedy}) provides the optimal performance guarantee of 
$1-1/e$ \citep{nemhauser1978analysis}. Under \TotalNon{}, the
same greedy algorithm, together with one of its variants that normalizes marginal
gains by vertex weights, provide a performance guarantee of $1/2 \cdot (1-1/e)$; 
see \citep{leskovec2007cost}.
Finally, in the case of \IndivUni{}, selecting the next \emph{feasible} vertex
according to the standard greedy algorithm leads to a performance guarantee of $1/2$ \citep{fisher1978analysis}. 

\begin{algorithm}[t]
	\caption{\textsc{\small Modular-Greedy} \small (\TotalUni{})}\label{alg:mgreedy}
	\begin{algorithmic}[1]
		\renewcommand{\algorithmicrequire}{\textbf{Input:}}
		\renewcommand{\algorithmicensure}{\textbf{Output:}}
		\Require
		\Statex - Exchange graph $\Gcal = (\Vall, \Eall)$
		\Statex - Communication budget $\kcomm$ and computation budget $\kcomp$ 
		\Statex - Modular $\fe: 2^{\Eall} \to \Rset_{\geq 0}$ and $\fm$ as
		defined in \eqref{eq:fmdef}
		\Ensure
		\Statex - A budget-feasible pair 
		$\Vgrd \subseteq \Vall, \Egrd \subseteq \Eall$. 
		\State {$\Vgrd \leftarrow \varnothing$}
		\State {\small \textcolor{green!50!black}{// While satisfying comm. budget}}
		\While {$|\Vgrd| < \kcomm$ } 
			\State {\small \textcolor{green!50!black}{// Select vertex based on marginal gain in $\fm$}}
   			\State {$v^\star \leftarrow \argmax_{v \in \Vall\setminus\Vgrd}
			      				\fm(\Vgrd \cup \{v\})$} \label{alg:mgrd_v}
   			\State {$\Vgrd \leftarrow \Vgrd \cup \{v^\star\}$}
  		\EndWhile
  		\State {\small \textcolor{green!50!black}{// Select top $k$ edges from edges covered by $\Vgrd$}}
  		\State $\Egrd \leftarrow \argmax_{\EE \subseteq \edg(\Vgrd)}\fe(\EE) \,\,
		\text{ s.t. } \,\, |\EE| \leq k$ \label{alg:mgrd_e}
		\State \Return $\Vgrd, \Egrd$
	\end{algorithmic}
\end{algorithm}

\begin{remark}
\normalfont
\citet{Kulik2009MCP} study the problem of {maximum coverage with packing constraint} (MCP),
which includes Problem~\ref{prob:codesign} with modular $\fe$ under \TotalUni{} as a special case.
Our approach differs from \citep{Kulik2009MCP} in two ways. 
Firstly, the algorithm proposed in \citep{Kulik2009MCP} achieves a performance
guarantee of $1 - 1/e$ for MCP by applying partial enumeration, which is
computationally expensive in practice. This additional complexity is due to a
knapsack constraint on edges (``items'' according to \citep{Kulik2009MCP}) which
is unnecessary in our application. As a result, the 
standard greedy algorithm retains the optimal performance guarantee 
without
any need for partial enumeration.
Secondly, in addition to \TotalUni, we study other models of communication budgets (\TotalNon{} and \IndivUni{}),
which leads to more general classes of constraints (i.e., knapsack and partition
matroid) that MCP does not consider. 
\end{remark}

\begin{remark}
\normalfont
It is worth mentioning that in some special cases, the best performance guarantee goes beyond 
$1-1/e$.
For example,
when $\fe$ is modular, 
the communication cost model is \TotalUni{},
and there is no budget on computation ($k = \infty$),
Problem~\ref{prob:codesign} reduces to the well-studied 
maximum coverage problem over a graph \citep{hochbaum1996approximation}. 
In this case, a simple procedure based on pipage rounding can improve the
approximation factor to $3/4$, see \citep{ageevGraphMaxCover}. Furthermore,
if the graph is bipartite, a specialized algorithm can improve the approximation factor to $8/9$ \citep{caskurluBipartiteMaxCover}.
Nonetheless, 
these approaches do not generalize to the regimes considered in our work (e.g., with a computation budget).
Furthermore, they do not retain the computational efficiency and simplicity offered by the greedy algorithm.
\end{remark}

\subsection{Individual Computational Budgets}
\label{sec:individualBudgets}
Problem~\ref{prob:codesign} imposes a single computational budget on the total
number of potential matches verified by the entire team. While this can help to
control the number of verifications as well as the total cost of CSLAM back-end,
in some scenarios one may wish to impose a specific computational budget on each robot such that robot $i$ is only allowed to verify at
most $k_i$ potential matches (similar to
\TotalUni{} vs.~\IndivUni{}). This allows us to control the division of labor
among heterogeneous robots.
Providing performance
guarantees directly for this more general setting is challenging. Instead, in what
follows we show how
Theorem~\ref{th:fvNMS} and Algorithm~\ref{alg:mgreedy} can be generalized to a similar
setup under pairwise computational budgets. In this alternative regime, robots
$i$ and $j$ together are allowed to verify at most $k_{ij}$ potential
matches. We then describe a simple procedure for obtaining pairwise budgets
$\{k_{ij}\}_{i,j \in [r], j>i}$
from a given set of individual budgets $\{k_i\}_{i \in [r]}$, such that robots
are guaranteed to stay within their individual budgets.

For all $i,j \in [r]$ and $i < j$, let $\EE_{ij} \subseteq \Eall$ denote the set of potential matches between
robots $i$ and $j$. Note that $\{\EE_{ij}\}$ partitions $\EE$ into disjoint
(but possibly empty) subsets.
For the modified Problem~\ref{prob:codesign} with pairwise computational budgets and modular
objective $f_\text{NLC}$, the inner maximization
would become
\begin{equation}
  \begin{aligned}
	g_\text{pair}(\Vcal) \,\, \triangleq \,\, & \underset{\Ecal \subseteq \edg(\VV)}{\text{max}}
	& & \sum_{e \in \Ecal} p(e) \\
	& \text{subject to}
	& & |\EE \cap \EE_{ij} | \leq k_{ij}, \; \forall i,j \in [r], i<j.
  \end{aligned}
  \label{eq:fmdefPairwise}
\end{equation}
Note that this is equivalent to imposing a partition matroid constraint over
$\edg(\VV)$.
It is easy to see that for any given $\VV \subseteq \Vall$, $g_\text{pair}$ in
\eqref{eq:fmdefPairwise} can be efficiently evaluated similar to
\eqref{eq:fmdef}: from each $\EE_{ij} \cap
\edg(\VV)$,
simply pick the $k_{ij}$ edges (if available, otherwise pick all) with largest
probability; and return the sum of probabilities of the selected edges.

\begin{theorem}
	\normalfont
	For any normalized, monotone, and modular $\fe$, the corresponding
	$\fm_\text{pair}$---as defined in \eqref{eq:fmdefPairwise}---is
	NMS.
	\label{th:fvNMSPairwise}
\end{theorem}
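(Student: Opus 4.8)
The plan is to express $\fm_\text{pair}$ as the weighted rank function of a partition matroid precomposed with the incident-edge operator, and then combine two standard structural facts: that weighted matroid rank functions are NMS, and that precomposing a monotone submodular function with a coverage (set-union) map preserves the NMS properties. This is a direct generalization of the argument behind Theorem~\ref{th:fvNMS}, with the uniform (cardinality) matroid replaced by the partition matroid induced by the pairwise budgets $\{k_{ij}\}$.

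Concretely, using modularity of $\fe$ I would set $w(e) \triangleq \fe(e) \geq 0$ and let $\mathcal{M}$ be the partition matroid on $\Eall$ whose independent sets are the $\EE$ satisfying $|\EE \cap \EE_{ij}| \leq k_{ij}$ for all $i<j$. Writing $r_w(S) \triangleq \max\{\sum_{e \in I} w(e) : I \subseteq S,\ I \in \mathcal{M}\}$ for its weighted rank function, the per-part evaluation rule described immediately after \eqref{eq:fmdefPairwise} is precisely the identity $\fm_\text{pair}(\Vcal) = r_w(\edg(\Vcal))$. Normalization then follows from $\edg(\varnothing) = \varnothing$ and $r_w(\varnothing) = 0$, and monotonicity follows because $\Vcal \subseteq \Vcal'$ implies $\edg(\Vcal) \subseteq \edg(\Vcal')$ while $r_w$ is itself monotone.

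It remains to prove submodularity, which I would do in two steps. First, $r_w$ is submodular: for the partition matroid it decomposes as $r_w(S) = \sum_{i<j} \sigma_{k_{ij}}(S \cap \EE_{ij})$, where $\sigma_k(T)$ is the sum of the $k$ largest weights in $T$ (all of them if $|T| \leq k$). Each summand is submodular---the marginal value of inserting an edge equals its weight minus the current $k_{ij}$-th largest weight (taken as $0$ when the part is underfull), which is nonincreasing as the argument grows---and a finite sum of submodular functions is submodular, so $r_w$ is NMS. (One could instead cite the classical fact that the weighted rank function of any matroid is NMS.) Second, $\edg$ is a coverage map, $\edg(\Vcal) = \bigcup_{v \in \Vcal} \edg(\{v\})$; for $\Vcal \subseteq \Vcal'$ and $v \notin \Vcal'$, setting $B = \edg(\Vcal)$, $C = \edg(\Vcal')$, and $D = \edg(\{v\})$ (so $B \subseteq C$), applying submodularity of $r_w$ to the pair $B \cup D$ and $C$ together with the identities $(B \cup D) \cup C = C \cup D$ and $(B \cup D) \cap C = B \cup (D \cap C)$ and one use of monotonicity yields $r_w(B \cup D) - r_w(B) \geq r_w(C \cup D) - r_w(C)$, i.e.\ the diminishing-returns form of submodularity for $\fm_\text{pair}$.

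I expect the composition step, rather than the matroid-rank step, to be the crux: it is the only place that simultaneously invokes both monotonicity and submodularity of $r_w$, and the set-algebra identities must be tracked carefully to land exactly on the marginal-gain inequality. Once that lemma is isolated, the result requires no ideas beyond those already used for Theorem~\ref{th:fvNMS}, since the matroid-rank argument is agnostic to whether the underlying matroid is uniform or a partition matroid.
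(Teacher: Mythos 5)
Your proof is correct and takes essentially the same route as the paper's: the paper's Theorem~\ref{thm:heNMS} shows the inner maximization under the pairwise (partition-matroid) budgets is NMS using exactly your ``weight minus the current $k_{ij}$-th largest weight'' marginal-gain argument localized to the part containing the inserted edge, and then composes with $\edg$ via Theorem~\ref{thm:fe2fv}. The only cosmetic differences are that you phrase the inner function as a weighted partition-matroid rank decomposed as a sum of per-part top-$k$ functions, and that you prove the coverage-composition lemma inline rather than citing it.
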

This theorem states that, similar to $g$ \eqref{eq:fmdef}, $g_\text{pair}:2^{\Vall} \to
\Rset_{\geq 0}$ that arises from pairwise
computational budgets \eqref{eq:fmdefPairwise} is
also NMS. Therefore, in the case of pairwise computational budgets, a modified version of \textsc{\small Modular-Greedy}
(Algorithm~\ref{alg:mgreedy}) in
which vertices are selected greedily according to $g_\text{pair}$ (instead of
$g$) retains the same performance guarantees provided earlier for \textsc{\small
Modular-Greedy}. 

It only remains to explain how pairwise budgets $\{k_{ij}\}$ must be set such
that robots do not exceed a given set of individual computational budgets
$\{k_i\}$. Note that in the pairwise regime, in the worst case robot $i$ has to
verify all selected edges incident to its vertices; i.e., at most $\sum_{{j>i}}
k_{ij} + \sum_{j<i} k_{ji}$ edges. Consequently, to guarantee 
compatibility with individual budgets, it is sufficient for $\{k_{ij}\}$ to satisfy
the following condition,
\begin{equation}
  \sum_{\substack{j \in [r] \\ j > i}} k_{ij} +
  \sum_{\substack{j \in [r] \\ j < i}} k_{ji}
  \leq k_i, \,\,\, \forall i \in
  [r].
  \label{eq:kij}
\end{equation}
Although any such assignment for $\{k_{ij}\}$ is valid, we also take into
account the expected number of loop closures in $\EE_{ij}$ for setting
$k_{ij}$ by solving the following linear program (LP) and rounding down its
solution.
\begin{equation}
  \begin{aligned}
	& \underset{\{k_{ij}\}}{\text{maximize}}
	& & \sum_{\substack{i,j \in [r]\\ j>i}} c_{ij} \, k_{ij} \\
	& \text{subject to}
	& &
  \sum_{\substack{j \in [r] \\ j > i}} k_{ij} +
  \sum_{\substack{j \in [r] \\ j < i}} k_{ji} \leq k_i, \; \forall i \in
  [r],\\
  & & &  0 \leq k_{ij} \leq |\EE_{ij}|, \; \forall i,j \in [r], j>i,
  \end{aligned}
\end{equation}
where $c_{ij}$ is the expected fraction of true loop closures in $\EE_{ij}$,
i.e., 
\begin{equation}
  c_{ij} \triangleq \frac{\mathbb{E}[\text{\# of loop closures in
  $\EE_{ij}$}]}{|\EE_{ij}|} =
  \frac{1}{|\EE_{ij}|} {\sum_{e \in \EE_{ij}} p(e)}.
\end{equation}
The objective in this LP is designed to favor allocating a greater fraction of $k_i$ to
those $k_{ij}$'s that contain a higher percentage of
actual loop closures.

\section{Algorithms for Submodular Performance Metrics}
\label{sec:submodular}
In this section, we study Problem~\ref{prob:codesign} when $\fe : 2^{\Eall} \to
\Rset_{\geq 0}$ is an arbitrary NMS objective.
To the best of our knowledge, no prior work exists on approximation algorithms
for this problem except \citep{Tian18_WAFR}. The approach we took in Section~\ref{sec:modular} for the
special case of monotone modular objectives cannot be extended to this case:
even evaluating the
``generalized'' $g$ \eqref{eq:fmdef} is NP-hard in general.
It is thus unclear
whether any constant-factor approximation can be attained for an arbitrary NMS objective.

In Sections~\ref{sec:infinite_b} and \ref{sec:infinite_k},
we briefly review two relaxations of Problem~\ref{prob:codesign}
with either unlimited communication or unlimited computation budget.
In both cases, Problem~\ref{prob:codesign} reduces to 
known measurement selection problems for which 
constant-factor ($1-1/e$) performance guarantee can be attained via greedy
algorithms. 
Building on these results,
in Section~\ref{sec:submodular_general} 
we consider the most general case of Problem~\ref{prob:codesign}
under both communication and computation budgets.
Specifically, we provide
an efficient approximation algorithm under the \TotalUni{} regime, with a
performance guarantee that depends on the ratio between the two budgets ($b$ and $k$), as well as the maximum degree
of the exchange graph.

\subsection{Unlimited Communication Budget}
\label{sec:infinite_b}
Let us first relax the communication constraint in Problem~\ref{prob:codesign},
i.e., $b = \infty$ in \TotalUni{} and \TotalNon{}, and $b_i = \infty$ for all $i
\in [n_b]$ in \IndivUni{}.
In this case, any subset of at most $k$ edges
can be trivially verified,
e.g.,
by broadcasting one of the two vertices incident to each selected edge.
In terms of Problem~\ref{prob:codesign},
this means that the decision variable corresponding to vertex selection $\Vcal \subseteq \Vall$
together with all constraints involving $\Vcal$ can be dropped. 
The reduced problem is simply,
\begin{equation}
  \underset{\substack{\Ecal \subseteq \Eall \\ |\Ecal|\leq k}}{\text{maximize }}
\,\, \fe(\Ecal),
\label{eq:pe}
\end{equation}
which is the standard monotone submodular maximization subject to a cardinality constraint \citep{krauseSurvey}.
It is well known that
the standard greedy algorithm that sequentially selects the next best edge based on the marginal gain in $\fe$
attains a $1-1/e$ performance guarantee \citep{nemhauser1978analysis}; see also \citep{kasra16wafr,khosoussi2019reliable,carlone2017attention} for
the applications of this result in the context of measurement selection for SLAM.

\subsection{Unlimited Computation Budget}
\label{sec:infinite_k}
Now let us relax the computational constraint in
Problem~\ref{prob:codesign}.
With a little abuse of notation, let $\fv_\text{com}: 2^{\Vall} \to \mathbb{R}_{\geq 0}$
denote the optimal value of the inner maximization in
Problem~\ref{prob:codesign}---this is similar to \eqref{eq:fmdef} in
Section~\ref{sec:modular}, albeit for an arbitrary NMS objective $f$ as opposed
to a monotone modular one.
As $\fe$ is monotone and $k$ is unlimited, 
it is straightforward to see that the optimal value of the inner problem is attained by
selecting all edges covered by $\Vcal$, i.e.,
\begin{equation}
  \fv_\text{com}(\Vcal) \triangleq \fe\big(\edg(\VV)\big).
\label{eq:fvdef}
\end{equation}
The following theorem about $g_\text{com} : 2^{\Vall} \to \Rset_{\geq 0}$ follows from
\citep[Proposition~2.5]{nemhauser1978analysis} and was independently established
in \citep{tian18}.
\begin{theorem}
  \normalfont
  For any NMS $\fe$, the corresponding $\fv_\text{com}$---as defined in \eqref{eq:fvdef}---is normalized, monotone, and submodular (NMS).
  \label{thm:fe2fv}
\end{theorem}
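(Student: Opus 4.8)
The plan is to verify the three NMS properties of $\fv_\text{com}(\Vcal) = \fe(\edg(\Vcal))$ directly from Definition~\ref{def:NMS}, exploiting the fact that the incidence map $\edg(\cdot)$ interacts cleanly with unions but only one-sidedly with intersections. Normalization is immediate: no edge is incident to the empty vertex set, so $\edg(\varnothing) = \varnothing$ and hence $\fv_\text{com}(\varnothing) = \fe(\varnothing) = 0$ by normalization of $\fe$. Monotonicity is almost as direct: if $\Acal \subseteq \Bcal$, then every edge incident to $\Acal$ is incident to $\Bcal$, so $\edg(\Acal) \subseteq \edg(\Bcal)$, and monotonicity of $\fe$ yields $\fv_\text{com}(\Acal) \leq \fv_\text{com}(\Bcal)$.

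The substance is in submodularity. First I would record two set-theoretic facts about the incidence map. An edge lies in $\edg(\Acal \cup \Bcal)$ if and only if it has an endpoint in $\Acal$ or in $\Bcal$, which gives the exact identity $\edg(\Acal \cup \Bcal) = \edg(\Acal) \cup \edg(\Bcal)$. For intersections, however, only the inclusion $\edg(\Acal \cap \Bcal) \subseteq \edg(\Acal) \cap \edg(\Bcal)$ holds, and it can be strict: an edge $\{u,v\}$ with $u \in \Acal \setminus \Bcal$ and $v \in \Bcal \setminus \Acal$ is incident to both $\Acal$ and $\Bcal$, yet has no endpoint in $\Acal \cap \Bcal$. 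This asymmetry is the crux of the argument and the place where a naive \emph{homomorphism} argument (pretending $\edg$ preserves both operations) would break down.

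To finish, I would apply the submodular inequality of Definition~\ref{def:NMS} to the edge sets $X \triangleq \edg(\Acal)$ and $Y \triangleq \edg(\Bcal)$:
\[
  \fe\big(\edg(\Acal)\big) + \fe\big(\edg(\Bcal)\big) \;\geq\; \fe\big(\edg(\Acal) \cup \edg(\Bcal)\big) + \fe\big(\edg(\Acal) \cap \edg(\Bcal)\big).
\]
By the union identity, the first term on the right is exactly $\fv_\text{com}(\Acal \cup \Bcal)$. For the second term I would invoke the inclusion $\edg(\Acal \cap \Bcal) \subseteq \edg(\Acal) \cap \edg(\Bcal)$ together with monotonicity of $\fe$ to obtain $\fe\big(\edg(\Acal) \cap \edg(\Bcal)\big) \geq \fe\big(\edg(\Acal \cap \Bcal)\big) = \fv_\text{com}(\Acal \cap \Bcal)$. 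Chaining these two observations yields $\fv_\text{com}(\Acal) + \fv_\text{com}(\Bcal) \geq \fv_\text{com}(\Acal \cup \Bcal) + \fv_\text{com}(\Acal \cap \Bcal)$, as required. The main obstacle, as flagged above, is the intersection: submodularity of $\fe$ alone is insufficient, and the key step is realizing that monotonicity must be used to absorb the gap between $\edg(\Acal \cap \Bcal)$ and $\edg(\Acal) \cap \edg(\Bcal)$.
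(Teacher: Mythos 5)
Your proof is correct. The paper does not actually prove Theorem~\ref{thm:fe2fv} itself; it defers to \citet[Proposition~2.5]{nemhauser1978analysis} and to \citep{tian18}, and your argument is precisely the standard self-contained verification of that cited fact: the exact identity $\edg(\Acal \cup \Bcal) = \edg(\Acal) \cup \edg(\Bcal)$, the one-sided inclusion $\edg(\Acal \cap \Bcal) \subseteq \edg(\Acal) \cap \edg(\Bcal)$, and monotonicity of $\fe$ to absorb the gap on the intersection term. You correctly identify that last step as the crux --- submodularity of $\fe$ alone would not suffice --- so nothing is missing.
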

Theorem~\ref{thm:fe2fv} implies that the outer problem is an instance
of monotone submodular maximization subject to a
cardinality (\TotalUni{}), knapsack (\TotalNon{}), or partition matroid (\IndivUni{})
constraint.
For \TotalUni{}, the standard greedy algorithm that selects vertices based on marginal gain in $\fv$ and verifies all associated edges
achieves a $1-1/e$ performance guarantee. 
The corresponding variants for \TotalNon{} and \IndivUni{} achieve a performance guarantee of 
$1/2 \cdot (1-1/e)$ and $1/2$, respectively; see Table~\ref{tab:modular_apx}.

\subsection{The General Case}
\label{sec:submodular_general}

We are now ready to consider Problem~\ref{prob:codesign} with both budget
constraints under \TotalUni{}.
Note that Sections~\ref{sec:infinite_b} and \ref{sec:infinite_k} provide two
complementary approaches.
The first approach is to \emph{greedily select edges} and continue as long as
constraints are not violated. 
Let us call this algorithm \textsc{\small Edge-Greedy}. 
Note that when $b$ is sufficiently large (e.g., when $b > k$),
Problem~\ref{prob:codesign} reduces to the special case considered in Section~\ref{sec:infinite_b}
for which \textsc{\small Edge-Greedy} achieves a $1-1/e$ performance guarantee.
Algorithm~\ref{alg:egreedy} provides the pseudocode. 
Lines~\ref{alg:egrd_1_start}-\ref{alg:egrd_1_end} 
show the standard greedy loop, where at each iteration 
we select the next remaining edge that produces the largest marginal gain in terms of $\fe$.
After adding an edge, we update
our selected vertices to cover the set of selected edges (line~\ref{alg:egrd_vc}).
An easy way to do this is to include one vertex incident to each selected edge.
In case there is extra computation budget,
we also include a heuristic optimization phase (lines~\ref{alg:egrd_2_start}-\ref{alg:egrd_2_end})
in which the algorithm continues to select from
``communication-free'' edges,
i.e., edges that are covered by the currently selected vertices.
The heuristic phase does not contribute to the theoretical guarantee,
but improves the performance in practice.

The second approach is to 
\emph{greedily select vertices},
i.e., broadcast the corresponding observations and verify all edges incident to them,
until violating the communication or computation budget. 
Let us call this algorithm \textsc{\small Vertex-Greedy}. 
When $k$ is sufficiently large (e.g., when $\kcomm <
\lfloor \kcomp / \Delta \rfloor$ where $\Delta$ is the maximum degree of
$\Gcal$),
\textsc{\small Vertex-Greedy} achieves a $1-1/e$ performance guarantee
as noted in Section~\ref{sec:infinite_k}; see Algorithm~\ref{alg:vgreedy} for pseudocode.
Lines~\ref{alg:vgrd_start}-\ref{alg:vgrd_end} show the greedy loop.
At each iteration, 
we select the next feasible vertex that produces the largest
gain in terms of $\fv_\text{com}$ \eqref{eq:fvdef}.
Once a vertex is selected,
all its incident edges
are included for verification (line~\ref{alg:vgrd_ed}). 

\begin{algorithm}[t]
	\caption{\textsc{\small Edge-Greedy} \small (\TotalUni{})}\label{alg:egreedy}
	\begin{algorithmic}[1]
		\renewcommand{\algorithmicrequire}{\textbf{Input:}}
		\renewcommand{\algorithmicensure}{\textbf{Output:}}
		\Require
		\Statex - Exchange graph $\Gcal = (\Vall, \Eall)$
		\Statex - Communication budget $\kcomm$ and computation budget $\kcomp$ 
		\Statex - $\fe: 2^{\Eall} \to \Rset_{\geq 0}$
		\Ensure
		\Statex - A budget-feasible pair 
		$\Vgrd \subseteq \Vall, \Egrd \subseteq \Eall$. 
		\State {$\Vgrd \leftarrow \varnothing$, 
				$\Egrd \leftarrow \varnothing$}
		\State {\small \textcolor{green!50!black}{// While satisfying comm. and comp. budgets}}
		\While {$|\Vgrd| < \kcomm$ \textbf{and}$|\Egrd| < \kcomp$} \label{alg:egrd_1_start}
		\State {\small \textcolor{green!50!black}{// Select edge based on marginal gain in $\fe$}}
	    \State {$e^\star \leftarrow \argmax_{e \in \Eall\setminus\Egrd}
		      				\fe(\Egrd \cup \{e\})$}
	    \State {$\Egrd \leftarrow \Egrd \cup \{e^\star\}$}
	    \State {\small \textcolor{green!50!black}{// Obtain a vertex cover for $\Egrd$}}
	    \State {\small \textcolor{green!50!black}{// e.g., include one vertex incident to each selected edge}}
	    \State $\Vgrd \leftarrow \textsc{\small VertexCover}(\Egrd)$ \label{alg:egrd_vc}
	    \EndWhile \label{alg:egrd_1_end}
	    \State {\small \textcolor{green!50!black}{// Identify ``comm-free'' edges}}
  		\State {$\Ecal_\text{free} \leftarrow 
  						\edg(\Vgrd) \setminus \Egrd $} \label{alg:egrd_2_start}
  		\State {\small \textcolor{green!50!black}{// Heuristic loop that selects ``comm-free'' edges}}
		\While {$|\Egrd| < \kcomp$}
			\State {\small \textcolor{green!50!black}{// Select edge based on marginal gain in $\fe$}}
			\State {$e^\star \leftarrow \argmax_{e \in \Ecal_\text{free} \setminus \Egrd}
				      				\fe(\Egrd \cup \{e\})$}
			\State {$\Egrd \leftarrow \Egrd \cup \{e^\star\}$}
		\EndWhile \label{alg:egrd_2_end}
		\State \Return $\Vgrd, \Egrd$
	\end{algorithmic}
\end{algorithm}

\begin{algorithm}[t]
	\caption{\textsc{\small Vertex-Greedy} \small (\TotalUni{})}\label{alg:vgreedy}
	\begin{algorithmic}[1]
		\renewcommand{\algorithmicrequire}{\textbf{Input:}}
		\renewcommand{\algorithmicensure}{\textbf{Output:}}
		\Require
		\Statex - Exchange graph $\Gcal = (\Vall, \Eall)$
		\Statex - Communication budget $\kcomm$ and computation budget $\kcomp$
		\Statex - $\fe: 2^{\Eall} \to \Rset_{\geq 0}$ and $\fv_\text{com}$ as defined in
		\eqref{eq:fvdef}
		\Ensure
		\Statex - A budget-feasible pair 
		$\Vgrd \subseteq \Vall, \Egrd \subseteq \Eall$. 
		\State {$\Vgrd \leftarrow \varnothing$, 
		$\Egrd \leftarrow \varnothing$}
		\State {\small \textcolor{green!50!black}{// Greedy loop}}
		\While {\textbf{true}} \label{alg:vgrd_start}
			\State {\small \textcolor{green!50!black}{// Identify remaining feasible vertices}}
			\State {$\Vcal_\text{feas} \leftarrow \{v \notin \Vgrd: |\Vgrd \cup \{v\}| \leq b, |\Egrd \cup \edg(v)| \leq k\}$.} 
			\State {\small \textcolor{green!50!black}{// Terminate if there is no more candidate}}
			\If {$\Vcal_\text{feas} = \varnothing$}
				\State{\textbf{break}}
			\EndIf
			\State {\small \textcolor{green!50!black}{// Select vertex based on
			  marginal gain in $\fv_\text{com}$}}
			\State {$v^\star \leftarrow \argmax_{v \in
			  \Vcal_\text{feas}} \fv_\text{com}(\Vgrd \cup \{v\})$} 
			\State {\small \textcolor{green!50!black}{// Include all edges incident to $\Vgrd$}} 
					\State $\Egrd \leftarrow \edg(\Vgrd)$ \label{alg:vgrd_ed}
		\EndWhile \label{alg:vgrd_end}
		\State \Return $\Vgrd, \Egrd$
	\end{algorithmic}
\end{algorithm}

Now let \textsc{\small Submodular-Greedy} be the algorithm according to which one
runs both \textsc{\small Edge-Greedy} and \textsc{\small Vertex-Greedy} and returns the
best solution among the two. 
Note that a na\"{i}ve implementation of \textsc{\small Submodular-Greedy} 
requires $O(b \cdot |\Vall| + k \cdot |\Eall|)$ evaluations of the objective.
In practice, the cubic complexity of evaluating the D-criterion or tree
connectivity in the total of number of poses can be avoided by leveraging the sparse structure of the global pose graph.
This complexity can be further reduced by reusing Cholesky factors
in each round and utilizing rank-one updates; see \citep{khosoussi2019reliable}.
As mentioned before, the number of evaluations can be reduced significantly by using the lazy
greedy method \citep{minoux1978accelerated,krauseSurvey}.
The following theorem provides a performance guarantee for \textsc{\small Submodular-Greedy}
in terms of $b$, $k$, and the maximum degree $\Delta$.
\begin{theorem}
  \normalfont
  Define $\gamma \triangleq \max\,\{b/k,\lfloor k/\Delta \rfloor/b\}$
  and let 
	$\alpha(b,k,\Delta) \triangleq 1-\exp\big(-\min\,\{1,\gamma\}\big)$ for a
	given instance of Problem~\ref{prob:codesign}.
  \textsc{\small Submodular-Greedy} is an $\alpha(b,k,\Delta)$-approximation algorithm
  for this problem.
  \label{thm:submodular_apx}
\end{theorem}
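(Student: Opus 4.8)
The plan is to analyze the two constituent algorithms separately against the optimum $\OPT$ of Problem~\ref{prob:codesign}, and show that in every regime at least one of them achieves the claimed ratio. Let $\OPT$ denote the optimal objective value for the given instance. The key observation is that $\gamma = \max\{b/k, \lfloor k/\Delta\rfloor/b\}$ splits the analysis: when $b \geq k$ the first term dominates and \textsc{\small Edge-Greedy} is in (or near) its favorable regime; when $k$ is large relative to $b\Delta$ the second term dominates and \textsc{\small Vertex-Greedy} is favorable. Since \textsc{\small Submodular-Greedy} returns the better of the two, it suffices to lower-bound $\max\{\textsc{Edge},\textsc{Vertex}\}$ by $\alpha(b,k,\Delta)\cdot\OPT$.

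First I would handle \textsc{\small Edge-Greedy}. Its greedy loop maximizes the NMS objective $\fe$ over edges, terminating when either $|\Egrd|$ reaches $k$ or $|\Vgrd|$ reaches $b$. If it runs for at least $m$ successful iterations before stalling, the standard greedy analysis of Nemhauser--Wolsey--Fisher gives $\fe(\Egrd) \geq (1 - (1-1/k)^{m})\,\OPT \geq (1 - e^{-m/k})\,\OPT$, using monotonicity and submodularity of $\fe$ together with the fact that any feasible solution for the cardinality-$k$ relaxation upper-bounds by $\OPT$. The crux is to lower-bound the number of edges selected before the communication budget $b$ halts the loop: each newly selected edge forces adding at most one new vertex to the cover, so after adding $t$ edges we have $|\Vgrd| \leq t$, meaning the loop survives until at least $\min\{k, b\}$ edges are chosen. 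When $b \geq k$ this yields the full $1-1/e$; when $b < k$ it yields $1 - e^{-b/k}$, which matches $1-\exp(-\min\{1,b/k\})$.

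Next I would handle \textsc{\small Vertex-Greedy}, which greedily maximizes the NMS set function $\fv_\text{com}$ of Theorem~\ref{thm:fe2fv} over vertices under the feasibility test $|\Egrd \cup \edg(v)| \leq k$ and $|\Vgrd| \leq b$. Each selected vertex adds at most $\Delta$ edges, so as long as $|\Egrd| + \Delta \leq k$ a vertex remains addable on the computation side; combined with $|\Vgrd| \leq b$, the loop runs for at least $\min\{b, \lfloor k/\Delta\rfloor\}$ iterations. Applying the same greedy bound to the NMS function $\fv_\text{com}$ against its own cardinality-$b$ optimum---which dominates $\OPT$ since any vertex set covering the optimal edge set is feasible for the inner problem---gives $\fv_\text{com}(\Vgrd) \geq (1 - e^{-\min\{b,\lfloor k/\Delta\rfloor\}/b})\,\OPT = (1 - \exp(-\min\{1,\lfloor k/\Delta\rfloor/b\}))\,\OPT$. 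Taking the maximum of the two guarantees and invoking $\min\{1,\gamma\} = \min\{1,\max\{b/k,\lfloor k/\Delta\rfloor/b\}\}$ delivers $\alpha(b,k,\Delta)$.

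\textbf{The main obstacle} I anticipate is the bookkeeping that certifies each greedy run executes enough iterations \emph{before} hitting its binding budget, and in particular arguing cleanly that the intermediate greedy value after $m$ steps still admits the $(1-e^{-m/k})$-type bound relative to the \emph{global} $\OPT$ rather than a restricted optimum. Establishing $\max\{\textsc{Edge-Greedy}\text{'s value},\textsc{Vertex-Greedy}\text{'s value}\} \geq \alpha\cdot\OPT$ requires showing both algorithms' truncated runs are comparable to $\OPT$, which hinges on the fact that the optimal solution is feasible for each relaxed (single-budget) subproblem used as the comparison point; verifying this feasibility in each regime is the delicate step.
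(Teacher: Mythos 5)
Your proposal is correct and follows essentially the same route as the paper: it lower-bounds the number of iterations each greedy loop completes before a budget binds ($\min\{b,k\}$ edges for \textsc{\small Edge-Greedy}, $\min\{b,\lfloor k/\Delta\rfloor\}$ vertices for \textsc{\small Vertex-Greedy}), applies the truncated Nemhauser--Wolsey--Fisher bound against the single-budget relaxation whose optimum dominates $\text{OPT}$, and takes the maximum of the two guarantees. The feasibility concern you flag as the ``main obstacle'' is resolved exactly as you suggest, and is handled in the paper by Lemmas~\ref{lem:egrd_apx} and \ref{lem:vgrd_apx}.
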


The above performance guarantee $\alpha(b,k,\Delta)$
reflects the complementary nature of \textsc{\small Edge-Greedy} and \textsc{\small
Vertex-Greedy}.
Intuitively, \textsc{\small Edge-Greedy} (resp., \textsc{\small
Vertex-Greedy}) is expected to perform well when computation (resp., communication)
budget is scarce compared to communication (resp., computation) budget. 
To gain more intuition, let us approximate
$\lfloor k/\Delta \rfloor$ in $\alpha(b,k,\Delta)$ with $k/\Delta$.\footnote{This is a reasonable
approximation when, e.g., $b$ is sufficiently large ($b \geq b_0$) since $k/(\Delta b) - 1/b
< {\lfloor k/\Delta \rfloor}/{b} \leq k/( \Delta b)$ and thus the introduced
error in the exponent will be at most $1/b_0$.}
With this simplification, the performance guarantee can be represented as
$\tilde{\alpha}(\kappa,\Delta)$ where $\kappa \triangleq \kcomm/\kcomp$ is the
budgets ratio.
Figure~\ref{fig:sgrd_apx_kappa} shows $\tilde{\alpha}(\kappa,\Delta)$
as a function of $\kappa$
for different values of $\Delta$.

\begin{figure}[t]
	\centering
	\begin{subfigure}[t]{0.29\textwidth}
		\centering
		\includegraphics[width=\textwidth]{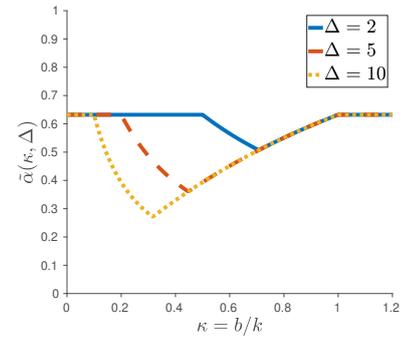}
		\caption{\small $\tilde{\alpha}(\kappa, \Delta)$}
		\label{fig:sgrd_apx_kappa}
	\end{subfigure} \vspace{0.2cm}
	\\
	\begin{subfigure}[t]{0.30\textwidth}
		\centering
		\includegraphics[width=\textwidth]{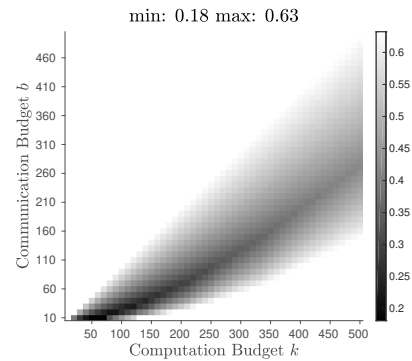}
		\caption{\small KITTI 00 ($\Delta=41$)}
		\label{fig:sgrd_apx_kitti}
	\end{subfigure} \vspace{0.2cm}
	\\
	\begin{subfigure}[t]{0.30\textwidth}
		\centering
		\includegraphics[width=\textwidth]{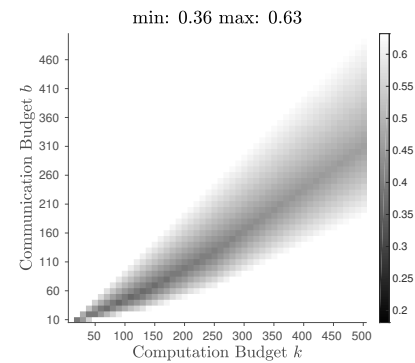}
		\caption{\small KITTI 00 ($\Delta=5$)} 
		\label{fig:sgrd_apx_kitti_maxdeg_5}
	\end{subfigure}
	\caption{\small 
	\textbf{(a)} The approximate \textsc{\small Submodular-Greedy} performance guarantee as a function of $\kappa \triangleq \kcomm / \kcomp$ 
		with different $\Delta$.
	\textbf{(b)} A posteiori performance guarantee in
	an example exchange graph with $\Delta=41$, which varies between $0.18$ and $0.63$.
	\textbf{(c)} A posteriori performance guarantees with the maximum degree capped at $\Delta = 5$.
	The performance guarantee now
	varies between $0.36$ and $0.63$.
  }
	\label{fig:sgrd_apx}
\end{figure} 

We note that for a specific instance of the exchange graph
$\Gcal$, the actual performance guarantee of \textsc{\small Submodular-Greedy} can be higher than $\alpha(b,k,\Delta)$.
This potentially stronger performance guarantee can be
computed \emph{post-hoc}, i.e., after running \textsc{\small Submodular-Greedy} on
the given instance of $\Gcal$---see Lemmas~\ref{lem:egrd_apx} and \ref{lem:vgrd_apx} in Appendix~\ref{pf:submodular_apx}.
As an example, Figure~\ref{fig:sgrd_apx_kitti} shows the 
\emph{post-hoc} performance guarantee on 
an example exchange graph generated from the KITTI~00 dataset (Section~\ref{sec:experiments}).
As we vary the combination of budgets $(\kcomm, \kcomp)$,
the actual performance guarantees vary from 
$0.18$ to $0.63$.
In addition, Theorem~\ref{thm:submodular_apx} 
indicates that reducing $\Delta$ enhances the
performance guarantee $\alpha(b,k,\Delta)$.
This is demonstrated in
Figure~\ref{fig:sgrd_apx_kitti_maxdeg_5}.
After capping $\Delta$ at $5$,
the approximation factors now vary from $0.36$ to $0.63$.
In practice, a major cause of large $\Delta$ is high 
\emph{uncertainty} in the initial set of potential inter-robot loop closures;
e.g., in situations with high perceptual ambiguity, an observation could
potentially be matched to many other observations during the initial phase of
metadata exchange. 
This issue can be mitigated by bounding $\Delta$ or increasing the \emph{fidelity} of metadata.

\begin{remark}
\normalfont
It is worth noting that
$\alpha(b,k,\Delta)$ can be bounded from below by
a function of $\Delta$, i.e., independent of $b$ and $k$. More precisely,
it can be shown that $\alpha(b,k,\Delta) \geq 1 - \exp(-c(\Delta))$ where $ 1/(\Delta+1) \leq c(\Delta) \leq
1/\sqrt{\Delta}$.
This implies that
for a bounded $\Delta$ regime, \textsc{\small Submodular-Greedy}
can be viewed
as a constant-factor
approximation algorithm for Problem~\ref{prob:codesign}.
\end{remark}

\section{Experiments}
\label{sec:experiments}

\begin{figure}[t]
\centering
\begin{subfigure}[t]{0.25\textwidth}
\centering
\includegraphics[width=\textwidth]{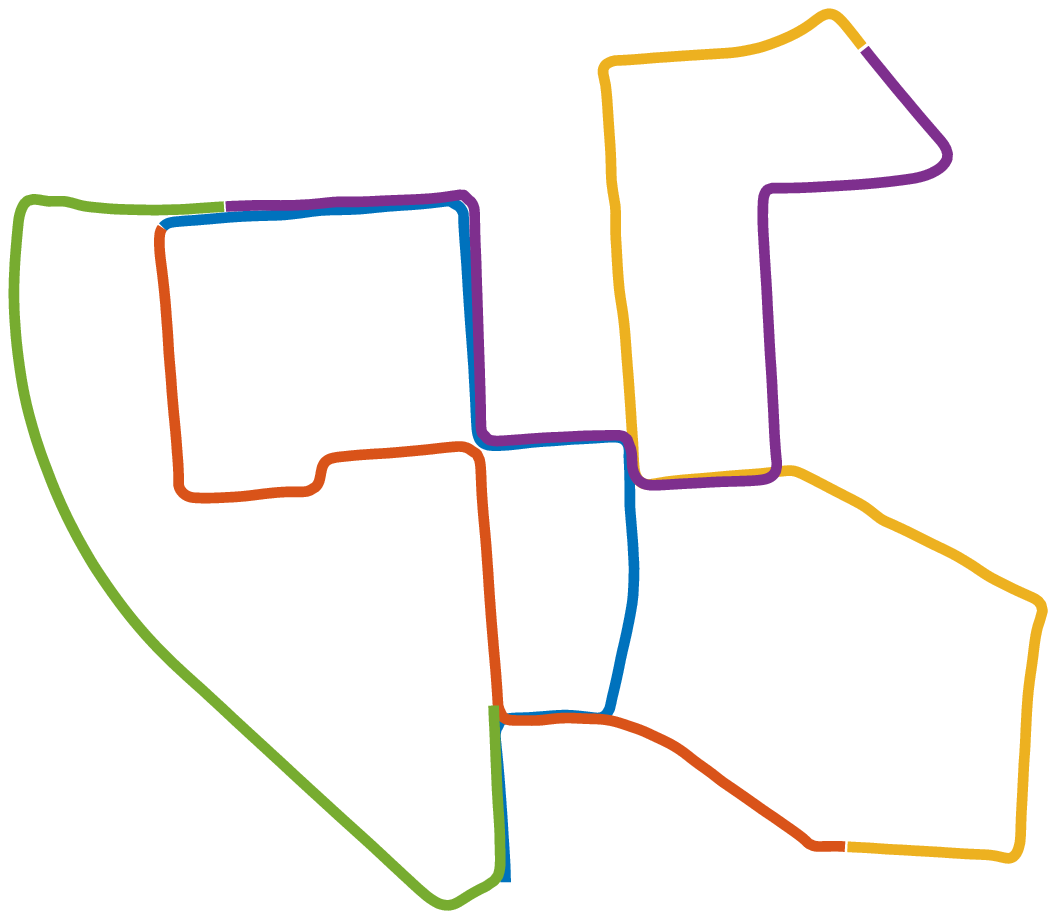}
\caption{\small KITTI 00}
\label{fig:KITTI_00_base_graph}
\end{subfigure}\vspace{0.5cm}
\\
\begin{subfigure}[t]{0.25\textwidth}
\centering
\includegraphics[width=\textwidth]{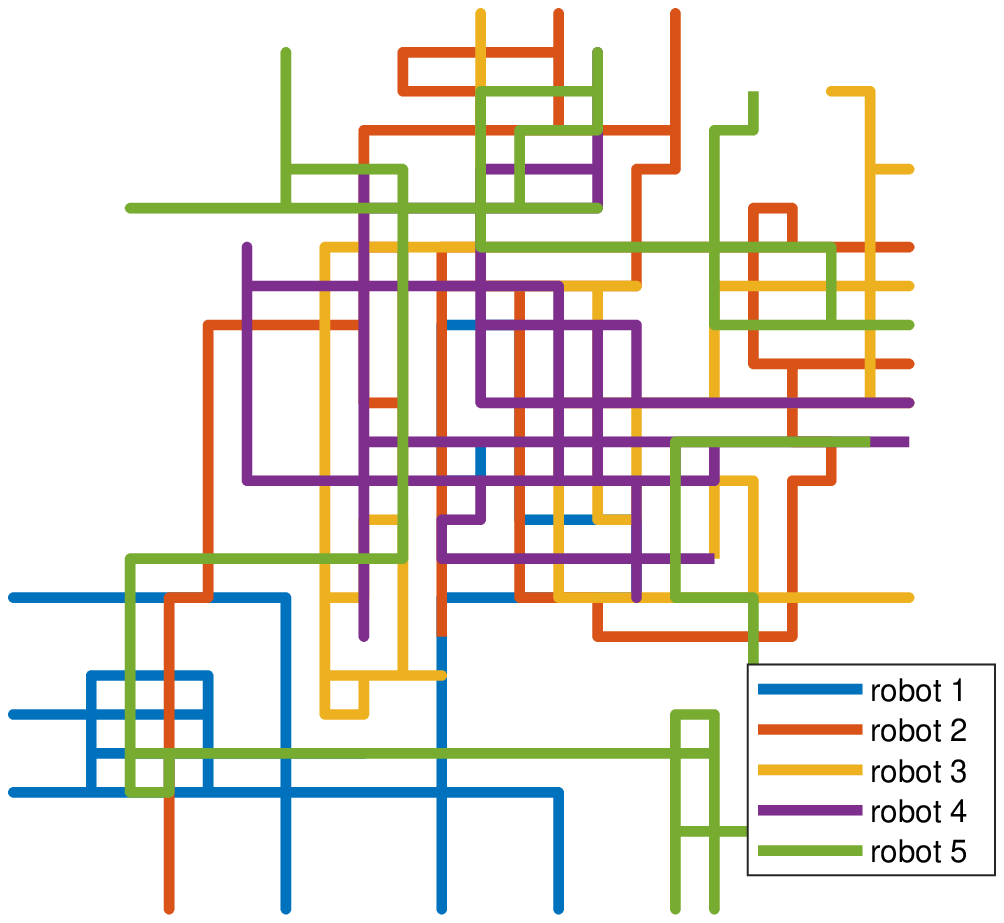}
\caption{\small Simulation}
\label{fig:Atlas_base_graph}
\end{subfigure}
\caption{\small\textbf{(a)} KITTI 00 \textbf{(b)} 2D simulation. Each figure shows trajectories of five robots. Before inter-robot data exchange, trajectories are estimated purely using prior beliefs and odometry measurements, hence the drift displayed in the KITTI trajectories. The 
simulation trajectories shown are the exact ground truth.}
\label{fig:base_graphs}
\end{figure}
We use
KITTI odometry sequence~00 \citep{Geiger2012CVPR} 
and a synthetic Manhattan-like dataset \citep{tian18}
for evaluating
the proposed CSLAM front-end pipeline.
Both datasets are divided into multiple segments to simulate individual robots' trajectories (Figure~\ref{fig:base_graphs}).
For KITTI~00,
visual odometry and geometric verification are performed using a modified version of ORB-SLAM2 \citep{murORB2}.
Trajectories are projected to 2D in order to evaluate
the tree connectivity objective $f_\text{WST}$ \eqref{eq:WST}.
For each image in the sequence,
a NetVLAD vector is extracted as metadata using the pre-trained model provided by \cite{arandjelovic2016netvlad}. 
Probability for each potential loop closure is obtained using
the logistic regression model described in Section~\ref{sec:exchange}.
The model itself is learned offline using KITTI sequence~06 in MATLAB.

In addition to KITTI, we also make use of a synthetic Manhattan-like 
dataset
generated using the 2D 
simulator of g2o \citep{kummerle2011g}.
As opposed to real-world datasets, 
in synthetic datasets we have access to \emph{unbiased}
probability estimates for the potential loop closures.
This is done by simulating each potential loop closure as a random 
Bernoulli variable with a corresponding probability $p$,
where $p$ is drawn randomly from the uniform distribution $\Ucal(0,1)$.
Apart from this, 
synthetic datatsets also have the advantage of being flexible in size
and providing actual ground truths for trajectories and loop closures.
This allows us to evaluate system
performance in terms of the absolute trajectory error (ATE) \citep{SturmATE}.

With the default configurations of ORB-SLAM2, each input image
contains about $2000$ keypoints, where each keypoint (consisting of keypoint coordinate, 3D landmark position, and an ORB descriptor) uses 52 bytes of data.
Thus, a single keyframe translates to about $104$KB of communication payload during geometric verification.
We observed that the variation in the number of keypoints in keyframes is
insignificant, and therefore focus on 
\TotalUni{} and \IndivUni{} communication budget models in our experiments (i.e.,
$w(v) = 1$ for all $v \in \Vall$).

\subsection{Certifying Near-Optimality via Convex Relaxation}
\label{sec:cvx}

To empirically evaluate the performance of the proposed algorithms, one would
ideally need the optimal value (OPT) of Problem~\ref{prob:codesign}. 
However, computing OPT by brute force is impractical even for moderate
problem sizes. 
We therefore compute an upper bound UPT $\geq$ OPT by solving the natural convex
relaxation of Problem~\ref{prob:codesign},
and use UPT as a surrogate for OPT. 
Comparing with UPT provides a post-hoc certificate of near-optimality for solutions returned by the proposed algorithms.
Let $\ppp \triangleq [\pi_1,\dots,\pi_{n}]^\top$ and
$\bell  \triangleq [\ell_1,\dots,\ell_m]^\top$ 
be indicator variables corresponding to vertices (broadcasting an observation)
and edges (verifying a potential loop closure) of the exchange graph $\Gcal$, respectively.
Furthermore, let $\AAA \in \{0,1\}^{n \times m}$ be the undirected incidence matrix of $\Gcal$.
Now Problem~\ref{prob:codesign} can be easily expressed in terms of $\ppp$ and $\boldsymbol{\ell}$.
For example, for modular objectives (Section~\ref{sec:modular}) under the \TotalUni{} communication model, 
Problem~\ref{prob:codesign} is equivalent to solving the following integer linear program
(ILP):

\begin{equation}
	\normalfont
	\begin{aligned}
		& \underset{\ppp,\bell}{\text{maximize}}
		&  \sum_{e \in \Eall} p(e) \, \ell_e \\
		& \text{subject to}
		&	
		\mathbf{1}^{\hspace{-0.05cm}\top}\ppp\leq \kcomm, \\
        &&
		\mathbf{1}^{\hspace{-0.05cm}\top}\bell\leq \kcomp, \\
&& \mathbf{A}^{\hspace{-0.1cm}\top}\ppp \geq \bell,\\
&& \ppp \in \{0,1\}^{n},\\
&& \bell \in \{0,1\}^{m}.
	\end{aligned}
	\label{eq:cvxrelaxation}
\end{equation}
where, with a minor abuse of notation, $\ell_e$ denotes indicator variable corresponding to edge $e$.
The first constraint in \eqref{eq:cvxrelaxation} enforces the communication budget, the second
one enforces the computational budget, and the third one ensures that
selected edges are covered by the selected vertices.
Relaxing the last two integer constraints of the above ILP to $\zero \leq \ppp \leq \mathbf{1}$ and
$\zero \leq \bell \leq \mathbf{1}$ 
gives the natural LP relaxation. The optimal value of the
resulting LP therefore gives an upper bound UPT $\geq$ OPT. 
Similarly, Problem~\ref{prob:codesign} with the D-criterion and tree connectivity objectives (Section~\ref{sec:submodular})
can be expressed as integer determinant maximization problems, 
whose natural convex relaxation gives an upper bound on OPT.
For example, in the case of the D-criterion
objective, Problem~\ref{prob:codesign} can be expressed as:
\begin{equation}
	\normalfont
	\begin{aligned}
		& \underset{\ppp,\bell}{\text{maximize}}
  \,\,\, \log\det \Big(\HH_\text{init} + \sum_{e \in \Eall}
  p(e) \, \ell_e \, \HH_e\Big)\\
  & \text{subject to constraints in \eqref{eq:cvxrelaxation}}.
	\end{aligned}
\end{equation}
Applying the same relaxation results in a convex optimization problem---an
instance of the MAXDET problem subject to additional affine (budgets and
covering) constraints in
\eqref{eq:cvxrelaxation} \citep{vandenberghe1998determinant}.
In our experiments, 
all LPs and ILPs are solved using the built-in solvers in
MATLAB, while MAXDET problems are modeled using the YALMIP toolbox
\citep{Lofberg2004} and solved using SDPT3 \citep{Toh99sdpt3} in MATLAB.

\subsection{Metadata Analysis}
\label{sec:metadata_experiments}
\begin{figure}[t]
\centering
\begin{subfigure}[t]{0.33\textwidth}
\centering
\includegraphics[width=\textwidth]{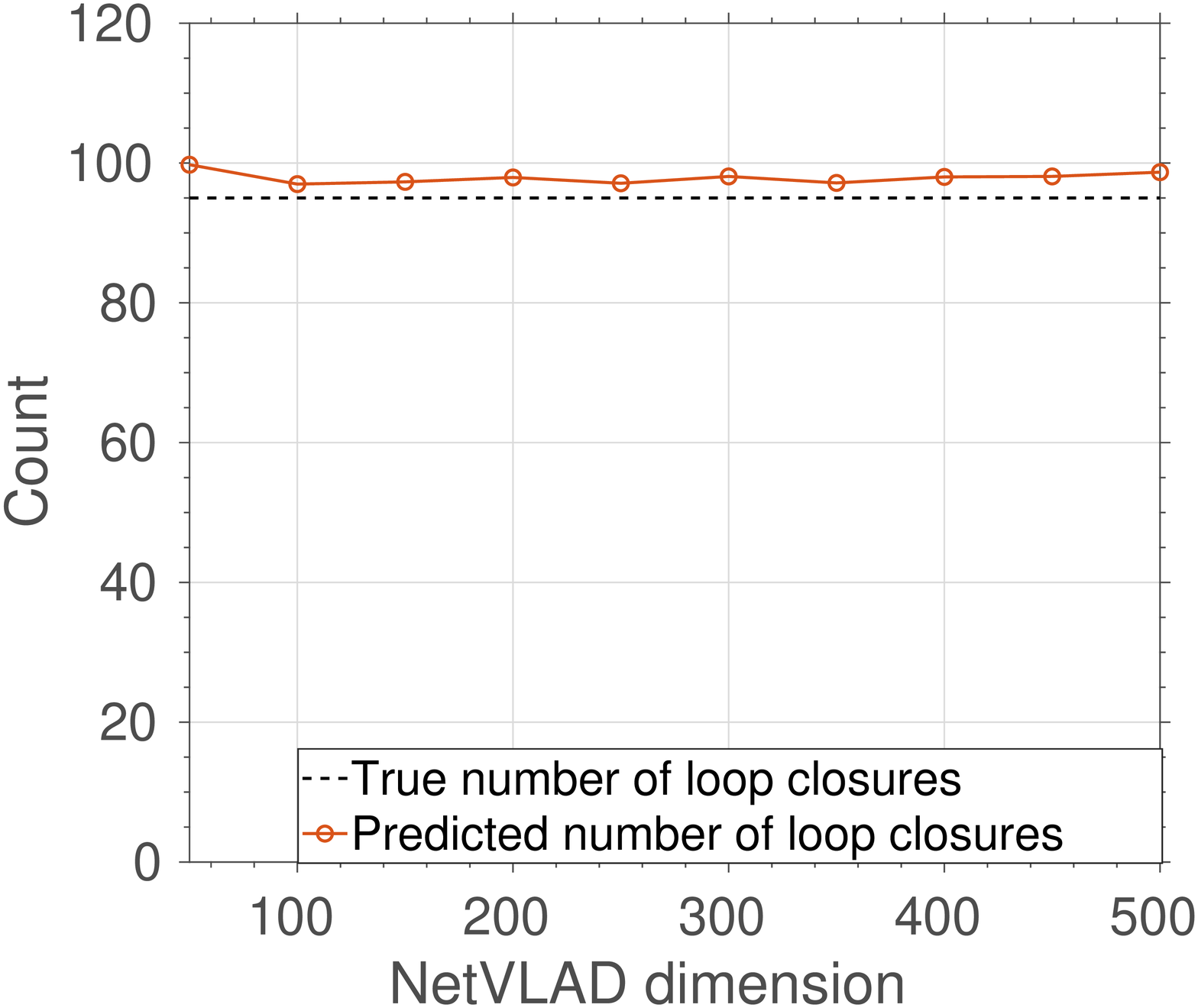}
\caption{Predicted number of loop closures (validation)}
\label{fig:metadata_nlc_val}
\end{subfigure}
\\
\begin{subfigure}[t]{0.33\textwidth}
\centering
\includegraphics[width=\textwidth]{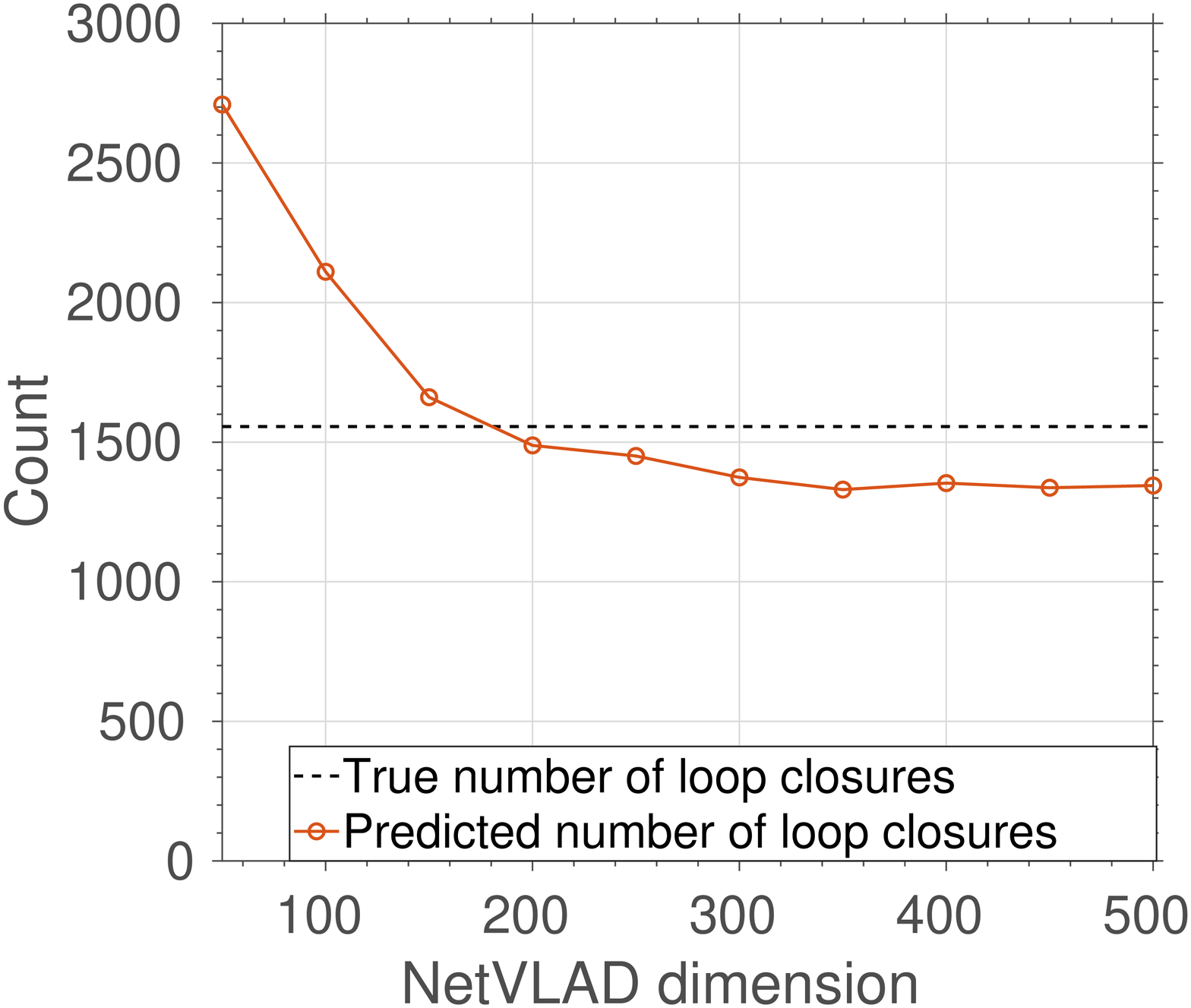}
\caption{Predicted number of loop closures (test)}
\label{fig:metadata_nlc_test}
\end{subfigure}
\caption{\small 
Number of true loop closures predicted by the logistic regression model
in \textbf{(a)} validation and \textbf{(b)} test set, 
as a function of NetVLAD dimension.
Blue line shows the true number of inter-robot loop closures in each dataset. 
}
\label{fig:metadata_nlc}
\end{figure}

\begin{figure}[t]
\centering
\begin{subfigure}[t]{0.33\textwidth}
\centering
\includegraphics[width=\textwidth]{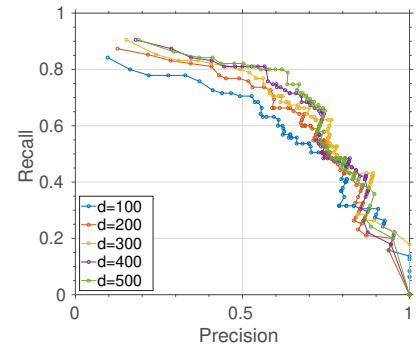}
\caption{Precision-recall (validation)}
\label{fig:metadata_pr_val}
\end{subfigure}
\\
\begin{subfigure}[t]{0.33\textwidth}
\centering
\includegraphics[width=\textwidth]{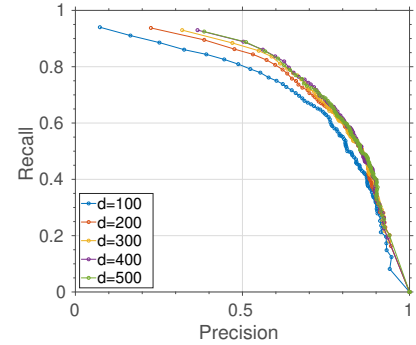}
\caption{Precision-recall (test)}
\label{fig:metadata_pr_test}
\end{subfigure}
\caption{\small
Precision-recall curves in \textbf{(a)} validation and \textbf{(b)} test set.
Each curve is generated using with a fixed NetVLAD dimension and a varying probability threshold
that classifies
if a pair of images is a loop closure.}
\label{fig:metadata_pr}
\end{figure}

In this section, we describe the training and evaluation process for
the logistic regression model proposed in Section~\ref{sec:exchange}. 
For training, 
KITTI~06 is divided into a training set and a validation set,
where the training set is used to learn the parameters of the logistic regression 
classifier, 
and the validation set is used to determine the dimension of NetVLAD
vectors.
The entire KITTI~00 sequence is used as the test set.

The main hyperparameter studied in our experiments is $\ndim$.
A good value for $\ndim$ should strike a balance between 
size and 
quality of the learned probabilities. 
For this, we conduct two sets of quantitative analyses.
First,
we examine the expected number of loop closures in the exchange graph $\sum_{e
\in \Eall} p(e)$ predicted by the learned model 
as $\ndim$ varies from $50$ to $500$. 
In this experiment, no probability threshold is imposed when forming the exchange graph, i.e., $\pthresh = 0$.
Consequently, $\Eall$ contains every inter-robot edges as a potential loop closure.
For each value of $\ndim$, 
the model is re-trained on the training set
and re-evaluated on the validation and test sets.
The results are shown in Figure~\ref{fig:metadata_nlc}.
For reference, we also plot the ground truth number of true loop closures in both 
datasets (black dashed line). 
On the validation set,
the learned models consistently exhibit good performance
with a maximum prediction error of 4.76 across all values of $\ndim$.
Performance on the test set displays a more interesting variation.
For small values of $\ndim$,
the model significantly overestimates the 
number of loop closures. 
This suggests that reducing $\ndim$ too much
leads to degraded accuracy in the predicted probabilities.
As $\ndim$ becomes larger, the predicted value stabilizes
and yields an average error of $219$,
revealing the limit of the learned model in generalizing 
to new datasets. 

Second, 
Figure~\ref{fig:metadata_pr}
shows the precision-recall curves corresponding to
selected values of $\ndim$ on the validation and test set.
Each curve is generated with a fixed $\ndim$
and varying probability threshold
that classifies if a pair of images constitutes an actual loop closure.
Precision is computed as the percentage of true loop closures within 
the set of all predicted loop closures.
Recall is computed as the percentage of discovered loop closures 
within the set of all loop closures in the dataset.
As expected, increasing $\ndim$ improves both precision and recall 
as the metadata is able to capture more information. 
Nevertheless, as $\ndim$ increases,
the marginal improvement in precision and recall decreases.
Based on these results,
we fix $\ndim = 200$ in all subsequent experiments.
With this value, a single NetVLAD vector results in around
$0.8$KB of data payload, 
which is still very lightweight 
compared to the data payload of a full observation ($104$KB).

\subsection{Results with Modular Objectives}
\begin{figure}[t]
\begin{subfigure}[t]{0.23\textwidth}
\centering
\includegraphics[width=\textwidth]{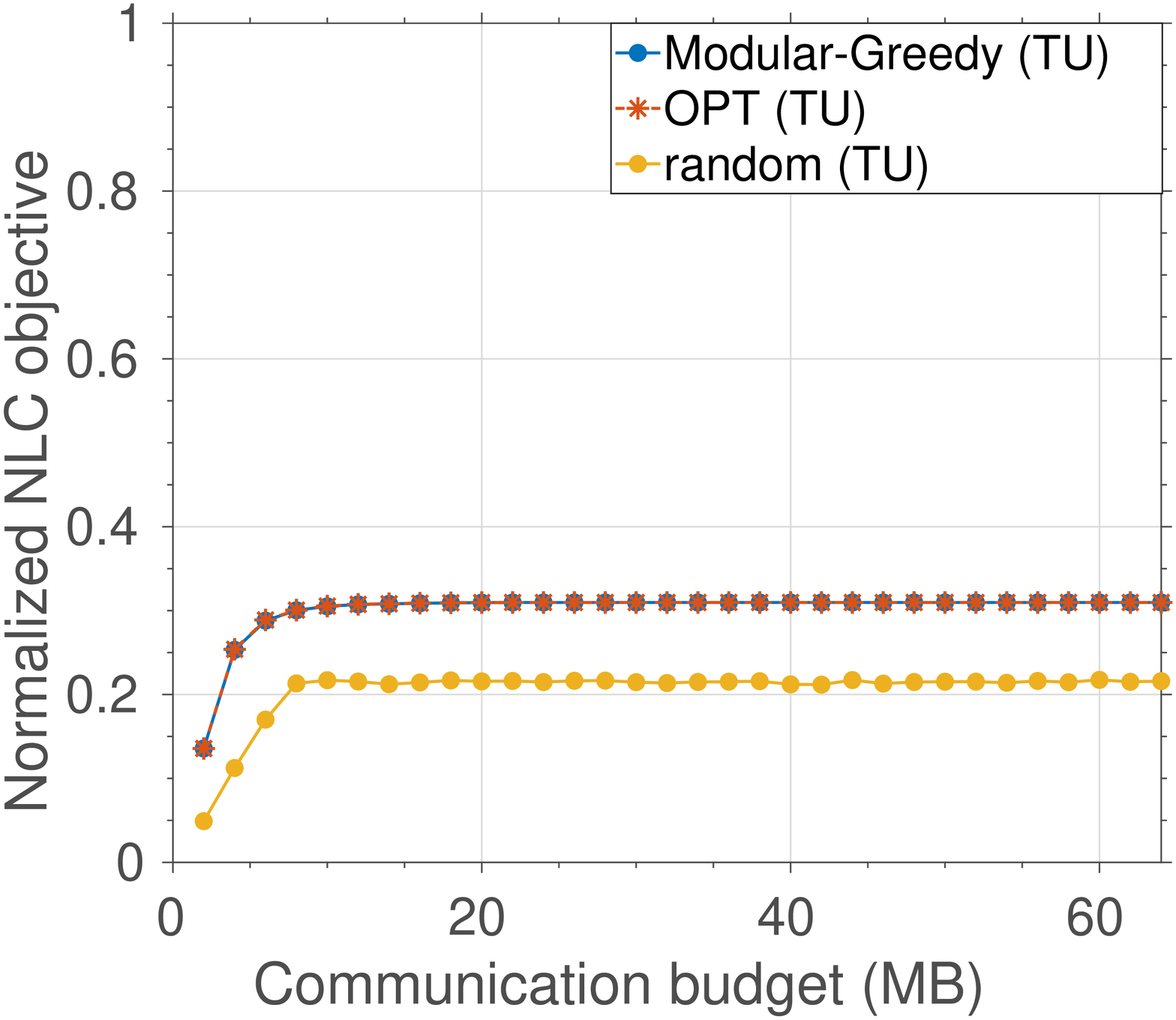}
\caption{KITTI00 $k=300$ (TU)}
\label{fig:KITTI00_mgreedy_k300}
\end{subfigure}
\hfill
\begin{subfigure}[t]{0.23\textwidth}
\centering
\includegraphics[width=\textwidth]{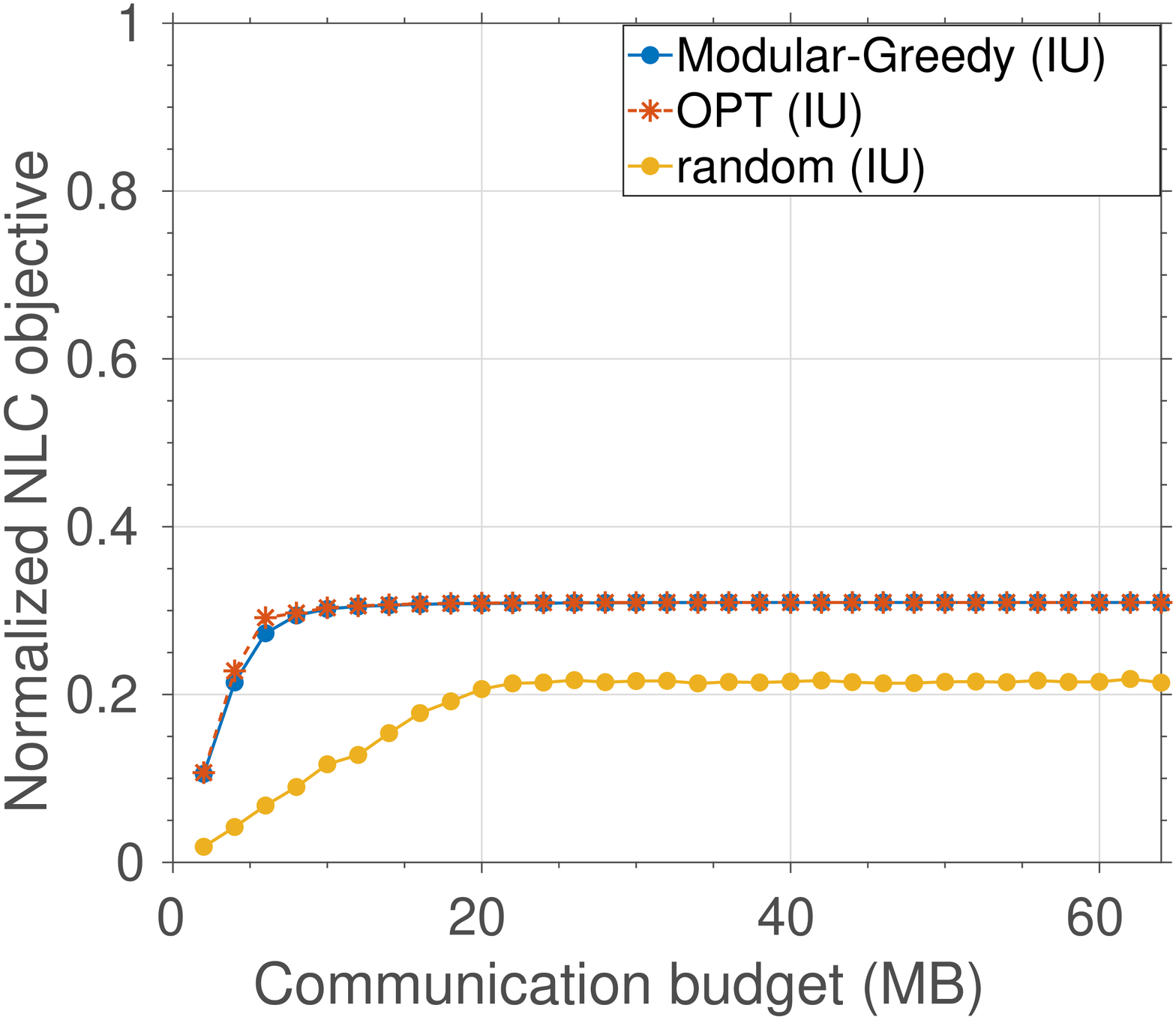}
\caption{KITTI00 $k=300$ (IU)}
\label{fig:KITTI00_mgreedy_IU_k300}
\end{subfigure}
\\
\begin{subfigure}[t]{0.23\textwidth}
\centering
\includegraphics[width=\textwidth]{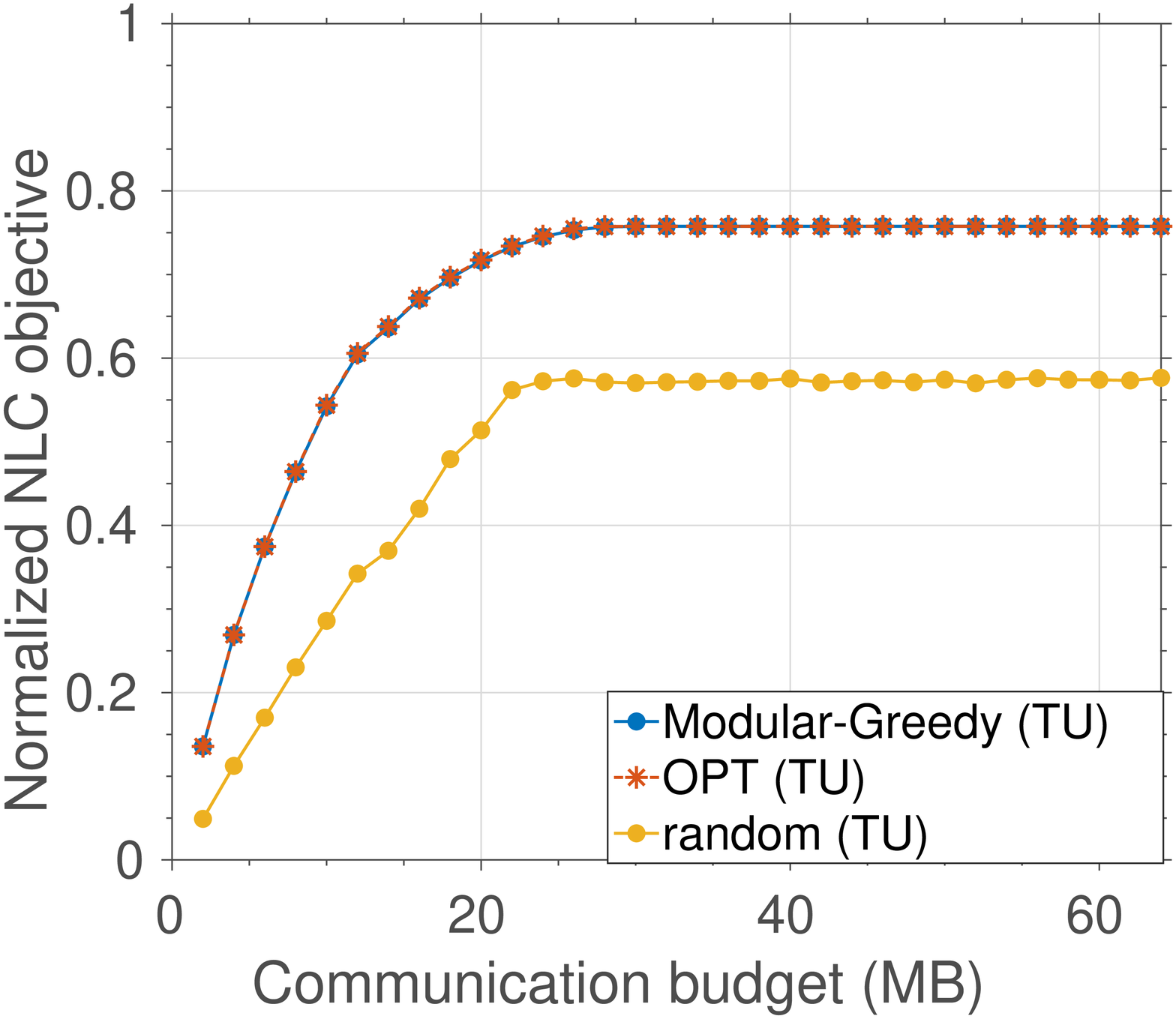}
\caption{KITTI00 $k=800$ (TU)}
\label{fig:KITTI00_mgreedy_k800}
\end{subfigure}
\hfill
\begin{subfigure}[t]{0.23\textwidth}
\centering
\includegraphics[width=\textwidth]{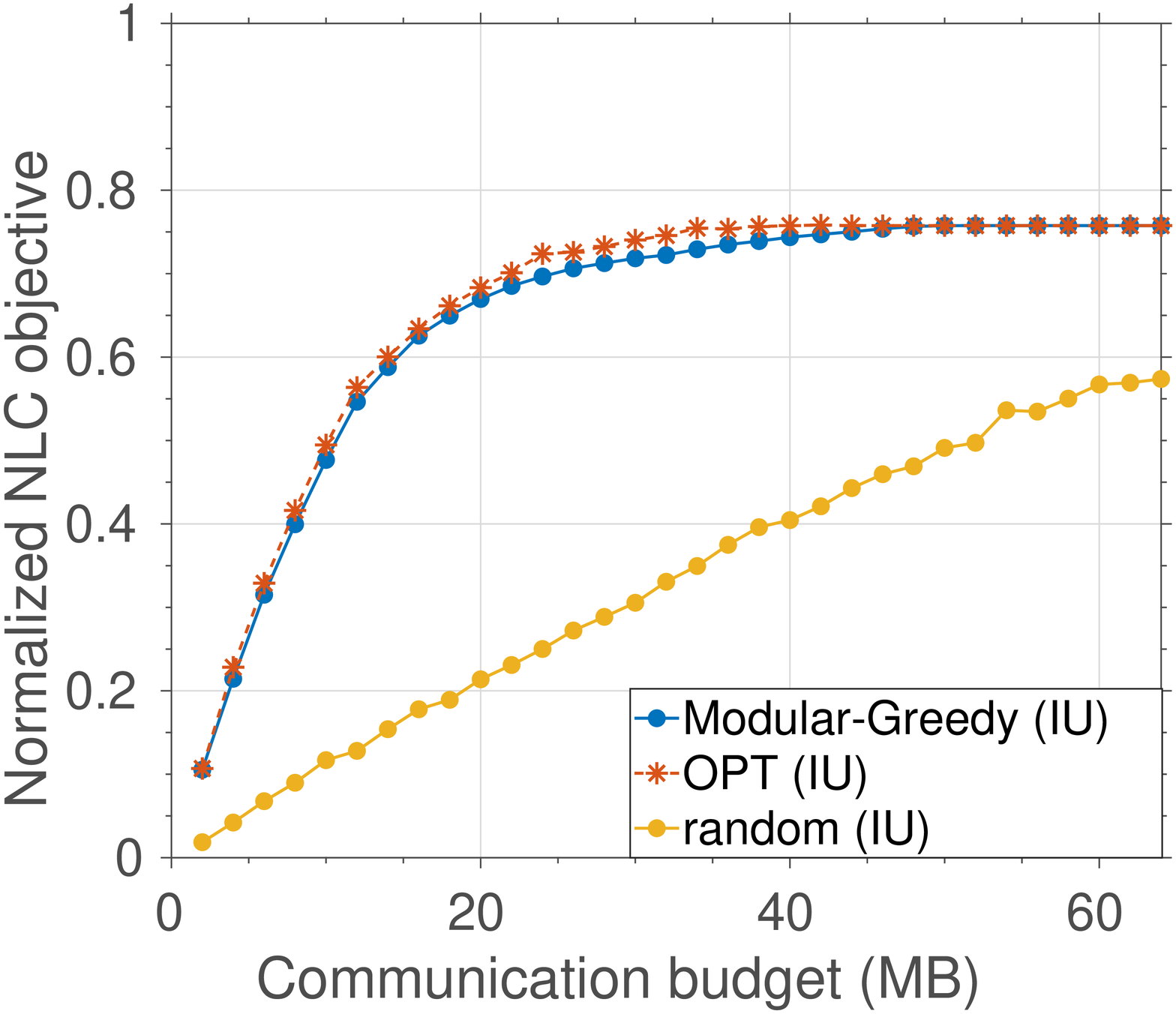}
\caption{KITTI00 $k=800$ (IU)}
\label{fig:KITTI00_mgreedy_IU_k800}
\end{subfigure}
\\
\begin{subfigure}[t]{0.23\textwidth}
\centering
\includegraphics[width=\textwidth]{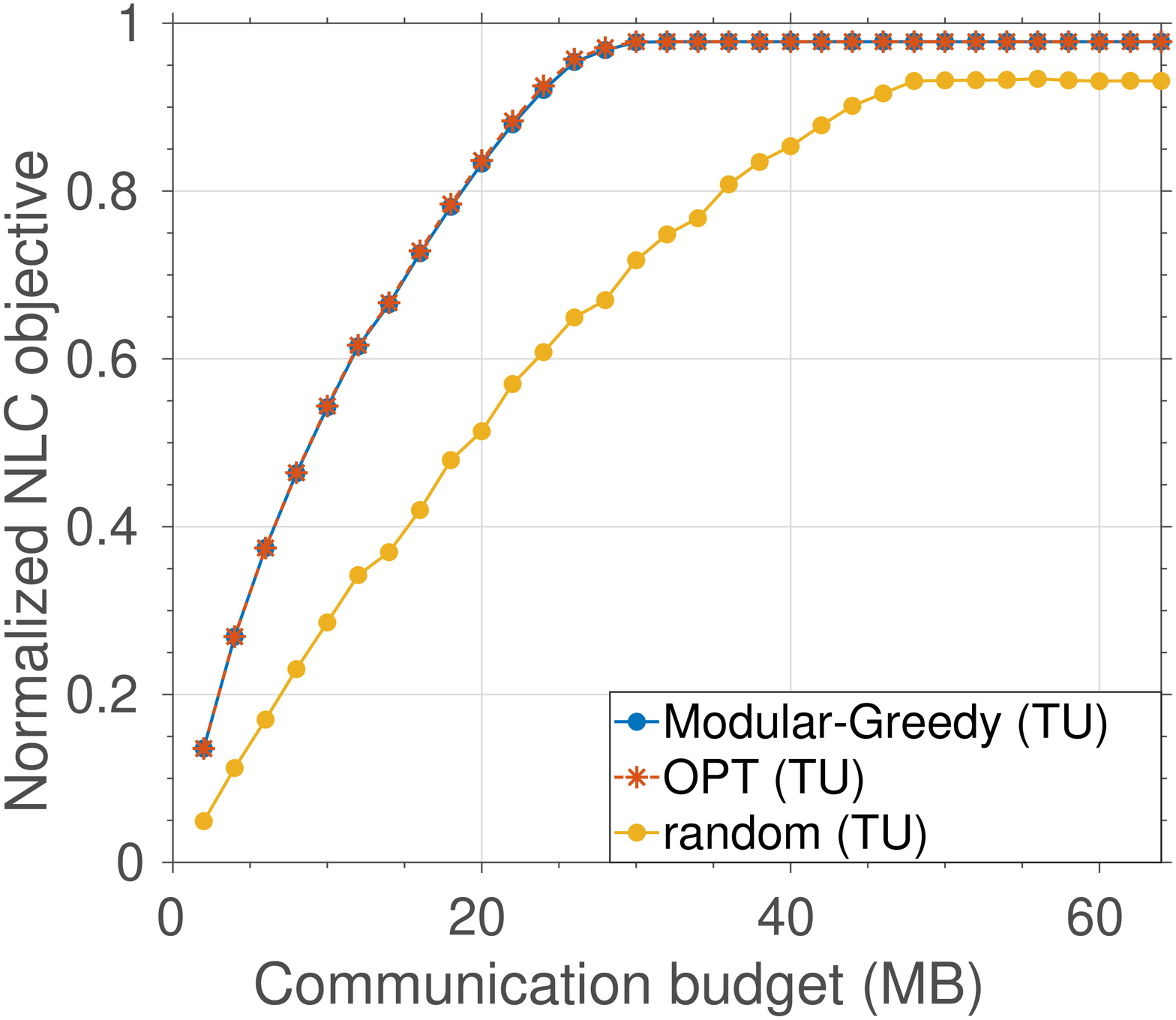}
\caption{KITTI00 $k=1300$ (TU)}
\label{fig:KITTI00_mgreedy_k1300}
\end{subfigure}
\hfill
\begin{subfigure}[t]{0.23\textwidth}
\centering
\includegraphics[width=\textwidth]{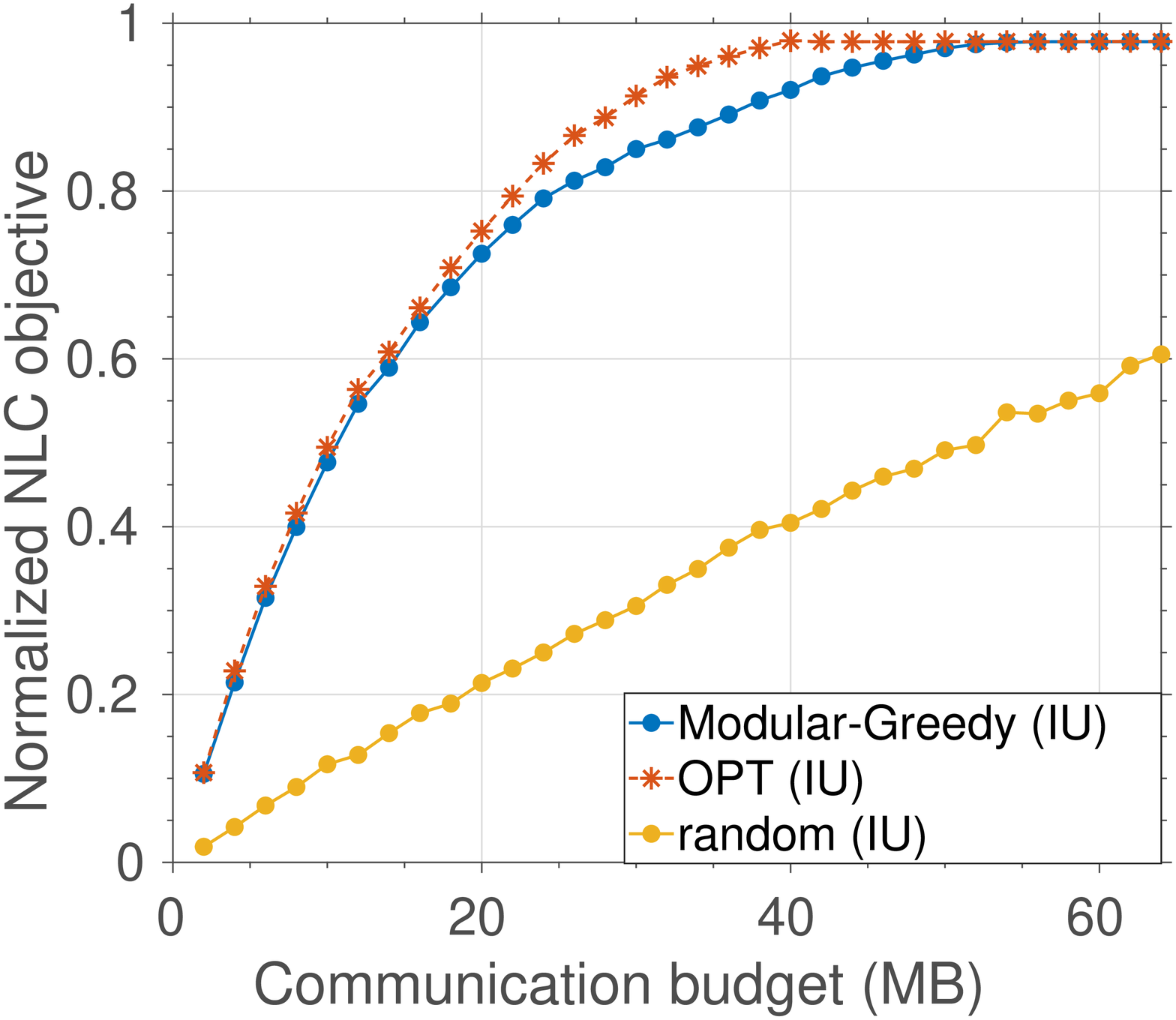}
\caption{KITTI00 $k=1300$ (IU)}
\label{fig:KITTI00_mgreedy_IU_k1300}
\end{subfigure}
\caption{\small Performance of \textsc{\small Modular-Greedy} (Algorithm~\ref{alg:mgreedy})
under the \TotalUni{} (left column) and \IndivUni{} (right column) communication cost models.
Each figure uses a fixed computation budget ($k=300,800,1300$). 
The horizontal axis shows the total communication budget $b$ 
varying from $2$MB to $62$MB.
The vertical axis shows the normalized NLC objective.
For \IndivUni{}, 
each value of $b$ is equally divided among robots to create local communication budgets $b_i$.
The modular greedy solution is compared with the optimal solution from ILP as well as the solution returned by the baseline random algorithm. 
In both \TotalUni{} and \IndivUni{}, \textsc{\small Modular-Greedy} achieves 
near-optimal performance.
\label{fig:KITTI00_mgreedy}
}
\end{figure}

We first evaluate our proposed framework in instances
where the objective function is modular. 
For this, we use the expected number of true loop closures 
$f_\text{NLC}$ \eqref{eq:NLC} as the objective function.
The input exchange graph is generated from the KITTI~00 sequence
with NetVLAD dimension $\ndim = 200$ and
probability threshold $\pthresh = 0.2$.
Under this setting, the exchange graph
contains $1395$ edges, among which $1036$ are real inter-robot loop
closures.
The maximum degree $\Delta$ in this case is $16$.
The na\"{i}ve exchange policy that broadcasts an observation for each potential
loop closure results in $62$MB of total communication.
Note that after dividing KITTI~00 into five robots,
the entire dataset amounts to less than a minute of parallel localization and mapping.
Practical missions with longer durations thus produce significantly larger problem instances.

The first column of Figure~\ref{fig:KITTI00_mgreedy}
shows the performance of the proposed \textsc{\small Modular-Greedy} algorithm (Algorithm~\ref{alg:mgreedy}) under the \TotalUni{}
communication cost model.
In each figure, we use a fixed computation budget $k$
and vary the communication budget $b$ from
$2$MB to $62$MB.
Performance is evaluated in terms of the \emph{normalized} objective, 
i.e., the achieved objective divided by the maximum achievable objective
given infinite budgets. 
In the modular case, we compute the optimal solution (OPT)
by solving the corresponding ILP.
We also compare our results with a baseline algorithm that
randomly selects $b$ vertices and
then selects $k$ edges incident to the selected vertices.
The resulting values 
are averaged across 10 runs to alleviate the effects of random sampling.
Our results clearly confirm the near-optimal performance of \textsc{\small Modular-Greedy}.
In fact, the maximum \emph{unnormalized} optimality gap across all experiments is $4.72$,
i.e., the achieved objective and the optimal objective only differ by $4.72$ 
expected loop closures.

The second column of Figure~\ref{fig:KITTI00_mgreedy} 
shows the performance of \textsc{\small Modular-Greedy}
under the \IndivUni{} communication cost model. 
Similar to the \TotalUni{} experiments,
each figure uses a fixed computation budget $k$ and the same range for 
the total communication budget $b$ ($2$MB to $62$MB).
Each value of $b$ is equally divided among robots to create the local communication budgets
$b_i$'s. 
Performance of \textsc{\small Modular-Greedy} is again compared with the optimal solution (OPT)
by solving the ILP, as well as the baseline random algorithm in the \IndivUni{} regime.
In most cases,
the performance of \textsc{\small Modular-Greedy} is close to optimal,
although the maximum optimality gap is larger 
(see the last figure with $k=1300$)
compared to the \TotalUni{} regime. 
This observation matches our theoretical analysis,
as the established performance guarantee for \textsc{\small Modular-Greedy} in \IndivUni{} (i.e.,
partition matroid) is weaker than in \TotalUni{} (i.e., cardinality constraint); see
Table~\ref{tab:modular_apx}.

\begin{figure}[t]
\centering
\begin{subfigure}[t]{0.30\textwidth}
\centering
\includegraphics[width=\textwidth]{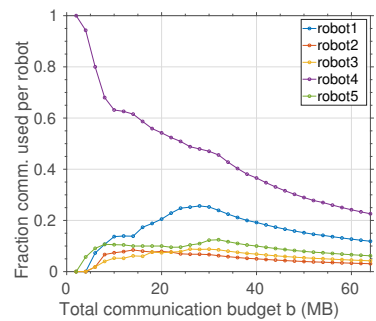}
\caption{\textsc{\small Modular-Greedy} \TotalUni{} ($k=1300$)}
\label{fig:NLC_TU_comm_labor}
\end{subfigure}
\\
\begin{subfigure}[t]{0.30\textwidth}
\centering
\includegraphics[width=\textwidth]{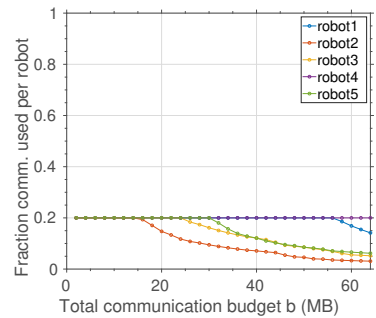}
\caption{\textsc{\small Modular-Greedy} \IndivUni{} ($k=1300$)}
\label{fig:NLC_IU_comm_labor}
\end{subfigure}
\caption{\small 
Fraction of total communication budget used by each robot
in the solutions of \textsc{\small Modular-Greedy}
under \textbf{(a)} \TotalUni{}
and \textbf{(b)} \IndivUni{}.
Both experiments use the same computation budget $k=1300$
and the same range of total communication budget $b$
(2MB-62MB).  
For \IndivUni{}, 
each value of $b$ is equally divided among robots to create local communication budgets $b_i$. 
}
\label{fig:NLC_comm_labor}
\end{figure}

Furthermore, 
the last row (Figure~\ref{fig:KITTI00_mgreedy_k1300} and \ref{fig:KITTI00_mgreedy_IU_k1300}) 
reveals another interesting comparison between \TotalUni{} and \IndivUni{}.
With the same computation budget $k = 1300$, 
\textsc{\small Modular-Greedy} under \TotalUni{} 
reaches maximum performance at $b = 30$MB.
Under \IndivUni{}, however, the algorithm reaches maximal performance much later, with a total communication of $b = 54$MB.
This is due to the fact that inherently,
\IndivUni{} trades off
team performance with 
the balance of induced individual communications.
Such extra considerations
leads to a more restricted search space and as a result, 
a lower objective value given the same total communication budget $b$.

Figure~\ref{fig:NLC_comm_labor}
shows the fraction of total communication budget used by each of the five robots 
in the solutions of \textsc{\small Modular-Greedy}
under \TotalUni{} and \IndivUni{}.
The computation budget is fixed to be $k=1300$ under both regimes.
As \TotalUni{}
does not regulate individual communications,
the resulting distribution of labor could be arbitrarily skewed;
see Figure~\ref{fig:NLC_TU_comm_labor}.
For example, with $b=30$MB, 
robot~4 (purple) broadcasts 50\% of all vertices selected by the team.
In contrast, in \IndivUni{},
the amount of data transmitted by each robot 
is explicitly bounded 
(through local budgets $b_i$'s);
see Figure~\ref{fig:NLC_IU_comm_labor}.

\begin{figure}[t]
\centering
\begin{subfigure}[t]{0.30\textwidth}
\centering
\includegraphics[width=\textwidth]{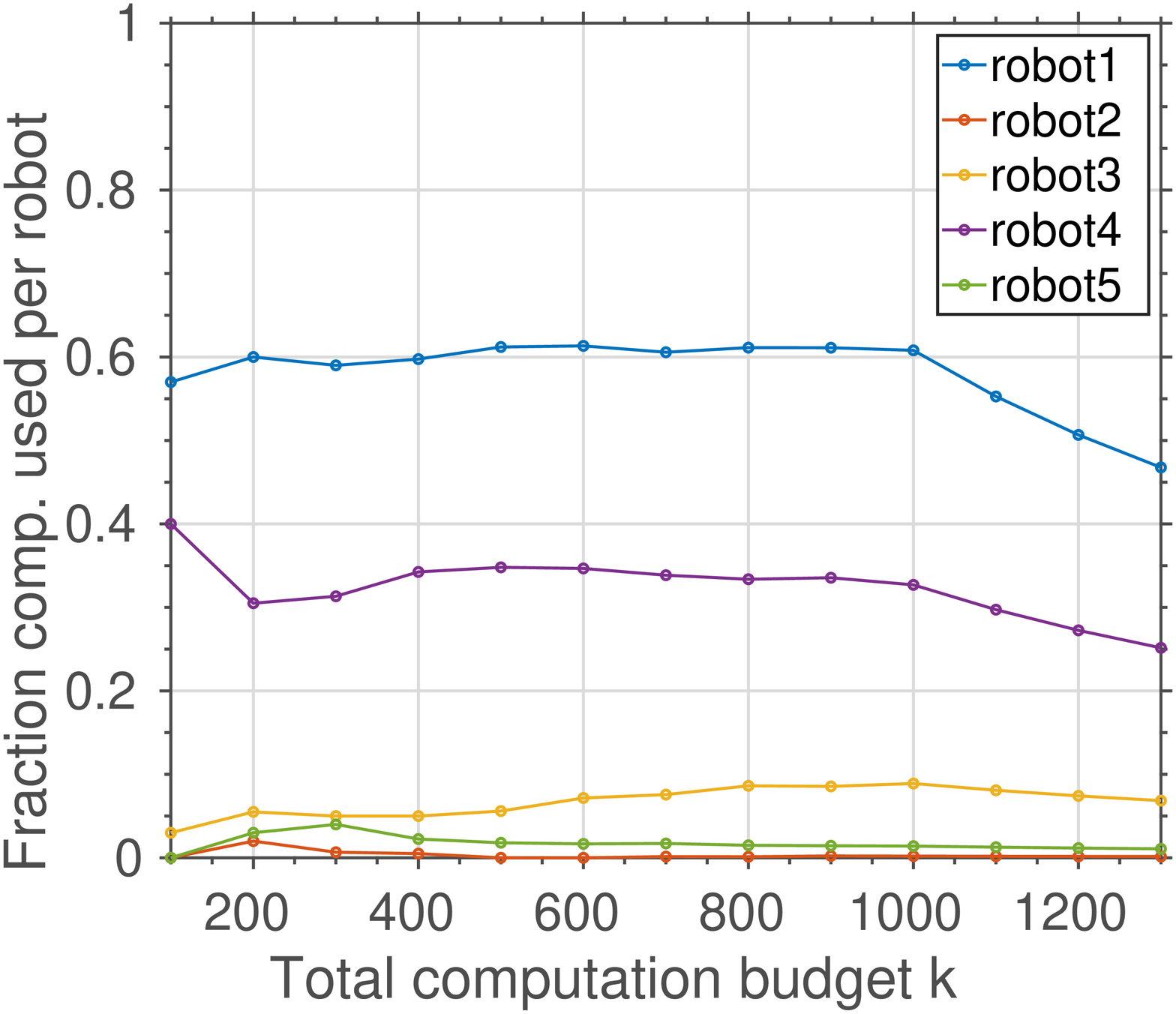}
\caption{\textsc{\small Modular-Greedy} \IndivUni{} ($b=20$MB)}
\label{fig:NLC_IU_comp_labor}
\end{subfigure}
\\
\begin{subfigure}[t]{0.30\textwidth}
\centering
\includegraphics[width=\textwidth]{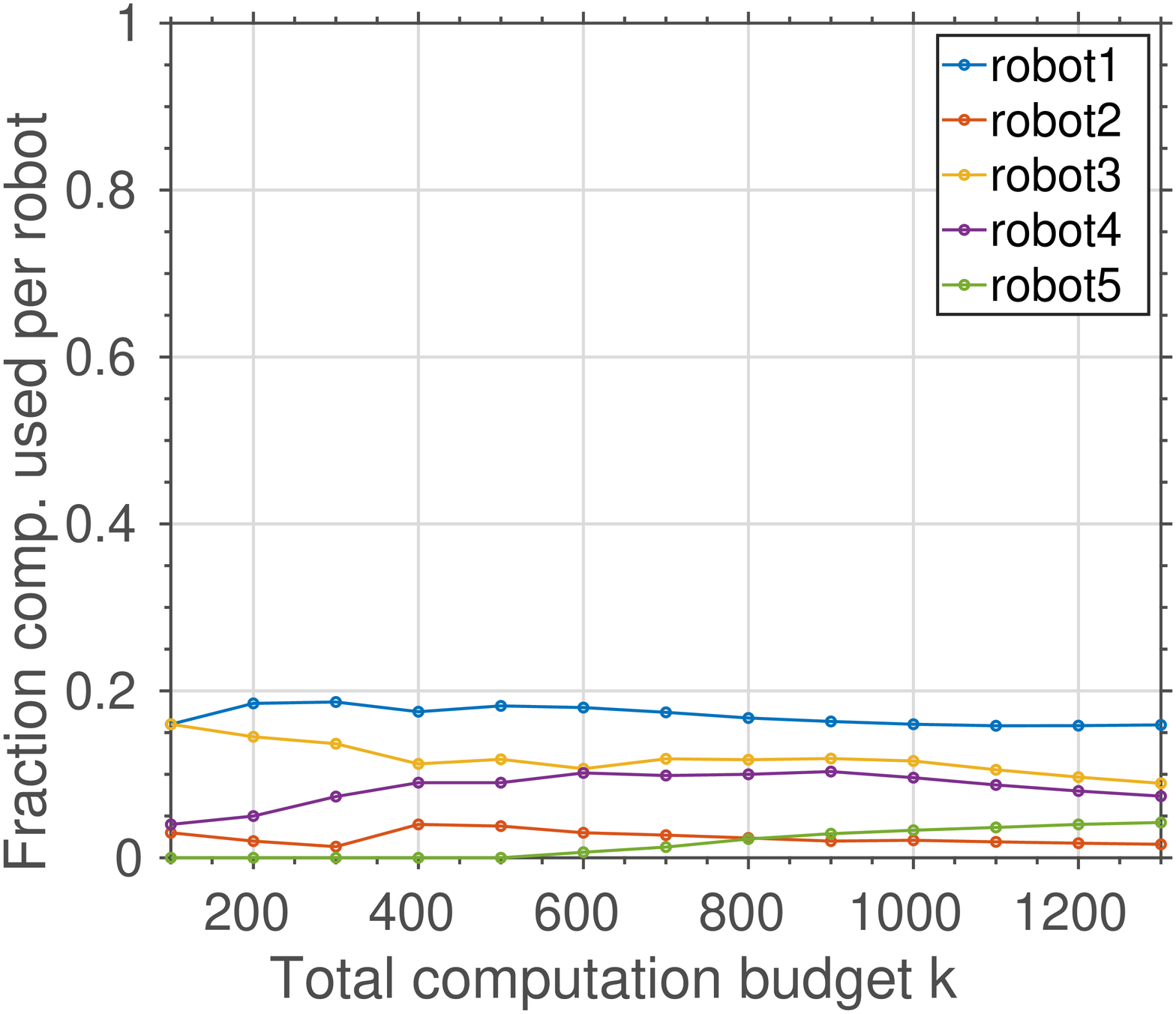}
\caption{\textsc{\small Modular-Greedy} \IndivUni{} ($b=20$MB) with individual computation budgets }
\label{fig:NLC_IUIK_comp_labor}
\end{subfigure}
\\
\vspace{0.2cm}
\begin{subfigure}[t]{0.30\textwidth}
\centering
\includegraphics[width=\textwidth]{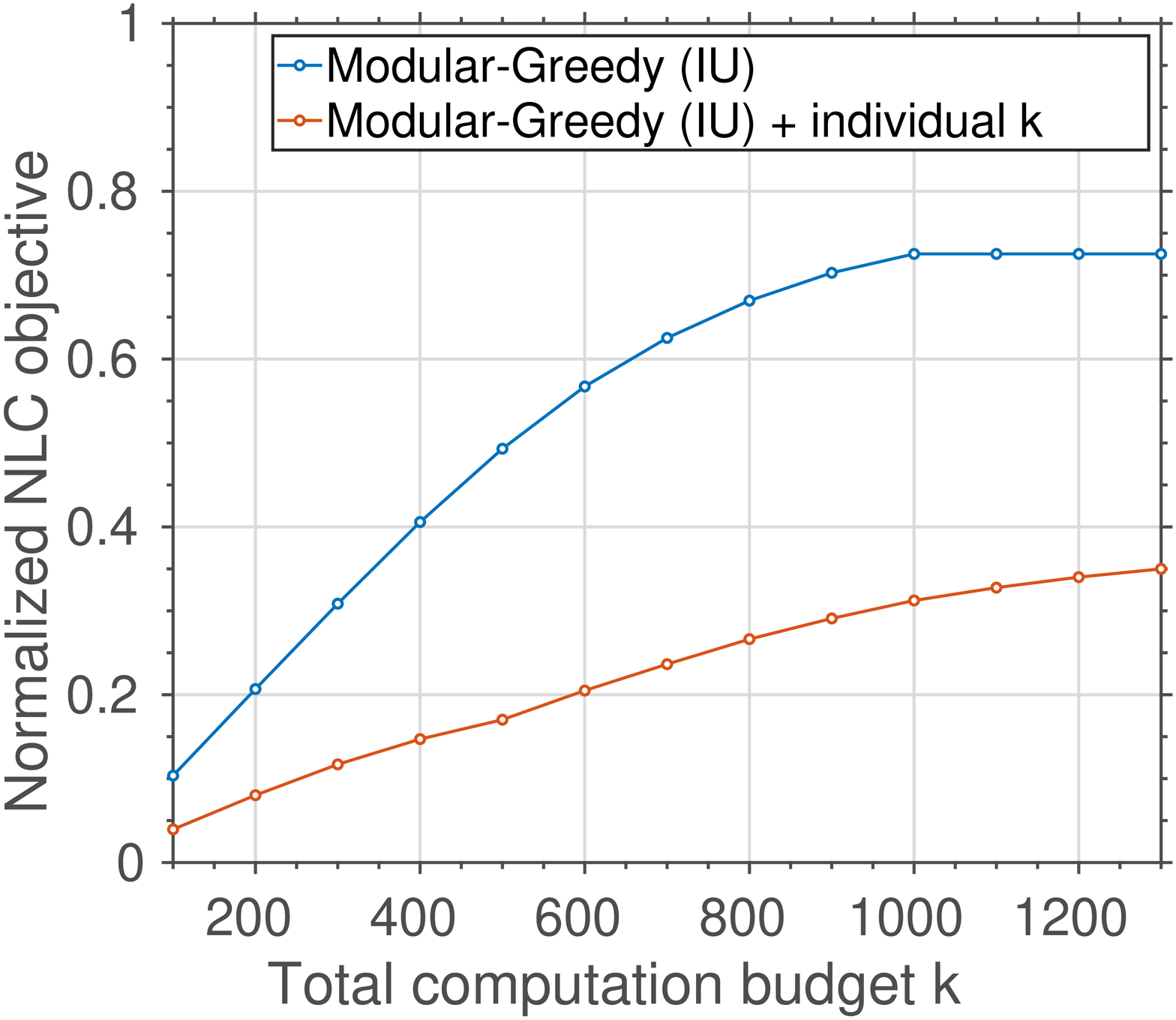}
\caption{Normalized $f_\text{NLC}$ objective}
\label{fig:NLC_IU_vs_IUIK}
\end{subfigure}
\caption{\small 
Induced individual computations
in the solutions of \textsc{\small Modular-Greedy} under \IndivUni{}.
Each robot has a fixed individual communication budget of $b_i=4$MB,
while the total computation budget $k$ varies from 100 to 1300.
\textbf{(a)} Fraction of total computation budget used by each robot.
\textbf{(b)} Results after dividing $k$ equally among robots.
\textbf{(c)} While the addition of individual computation budgets significantly
improves the balance of the induced computation labor, as a trade-off it also 
limits the overall team performance as evaluated by the $f_\text{NLC}$ objective. 
}
\label{fig:NLC_comp_labor}
\end{figure}

In addition,
we also study the induced division of labor in terms of computation under the \IndivUni{} communication cost model.
Each robot has a fixed individual communication budget of $b_i = 4$MB,
while the total computation budget $k$ varies from 100 to 1300.
Figure~\ref{fig:NLC_IU_comp_labor}
shows the
fraction of total computation budget used by each robot after running \textsc{\small Modular-Greedy}.
Similar to what we observed earlier in the case of communication,
without regulating individual computations 
the division of computation labor could be
arbitrarily skewed.
For example,
in many instances robot~1 (blue) 
ends up verifying 60\%
of all potential loop closures selected by the team.
In contrast,
the approach presented in Section~\ref{sec:individualBudgets}
explicitly ensures that each of the five robots
can verify at most 20\% of all selected potential matches;
see Figure~\ref{fig:NLC_IUIK_comp_labor}.
While this gives us more control over the induced computation workloads,
as a trade-off it also 
limits the overall team performance as evaluated by the $f_\text{NLC}$ objective;
see Figure~\ref{fig:NLC_IU_vs_IUIK}.

\subsection{Results with Submodular Objectives}

\begin{figure}[t]
\centering
\begin{subfigure}[t]{0.33\textwidth}
\centering
\includegraphics[width=\textwidth]{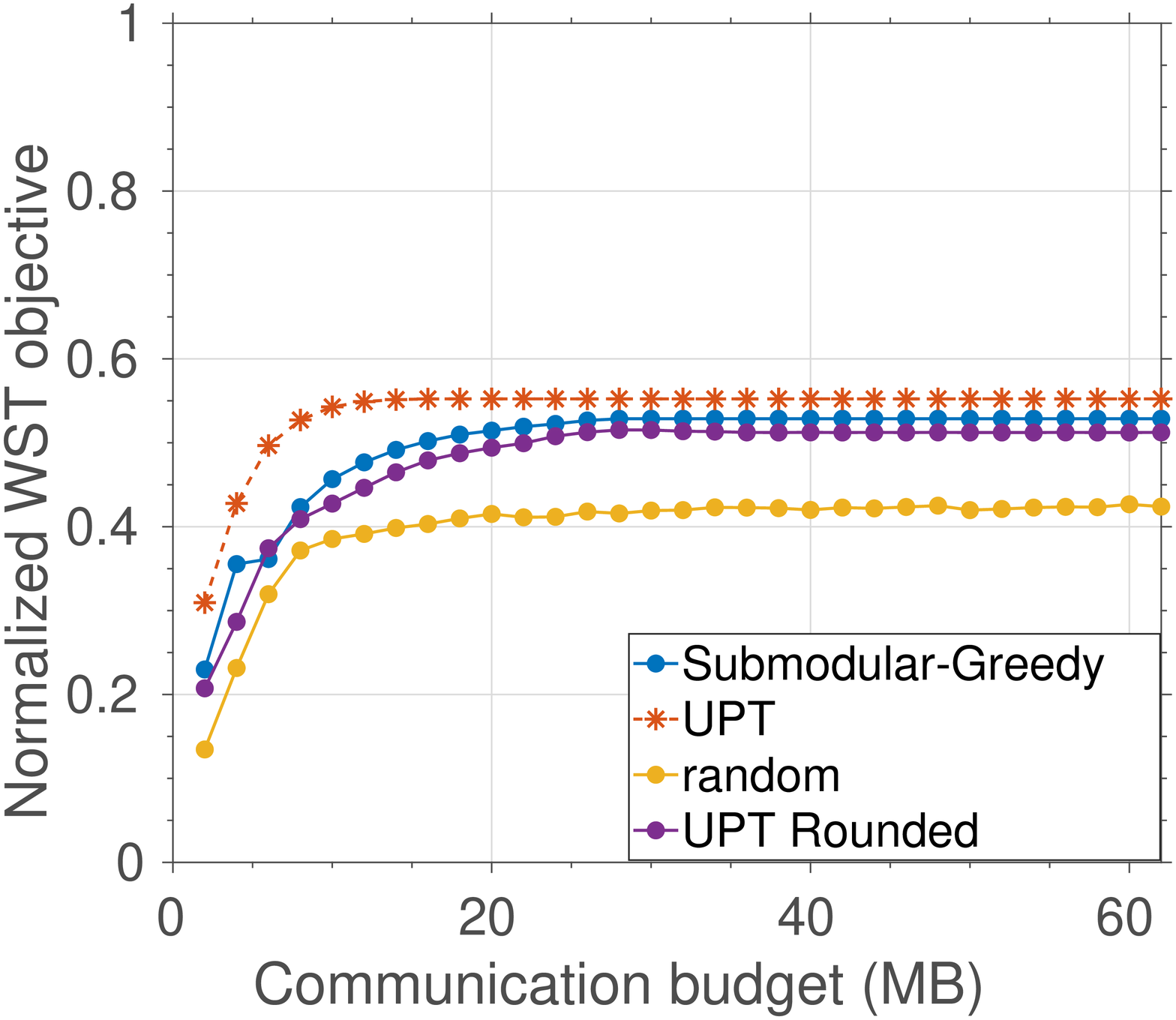}
\caption{KITTI00 $k=300$}
\label{fig:KITTI00_wst_k300}
\end{subfigure}
\\
\begin{subfigure}[t]{0.33\textwidth}
\centering
\includegraphics[width=\textwidth]{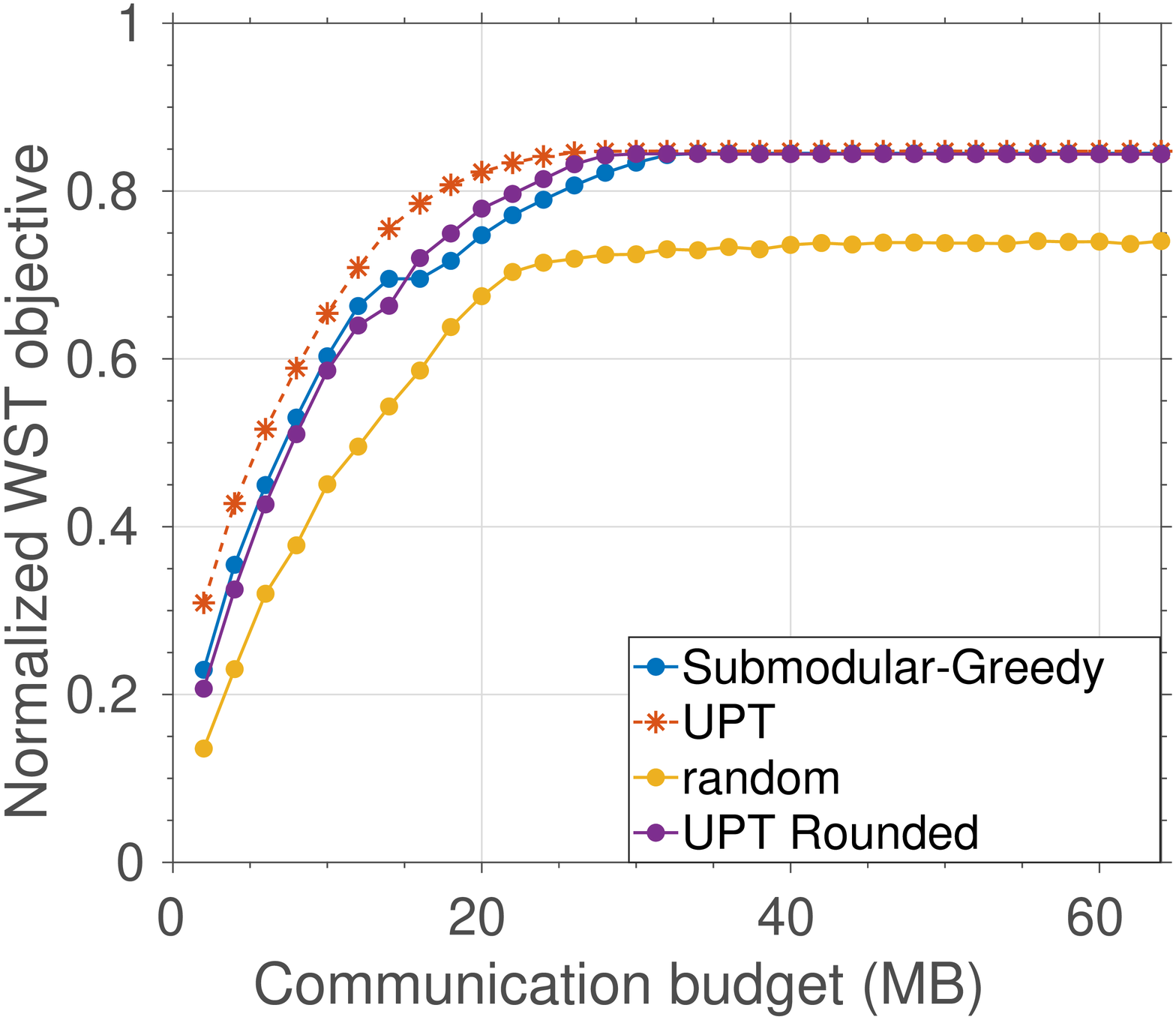}
\caption{KITTI00 $k=800$}
\label{fig:KITTI00_wst_k800}
\end{subfigure}
\\
\begin{subfigure}[t]{0.33\textwidth}
\centering
\includegraphics[width=\textwidth]{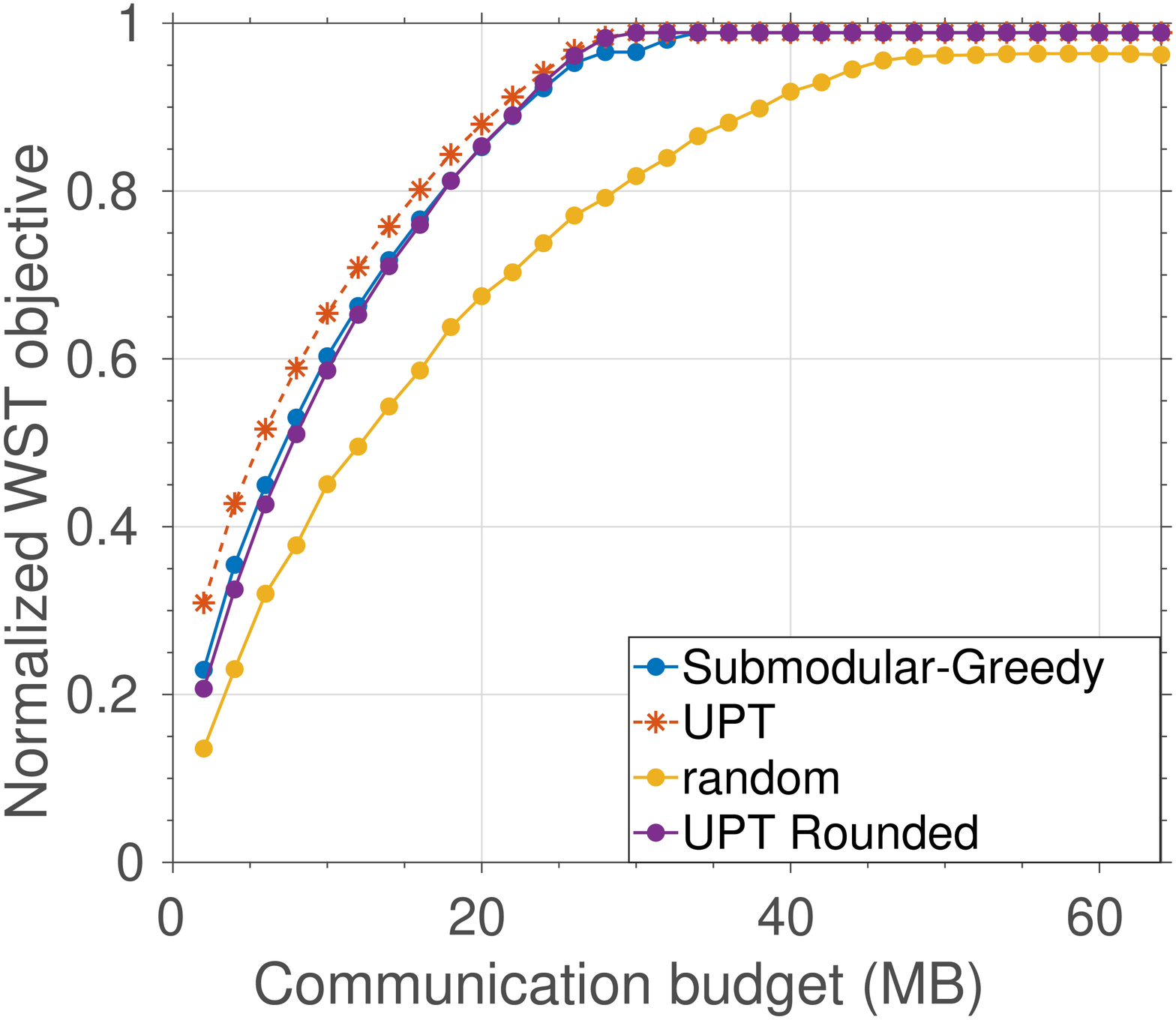}
\caption{KITTI00 $k=1300$}
\label{fig:KITTI00_wst_k1300}
\end{subfigure}
\caption{\small Performance of \textsc{Submodular-Greedy}
under the \TotalUni{} communication cost model,
using the tree connectivity (WST) as objective.
Each figure uses a fixed computation budget ($k=300,800, 1300$). 
The horizontal axis shows the total communication budget $b$ for \TotalUni{},
varying from $2$MB to $62$MB.
The vertical axis shows the normalized WST objective.
The submodular greedy solution is compared with the upper bound from 
convex relaxation (UPT),
the rounded solution from the upper bound (UPT rounded), as well as the solution
returned by the random algorithm.
}
\label{fig:KITTI00_wst}
\end{figure}

Figures~\ref{fig:KITTI00_wst_k300}-\ref{fig:KITTI00_wst_k1300}
show the performance of \textsc{\small Submodular-Greedy}
on the same KITTI~00 exchange graph 
under the \TotalUni{} regime. 
The
tree connectivity performance metric \eqref{eq:WST}
is used as the objective function.  
Similar to the modular experiments, 
each plot uses a fixed computation budget $k$, 
and a total communication budget varying 
from $2$MB-$62$MB.
The proposed algorithm is compared with the baseline random
algorithm.
As a surrogate for the optimal solution,
we compute the upper bound (UPT) using the convex relation approach
described in Section~\ref{sec:cvx}.
The fractional solution returned together with UPT is 
further rounded to obtain an
integer solution (UPT Rounded). 
To guarantee feasibility, 
$b$ vertices are first selected greedily 
using the values of the relaxed indicator variables 
$\ppp \triangleq [\pi_1,\dots,\pi_{n}]^\top \in [0,1]^n$.
Subsequently, $k$ edges are selected greedily from all edges 
covered by the selected vertices using the values of the relaxed indicator variables
$\bell  \triangleq [\ell_1,\dots,\ell_m]^\top \in [0,1]^m$. 
All objective values shown are 
normalized by the maximum 
achievable objective given infinite budgets. 

In all instances, 
the performance of \textsc{\small Submodular-Greedy} clearly
outperforms the random baseline, 
and is furthermore close to UPT.
These results
empirically validate the theoretical performance guarantees proved in 
Section~\ref{sec:submodular_general}.
The inflection point of each \textsc{\small Submodular-Greedy} curve
corresponds to the point where the algorithm switches from greedily selecting
edges (\textsc{\small Edge-Greedy})
to greedily selecting vertices (\textsc{\small Vertex-Greedy}).
Near inflection points, sometimes UPT Rounded
achieves a higher performance than \textsc{\small Submodular-Greedy}. This
observation is consistent with what we expected from Figure~\ref{fig:sgrd_apx}.
Note that this rounding lacks any worst-case performance guarantees.
In addition, obtaining the rounded solution requires solving
a potentially large MAXDET problem which 
may not be an option on computationally constrained platforms.

\subsection{Cross-Objective Analysis}
\label{sec:ate_experiments}

\begin{figure}[t]
\centering
\begin{subfigure}[t]{0.33\textwidth}
\centering
\includegraphics[width=\textwidth]{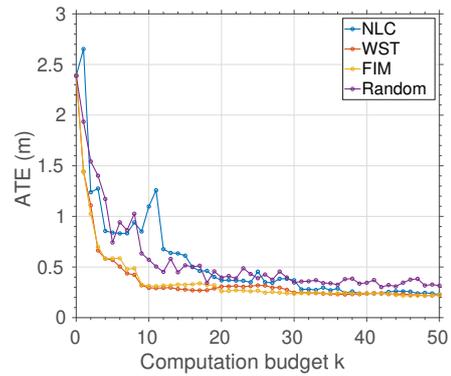}
\caption{Absolute trajectory error (ATE)}
\label{fig:atlas_ate}
\end{subfigure}
\\
\begin{subfigure}[t]{0.33\textwidth}
\centering
\includegraphics[width=\textwidth]{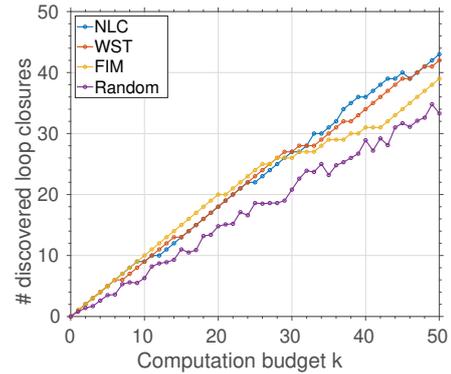}
\caption{Number of discovered loop closures}
\label{fig:atlas_tc}
\end{subfigure}
\caption{Cross-objective analysis in simulation
on \textbf{(a)} absolute trajectory error (ATE)
and \textbf{(b)} number of discovered loop closures. A
fixed communication budget of $2.5$MB is imposed under \TotalUni{}.}
\end{figure}

To complete our experimental analysis, 
we perform cross evaluations of the proposed 
performance metrics ($f_\text{FIM}$,
$f_\text{WST}$,
and $f_\text{NLC}$).
In these experiments,
all pose-graph SLAM
instances are solved using g2o \citep{kummerle2011g}. 
Figure~\ref{fig:atlas_ate}
shows the evolution of 
absolute trajectory error (ATE) 
as a function of the computation budget $k$ in simulation.
As the ATE converges quickly, 
we use a range of small computation budgets ($k=1$ to $k=50$),
with a fixed communication budget of $b=2.5$MB in \TotalUni{}.
In general, using the proposed metrics leads to faster decrease of ATE
compared to randomly selecting potential loop closures. 
Moreover, note that the
estimation-theoretic objectives $f_\text{FIM}$ and $f_\text{WST}$
outperform $f_\text{NLC}$.
Such results are expected, as
these objectives 
inherently favor edges
that lead to more reduction in 
estimation uncertainty.
Lastly,
Figure~\ref{fig:atlas_tc} shows the number of established loop closures 
as a function of $\kcomp$.
Once again, 
the proposed performance metrics 
outperform the random baseline noticeably.

\section{Conclusion}
\label{sec:conclusion}

Inter-robot loop closures constitute the backbone of CSLAM systems needed 
for multi-robot navigation in GPS-denied environments. 
In real-world scenarios, detecting inter-robot loop closures is a resource-intensive process with a
rapidly growing search space.
This task thus can be extremely challenging for robots subject to various operational and
resource constraints.
It is thus crucial for robots to be aware of these constraints, and to actively adapt to them by
intelligently utilizing their limited mission-critical resources.

We presented such a \emph{resource-aware} framework for distributed inter-robot
loop closure detection under budgeted communication and computation.
In particular, we sought an exchange-and-verification policy that maximizes a
monotone submodular performance metric under resource constraints.
Such a policy determines (i) ``which observations should be shared with the
team'', and (ii) ``which subset of potential matches is worthy of geometric
verification''. This problem is NP-hard in general.
As our main contribution, we provided efficient greedy approximation algorithms
with provable performance guarantees.
In particular, for monotone \emph{modular} performance metrics such as the
expected number of true loop closures,
the proposed algorithms achieve constant-factor approximation ratios
under multiple communication and computation cost models.
In addition, we also proposed an approximation algorithm for general monotone submodular
performance metrics with a performance guarantee that varies with the ratio of
resource budgets, as well as the extent of perceptual ambiguity.
The proposed framework
was extensively validated on real and synthetic benchmarks,
and empirical near-optimal performance was demonstrated via a natural convex relaxation scheme.

It remains an open problem whether constant-factor approximation 
for any monotone submodular objective is possible. We plan to study this open problem as
part of our future work.
Additionally, although the burden of verifying potential loop closures is
distributed among the robots, our current framework still relies on a
centralized scheme for evaluating the performance metric and running the
approximation algorithms.
In the future, we plan to leverage recent results on distributed submodular
maximization to eliminate such reliance on centralized computation.

\subsubsection*{Acknowledgments} 
This work was supported in part by the NASA Convergent
Aeronautics Solutions project Design Environment for Novel
Vertical Lift Vehicles (DELIVER), by ONR under BRC award N000141712072, and by
ARL DCIST under Cooperative Agreement Number W911NF-17-2-0181.

\bibliographystyle{SageH}
\bibliography{slam}

\onecolumn
\appendix
\section{Proofs}
\label{sec:app}

\subsection{Proof of Theorem~\ref{th:fvNMS} and Theorem~\ref{th:fvNMSPairwise}}
\label{pf:fvNMS}
In this section, we give the proofs for theorems presented in
Section~\ref{sec:modular}.
First, we establish the following more general result 
which holds for any partitioning of the edge set $\Eall$.
Both theorems in this section then
follow as special cases.
While these results are established independently \citep{Tian18_WAFR},
we note that an even more general case with an arbitrary matroid constraint is presented in 
the seminal work of \citep[Proposition~3.1]{nemhauser1978analysis}.

\begin{theorem}
\normalfont
Let $\Eall = \Ecal_1 \uplus \cdots \uplus \Ecal
_{n_b}$.
Define $\he: 2^{\Eall} \to \mathbb{R}_{\geq 0}$ as follows,
\begin{equation}
	\begin{aligned}
	  \he(\Acal) \triangleq \; &\underset{\Ecal \subseteq \Acal}{\text{maximize}}
	  & & \sum_{e \in \Ecal} p(e), \\
	  & \text{subject to}
	  & & |\Ecal \cap \Ecal_i| \leq k_i,\;i \in [n_b].
	\end{aligned}
	\label{eq:hedef}
\end{equation}
where $k_i \geq 0,\;i \in [n_b]$.
Furthermore, define $\fv: 2^{\Vall} \to \mathbb{R}_{\geq 0}$
as, 
\begin{equation}
\fv(\Vcal) \triangleq \he(\edg(\Vcal)).
\end{equation}
Then $\fv$ is NMS.
\label{thm:heNMS}
\end{theorem}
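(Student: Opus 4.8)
The plan is to recognize $\he$ as the weighted rank function of a partition matroid and to exploit the fact that this matroid is separable across the blocks $\Ecal_1,\dots,\Ecal_{n_b}$, then to lift the resulting properties from edges to vertices through a composition argument. The \citep[Proposition~3.1]{nemhauser1978analysis} route (weighted matroid rank functions are submodular) would close the argument immediately, but since the partition matroid makes everything elementary, I prefer a self-contained derivation.

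First I would observe that, because the constraints $|\Ecal \cap \Ecal_i| \le k_i$ act independently on disjoint blocks while the objective $\sum_{e \in \Ecal} p(e)$ is additive, the maximization in \eqref{eq:hedef} decouples: for every $\Acal \subseteq \Eall$,
\begin{equation}
  \he(\Acal) = \sum_{i=1}^{n_b} \phi_i(\Acal \cap \Ecal_i),
\end{equation}
where $\phi_i : 2^{\Ecal_i} \to \Rset_{\geq 0}$ returns the sum of the $\min\{|S|,k_i\}$ largest edge probabilities in $S$ (the ``top-$k_i$'' function). This reduces the crux to showing that each $\phi_i$ is NMS. Each $\phi_i$ is clearly normalized and monotone, so the real work is submodularity.

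For submodularity of $\phi_i$ I would use the diminishing-returns characterization: fix $S \subseteq S' \subseteq \Ecal_i$ and an edge $e \notin S'$, and compare marginal gains. If $|S'| < k_i$ the gain of $e$ equals $p(e)$ at both $S$ and $S'$; otherwise the marginal gain at a set equals $\max\{0,\,p(e) - \tau(S)\}$, where $\tau(S)$ is the smallest probability among the top-$k_i$ elements currently retained in $S$. Since $\tau$ is nondecreasing as its argument grows (a larger set can only raise the threshold for entering the top $k_i$), the gain at $S$ dominates the gain at $S'$, which is exactly submodularity. Summing over $i$ and using that $\{\Ecal_i\}$ partitions $\Eall$ --- so that union and intersection distribute blockwise, $(\Acal \circ \Bcal) \cap \Ecal_i = (\Acal \cap \Ecal_i) \circ (\Bcal \cap \Ecal_i)$ for $\circ \in \{\cup,\cap\}$ --- then yields that $\he$ itself is NMS.

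Finally I would lift to $\fv(\Vcal) = \he(\edg(\Vcal))$. Normalization and monotonicity are immediate, since $\edg(\varnothing) = \varnothing$ and $\edg$ is monotone under inclusion. For submodularity, the key structural facts are that $\edg$ preserves unions, $\edg(\Vcal \cup \Vcal') = \edg(\Vcal) \cup \edg(\Vcal')$, while only \emph{sub}-preserving intersections, $\edg(\Vcal \cap \Vcal') \subseteq \edg(\Vcal) \cap \edg(\Vcal')$. Applying submodularity of $\he$ to the pair $\edg(\Vcal), \edg(\Vcal')$ gives
\begin{equation}
  \fv(\Vcal) + \fv(\Vcal') \geq \he\big(\edg(\Vcal) \cup \edg(\Vcal')\big) + \he\big(\edg(\Vcal) \cap \edg(\Vcal')\big),
\end{equation}
and I would rewrite the first term as $\fv(\Vcal \cup \Vcal')$ via union-preservation and bound the second from below by $\fv(\Vcal \cap \Vcal')$ using the intersection inclusion together with monotonicity of $\he$. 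The main obstacle is the submodularity of the per-block top-$k_i$ function $\phi_i$; once the threshold-monotonicity argument is secured, the blockwise summation and the edge-to-vertex composition are routine.
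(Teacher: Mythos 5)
Your proof is correct and follows essentially the same route as the paper's: the core of both arguments is that the marginal gain of an edge $e$ is governed by the $k_j$th-largest probability within its block (your $\tau$, the paper's $f_{k_j}$), which is monotone under set inclusion, followed by composing with $\edg(\cdot)$ to lift the NMS property from edges to vertices. Your explicit blockwise decomposition $\he(\Acal)=\sum_{i}\phi_i(\Acal\cap\Ecal_i)$ and your inline proof of the composition step (which the paper delegates to Theorem~\ref{thm:fe2fv}) are presentational rather than substantive differences.
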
 

\begin{proof}[\normalfont \bfseries Proof of Theorem~\ref{thm:heNMS}]
We first show that $\he$ is NMS.
\begin{itemize}
		\item[$\diamond$] Normalized: $\he(\varnothing) = 0$ by definition.
		\item[$\diamond$] Monotone:
		Let $\Acal \subseteq \Bcal \subseteq \Eall$. 
		$\he(\Acal) \leq \he(\Bcal)$ follows again from the definition.
		\item[$\diamond$] Submodular:
		Let $\Acal \subseteq \Bcal \subseteq \Eall$, and $e \in \Eall \setminus \Bcal$. We intend to show,
		\begin{equation}
		\he(\Acal \cup \{e\}) - \he(\Acal) \geq \he(\Bcal \cup \{e\}) - \he(\Bcal).
		\label{eq:submodular_def}
		\end{equation}
		If $\he(\Bcal \cup \{e\}) = \he(\Bcal)$, \eqref{eq:submodular_def}
		follows from the monotonicity of $\he$. 
		We thus focus on cases
		where $\he(\Bcal \cup \{e\}) > \he(\Bcal)$. 
		Let $\Ecal_j$ be the partition that $e$ belongs to, i.e., $e \in \Ecal_j$.
		Furthermore, define $\Acal_j \triangleq \Acal \cap \Ecal_j$
		and $\Bcal_j \triangleq \Bcal \cap \Ecal_j$. 
		Clearly, we have $\Acal_j \subseteq \Bcal_j$.
		Given any subset of edges $\Ccal \subseteq \Eall$, define
		$f_k(\Ccal)$ as follows,
		\begin{equation}
		f_k(\Ccal) \triangleq
		\begin{cases}
		\text{$k$th largest probability in $\Ccal$},  &\text{if } |\Ccal| \geq k,\\
		0,  &\text{otherwise.}
		\end{cases}
		\end{equation}
		As $\Acal_j \subseteq \Bcal_j$, it follows that 
		$f_k(\Acal_j) \leq f_k(\Bcal_j)$ for all $k \in \mathbb{N}$.
		Since
		$\he(\Bcal \cup \{e\}) > \he(\Bcal)$,
		it must hold that $p(e) \geq f_{k_j}(\Bcal_j) \geq f_{k_j}(\Acal_j)$.
		Thus,
		\begin{equation}
		\he(\Acal \cup \{e\}) - \he(\Acal)
		\geq 
		p(e) - f_{k_j}(\Acal_j)
		\geq
		p(e) - f_{k_j}(\Bcal_j)
		= 
		\he(\Bcal \cup \{e\}) - \he(\Bcal).
		\end{equation}
\end{itemize}
By Theorem~\ref{thm:fe2fv}, $\fv$ is also NMS,
which concludes the proof; see \citep{tian18}.
\qed
\end{proof}

\begin{proof}[\normalfont \bfseries Proof of Theorem~\ref{th:fvNMS}]
This is a special case of Theorem~\ref{thm:heNMS} with a single block (entire $\Eall$).
\end{proof}

\begin{proof}[\normalfont \bfseries Proof of Theorem~\ref{th:fvNMSPairwise}]
This is a special case of Theorem~\ref{thm:heNMS}
with $\displaystyle \Eall = \uplus_{\substack{i,j \in [r] \\ j > i}} \Ecal_{ij}$,
where $i,j$ correspond to robot indices; see Section~\ref{sec:individualBudgets}.
\end{proof}

\subsection{Proof of Theorem~\ref{thm:submodular_apx}}
\label{pf:submodular_apx}
Before proving the main theorem,
we first establish approximation guarantees
for the individual components of
\textsc{Submodular-Greedy},
namely 
\textsc{\small Edge-Greedy} (Lemma~\ref{lem:egrd_apx})
and 
\textsc{\small Vertex-Greedy} (Lemma~\ref{lem:vgrd_apx}).

\begin{lemma}
\normalfont
\textsc{\small Edge-Greedy} (Algorithm~\ref{alg:egreedy}) is an
$\alpha_e(\kcomm, \kcomp)$-approximation algorithm for Problem~\ref{prob:codesign} under \TotalUni{}, where
\begin{align}
		\alpha_e(\kcomm, \kcomp) \triangleq 1-\exp \big (-\min\,\{1,\kcomm/\kcomp\} \big ).
		\label{eq:egrd_apx1}
\end{align}
\label{lem:egrd_apx}
\end{lemma}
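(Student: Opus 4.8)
The plan is to reduce the analysis of \textsc{Edge-Greedy} to the classical greedy bound for monotone submodular maximization under a single cardinality constraint, and to extract the exponent $\min\{1,\kcomm/\kcomp\}$ from a lower bound on the number of edges the main greedy loop is guaranteed to select before terminating. First I would record the structural observation that, since the set returned by \textsc{VertexCover}$(\Egrd)$ contains at most one vertex per selected edge (e.g., one endpoint of each edge), we have $|\Vgrd| \leq |\Egrd|$ at every point of the execution. Consequently the main loop (lines~\ref{alg:egrd_1_start}--\ref{alg:egrd_1_end}) can terminate only when $|\Egrd| = \kcomp$, when $|\Vgrd| = \kcomm$, or when $\Eall$ is exhausted. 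In the first two cases, the number $t$ of edges selected satisfies $t \geq \min\{\kcomm,\kcomp\}$: if instead $t < \min\{\kcomm,\kcomp\}$, then after the loop $|\Egrd| = t < \kcomp$ and $|\Vgrd| \leq t < \kcomm$, so both budget conditions would still hold and the loop could not have stopped.

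Next I would invoke the standard greedy guarantee. Let $\EE^\star$ be an optimal solution to Problem~\ref{prob:codesign} under \TotalUni{}; by feasibility $|\EE^\star| \leq \kcomp$. Since \textsc{Edge-Greedy} selects at each step the edge $e \in \Eall \setminus \Egrd$ with the largest marginal gain in $\fe$, submodularity and monotonicity give the usual one-step contraction $\fe(\EE^\star) - \fe(\Egrd^{(i)}) \leq (1 - 1/\kcomp)\big(\fe(\EE^\star) - \fe(\Egrd^{(i-1)})\big)$, where $\Egrd^{(i)}$ denotes the greedy set after $i$ steps; the factor $1/\kcomp$ arises by bounding the best single marginal gain from below by a $1/|\EE^\star| \geq 1/\kcomp$ fraction of the residual $\fe(\EE^\star) - \fe(\Egrd^{(i-1)})$. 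Unrolling this recursion from $\fe(\Egrd^{(0)}) = \fe(\varnothing) = 0$ over $t$ steps yields $\fe(\Egrd^{(t)}) \geq \big(1 - (1 - 1/\kcomp)^t\big)\fe(\EE^\star)$.

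Finally I would combine the two ingredients. Using $(1 - 1/\kcomp)^t \leq \exp(-t/\kcomp)$ together with $t \geq \min\{\kcomm,\kcomp\}$ gives
\[
\fe(\Egrd) \;\geq\; \Big(1 - \exp\!\big(-\tfrac{\min\{\kcomm,\kcomp\}}{\kcomp}\big)\Big)\,\fe(\EE^\star) \;=\; \big(1 - \exp(-\min\{1,\kcomm/\kcomp\})\big)\,\fe(\EE^\star),
\]
which is exactly $\alpha_e(\kcomm,\kcomp)$. I would close by noting that the optimization phase (lines~\ref{alg:egrd_2_start}--\ref{alg:egrd_2_end}) only appends edges already covered by $\Vgrd$, so it leaves $\Vgrd$ unchanged, preserves budget-feasibility, and by monotonicity can only increase $\fe(\Egrd)$; hence the bound carries over to the returned solution. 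The degenerate case in which the loop halts because $\Eall$ is exhausted is immediate, since then $\Egrd = \Eall \supseteq \EE^\star$ and monotonicity gives $\fe(\Egrd) \geq \fe(\EE^\star)$.

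The main obstacle is the iteration-count lower bound $t \geq \min\{\kcomm,\kcomp\}$: this is the only place where the coupling between the two budgets (through the vertex-cover bound $|\Vgrd| \leq |\Egrd|$) enters, and it is precisely what converts the generic $1-1/e$ bound into the budget-ratio-dependent factor. Everything else is a direct application of the textbook greedy analysis, so I would keep that part terse and focus the exposition on the loop-termination argument.
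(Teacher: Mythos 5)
Your proposal is correct and follows essentially the same route as the paper: both lower-bound the number of edges selected by the main loop by $\min\{\kcomm,\kcomp\}$ and then invoke the classical partial-run greedy guarantee for a cardinality-$\kcomp$ constraint (the paper cites it via the relaxed problem's optimum $\mathrm{OPT}_e \geq \mathrm{OPT}_1$, whereas you inline the one-step contraction directly against the constrained optimum $\EE^\star$ using $|\EE^\star| \leq \kcomp$ — the same argument). Your explicit justification of the iteration-count bound via $|\Vgrd| \leq |\Egrd|$ is a welcome detail that the paper states without proof.
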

\begin{proof}[\normalfont \bfseries Proof of Lemma~\ref{lem:egrd_apx}]

As shown in Algorithm~\ref{alg:egreedy}, we exit the standard greedy loop (line~\ref{alg:egrd_1_start}-\ref{alg:egrd_1_end}) whenever the next selected edge violates 
either the communication or the computation budget. 
In addition, the heuristic optimization at the end (line~\ref{alg:egrd_2_start}-\ref{alg:egrd_2_end}) seeks to improve the solution while remaining feasible.
Therefore, the returned solution from \textsc{\small Edge-Greedy} is guaranteed to be feasible.

Now, we establish the performance guarantee presented in the lemma. To do so, we only look at 
the solution after the standard greedy loop (line~\ref{alg:egrd_1_start}-\ref{alg:egrd_1_end}). 
Let $\OPTe$ denote the optimal value of Problem~\ref{prob:codesign}.
Consider the relaxed version of Problem~\ref{prob:codesign} where we remove the communication budget.
In Section~\ref{sec:infinite_b}, we have shown that the reduced problem is,
\begin{equation}
	\underset{\Ecal \subseteq \Eall}{\text{maximize }} \fe(\Ecal) \text{ s.t. } |\Ecal| \leq \kcomp.
	\label{prob:ijrr_pe}
\end{equation}
Let OPT$_e$ denote the optimal value of the relaxed problem. Clearly, OPT$_e$ $\geq \OPTe$.
Let $\Egrd$ be the set of selected edges after line~\ref{alg:egrd_1_end}. 
Note that $\Egrd$ contains \emph{at least} $\min(\kcomm, \kcomp)$ edges.
Let $\Egrd' \subseteq \Egrd$ be the set formed by the \emph{first} $\min(\kcomm, \kcomp)$ edges selected by \textsc{\small Edge-Greedy}.
Note that $\Egrd'$ is also the solution of running the classic greedy algorithm \citep{nemhauser1978analysis} on \eqref{prob:ijrr_pe}
for $\min(\kcomm, \kcomp)$ iterations.
It thus follows that,
\begin{align}
	\fe(\Egrd) 
	& \geq \fe(\Egrd') && \text{(monotonicity of $\fe$)} \\
	& \geq \Big ( 1-\exp \big (- \min\{\kcomm, \kcomp\}/\kcomp \big ) \Big) \cdot \text{\text{OPT}$_e$} && \text{\cite[Theorem~$1.5$]{krauseSurvey}} \\
	& = \alpha_e(\kcomm, \kcomp) \cdot  \text{\text{OPT}$_e$}  && \text{(def. of $\alpha_e$)}\\
	& \geq \alpha_e(\kcomm, \kcomp) \cdot \OPTe. && \text{(\text{OPT}$_e \geq \OPTe$)} 
\end{align}

\qed
\end{proof}


\begin{lemma}
\normalfont
Let $\Delta$ be the maximum vertex degree in $\Gcal$. \textsc{\small
Vertex-Greedy} (Algorithm~\ref{alg:vgreedy}) is an $\alpha_v(\kcomm, \kcomp,
\Delta)$-approximation algorithm for Problem~\ref{prob:codesign} under \TotalUni{}, where
\begin{align}
	\alpha_v(\kcomm, \kcomp, \Delta) & \triangleq 1-\exp \big (-\min\,\{1, \lfloor \kcomp / \Delta \rfloor/\kcomm \} \big).
	\label{eq:vgrd_apx1}
\end{align}
\label{lem:vgrd_apx}
\end{lemma}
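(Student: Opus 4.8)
The plan is to mirror the proof of Lemma~\ref{lem:egrd_apx}, interchanging the roles of the two budgets and of edges and vertices. I would first dispatch feasibility, which is immediate here: \textsc{\small Vertex-Greedy} (Algorithm~\ref{alg:vgreedy}) only ever adds a vertex drawn from $\Vcal_\text{feas}$, so at termination both $|\Vgrd| \leq \kcomm$ and $|\edg(\Vgrd)| \leq \kcomp$ hold, and the returned pair $(\Vgrd,\Egrd)$ with $\Egrd = \edg(\Vgrd)$ is budget-feasible. Moreover, because $|\edg(\Vgrd)| \leq \kcomp$, the inner maximization of Problem~\ref{prob:codesign} for the vertex set $\Vgrd$ retains \emph{all} covered edges by monotonicity of $\fe$, so the objective value attained by \textsc{\small Vertex-Greedy} equals $\fv_\text{com}(\Vgrd) = \fe(\edg(\Vgrd))$ as in \eqref{eq:fvdef}.

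Next I would relax the computation budget. By Section~\ref{sec:infinite_k}, dropping the constraint $|\EE| \leq \kcomp$ turns Problem~\ref{prob:codesign} under \TotalUni{} into $\max_{|\Vcal| \leq \kcomm} \fv_\text{com}(\Vcal)$, which is monotone submodular maximization subject to a cardinality constraint by Theorem~\ref{thm:fe2fv}. Let $\mathrm{OPT}_v$ denote its optimal value; since removing a constraint only enlarges the feasible set, $\mathrm{OPT}_v \geq \OPTv$, where $\OPTv$ is the optimum of Problem~\ref{prob:codesign}.

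The crux is a counting argument bounding how many vertices \textsc{\small Vertex-Greedy} can select before the \emph{computation} budget becomes binding. Each vertex contributes at most $\Delta$ incident edges, so after $t$ selections $|\edg(\Vgrd)| \leq t\Delta$. Hence, whenever $t < \min\{\kcomm, \lfloor \kcomp/\Delta \rfloor\}$, every remaining vertex $v$ satisfies both $|\Vgrd \cup \{v\}| \leq \kcomm$ and $|\edg(\Vgrd \cup \{v\})| \leq (t+1)\Delta \leq \lfloor \kcomp/\Delta \rfloor \cdot \Delta \leq \kcomp$, so $\Vcal_\text{feas}$ coincides with the set of all unselected vertices. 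Consequently the first $\ell \triangleq \min\{\kcomm, \lfloor \kcomp/\Delta \rfloor\}$ choices of \textsc{\small Vertex-Greedy} form a valid run of the unrestricted classic greedy algorithm on the relaxed cardinality-$\kcomm$ problem. (The degenerate case $|\Vall| < \ell$ is trivial: greedy then selects all of $\Vall$, all edges fit within $\kcomp$, and the solution is globally optimal.)

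Finally I would invoke the standard partial-greedy guarantee used in Lemma~\ref{lem:egrd_apx}, namely \cite[Theorem~1.5]{krauseSurvey}: after $\ell$ iterations of greedy on a monotone submodular objective with cardinality budget $\kcomm$, the value is at least $(1 - \exp(-\ell/\kcomm))\,\mathrm{OPT}_v$. Writing $\Vgrd'$ for the first $\ell$ selected vertices and substituting $\ell = \min\{\kcomm,\lfloor \kcomp/\Delta\rfloor\}$ gives $\fv_\text{com}(\Vgrd') \geq \big(1 - \exp(-\min\{1, \lfloor \kcomp/\Delta\rfloor/\kcomm\})\big)\,\mathrm{OPT}_v = \alpha_v(\kcomm,\kcomp,\Delta)\,\mathrm{OPT}_v$. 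Chaining $\fv_\text{com}(\Vgrd) \geq \fv_\text{com}(\Vgrd')$ (monotonicity, since $\Vgrd' \subseteq \Vgrd$) with $\mathrm{OPT}_v \geq \OPTv$ then yields the claimed bound. The step I expect to require the most care is the counting argument: making precise that the feasibility filter $\Vcal_\text{feas}$ is vacuous throughout the first $\lfloor \kcomp/\Delta\rfloor$ rounds (so that \textsc{\small Vertex-Greedy} genuinely agrees with the idealized cardinality-constrained greedy), and tracking the floor in $\lfloor \kcomp/\Delta\rfloor$ so that the accumulated edge count provably never exceeds $\kcomp$.
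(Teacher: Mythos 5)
Your proposal is correct and follows essentially the same route as the paper's proof: relax the computation budget to reduce to cardinality-constrained monotone submodular maximization of $\fv_\text{com}$, argue that the first $\min\{\kcomm,\lfloor \kcomp/\Delta\rfloor\}$ iterations of \textsc{\small Vertex-Greedy} coincide with the unrestricted classic greedy algorithm, and invoke the partial-greedy bound of \citet[Theorem~1.5]{krauseSurvey} together with monotonicity and $\mathrm{OPT}_v \geq \mathrm{OPT}$. Your explicit counting argument showing that $\Vcal_\text{feas}$ imposes no restriction during those first rounds is a slightly more careful rendering of the step the paper states only in words, so nothing is missing.
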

\begin{proof}[\normalfont \bfseries Proof of Lemma~\ref{lem:vgrd_apx}]
Similar to \textsc{\small Edge-Greedy},
we terminate the greedy loop whenever the next selected vertex violates 
either the communication or the computation budget. 
Thus, the returned solution is guaranteed to be feasible. 
Let $\OPTe$
denote the optimal value of Problem~\ref{prob:codesign}.
Consider the relaxed version of Problem~\ref{prob:codesign} under \TotalUni{} where we remove the computation budget.
In Section~\ref{sec:infinite_k}, we have shown that the relaxed problem is equivalent to the following,
\begin{equation}
	\underset{\Vcal \subseteq \Vall}{\text{maximize }} \fv_\text{com}(\Vcal) \text{ s.t. } |\Vcal| \leq \kcomm.
	\label{prob:rss_pv}
\end{equation}
where $\fv_\text{com}: 2^{\Vall} \to \Rset_{\geq 0}$ is defined according to \eqref{eq:fvdef}.
Let OPT$_v$ denote the optimal value of 
\eqref{prob:rss_pv}.
Clearly, OPT$_v$ $\geq \OPTe$.
Let $\Egrd, \Vgrd$ be the selected edges and vertices after running \textsc{\small Vertex-Greedy} on Problem~\ref{prob:codesign}.
Note that $\Vgrd$ contains \emph{at least}
$\min(\kcomm, \lfloor \kcomp / \Delta \rfloor)$ vertices.
This is because whenever we select less than this number of vertices, there is guaranteed to be enough computation and communication budgets to select the next vertex and include all its incident edges.
Now, let $\Vgrd' \subseteq \Vgrd$ contain the \emph{first} $\min(\kcomm, \lfloor \kcomp / \Delta \rfloor)$ vertices selected by \textsc{\small Vertex-Greedy}.
Note that $\Vgrd'$ is also the solution of running the classic greedy algorithm \citep{nemhauser1978analysis} on \eqref{prob:rss_pv} for $\min(\kcomm, \lfloor \kcomp / \Delta \rfloor)$ iterations.
It thus follows that,
\begin{align}
	\fe(\Egrd) & = \fv_\text{com}(\Vgrd)  && \text{(def. of $\fv_\text{com}$)} \\
	& \geq \fv_\text{com}(\Vgrd') && \text{(monotonicity of $\fv_\text{com}$)} \\
	& \geq \Big ( 1-\exp \big (- \frac{\min\{\kcomm, \lfloor \kcomp / \Delta \rfloor\}}{\kcomm} \big ) \Big) \cdot \text{\text{OPT}$_v$} && \text{\cite[Theorem~$1.5$]{krauseSurvey}}\\
	& = \alpha_v(\kcomm, \kcomp, \Delta) \cdot \text{\text{OPT}$_v$} && \text{(def. of $\alpha_v$)} \\
	& \geq \alpha_v(\kcomm, \kcomp, \Delta) \cdot \OPTe. && \text{(\text{OPT}$_v \geq \OPTe$)} 
\end{align}

\qed
\end{proof}

\noindent
Having established Lemma~\ref{lem:egrd_apx}
and Lemma~\ref{lem:vgrd_apx}, 
we now give a straightforward proof of the main theorem.
	
\begin{proof}[\normalfont \bfseries Proof of Theorem~\ref{thm:submodular_apx}]
By Lemma~\ref{lem:egrd_apx} and Lemma~\ref{lem:vgrd_apx},
\begin{align}
	\alpha(\kcomm, \kcomp, \Delta) &= \max \, \{\alpha_e(\kcomm, \kcomp), \alpha_v(\kcomm, \kcomp, \Delta) \} \\
	& = 1 - \exp \big (-\min \big \{ 1, \max \, ( \kcomm / \kcomp, \lfloor \kcomp / \Delta \rfloor / \kcomm) \big \} \big) \\
	& = 1 - \exp \big (-\min\{1, \gamma\} \big ).
\end{align}

\qed
\end{proof}

\end{document}